\providecommand{\zz}{\mathbf{z}}
\providecommand{\ww}{\beta}
\providecommand{\YY}{\mathbf{Y}}
\providecommand{\XX}{\mathbf{X}}
\providecommand{\ph}{{\phi}}
\providecommand{\ddelta}{\delta}
\renewcommand{\alph}{\alpha}
\newcommand*{\inlineequation}[2][]{%
  \begingroup
    % Put \refstepcounter at the beginning, because
    % package `hyperref' sets the anchor here.
    \refstepcounter{equation}%
    \ifx\\#1\\%
    \else
      \label{#1}%
    \fi
    % prevent line breaks inside equation
    \relpenalty=10000 %
    \binoppenalty=10000 %
    \ensuremath{%
      % \displaystyle % larger fractions, ...
      #2%
    }%
    ~\@eqnnum
  \endgroup
}
\newcommand{\E}{\mathbb{E}}
\newcommand{\cZ}{\mathcal{Z}}
\newcommand{\cL}{\mathcal{L}}
\newcommand{\cS}{\mathcal{S}}
\newcommand{\cO}{\mathcal O}
\newcommand{\R}{\mathbb R}
\newcommand{\cC}{\mathcal{C}}
\newcommand{\sparse}{{\rm sparse}}
\newcommand{\Gain}{{\rm Gain}}
\DeclareMathOperator{\argmin}{argmin}
\DeclareMathOperator{\diag}{diag}
\DeclareMathOperator{\Ima}{Im}
\DeclareMathOperator{\argsinh}{arcsinh}
\DeclareMathOperator{\Span}{Span}
\def\<#1,#2>{\langle #1,#2\rangle}
\definecolor{myblue}{RGB}{55,126,184}
\renewcommand{\leq}{\leqslant}
\renewcommand{\geq}{\geqslant}
\renewcommand{\le}{\leqslant}
\renewcommand{\ge}{\geqslant}
\def\<{\langle}
\def\>{\rangle}
\def\|{\Vert}
\definecolor{NavyBlue}{rgb}{0.1,0.1,0.6}
\def\eps{\varepsilon}
\newcommand{\esp}[1]{\mathbb{E}\left[#1\right]}
\newcommand{\NRM}[1]{{{\left\| #1\right\|}}} % ||1||% Operators
\newcommand{\set}[1]{{{\left\{ #1\right\}}}} % ||1||% Operators
\newcommand{\proba}[1]{\mathbb{P}\left(#1\right)}
\newcommand{\half}{\frac{1}{2}}
\renewcommand{\P}{\mathbb{P}}
\newcommand{\cB}{\mathcal{B}}
\newcommand{\cN}{\mathcal{N}}
\newcommand{\balpha}{{\bm{\alpha}}}
\newcommand{\dd}{{\rm d}}
\newcommand{\cF}{\mathcal{F}}
\newcommand{\N}{\mathbb{N}}
\newcommand{\one}{\mathbf{1}}
\newcommand{\appropto}{\mathrel{\vcenter{
  \offinterlineskip\halign{\hfil$##$\cr
    \propto\cr\noalign{\kern2pt}\sim\cr\noalign{\kern-2pt}}}}}
\newtheorem{theorem}{Theorem}
\newtheorem{observation}{Observation}
\newtheorem{example}{Example}
\newtheorem{assumption}{Assumption}
\newtheorem{lemma}{Lemma}
\newtheorem{proposition}{Proposition}
\newtheorem{corollary}{Corollary}
\crefname{assumption}{Assumption}{assumptions}
\crefname{corollary}{Corollary}{Corollary}
\crefname{remark}{Remark}{Remark}
\crefname{proposition}{Proposition}{Proposition}
\crefname{lemma}{Lemma}{Lemma}
\crefname{definition}{Definition}{Definition}
\crefname{example}{Example}{Example}
\crefname{claim}{Claim}{Claim}
\crefname{theorem}{Theorem}{Theorem}
\crefname{figure}{Fig.}{Fig.}
\crefname{equation}{Eq.}{Eq.}
\newcommand{\myparagraph}{\textbf}
\title{(S)GD over Diagonal Linear Networks:\\  Implicit Bias, Large Stepsizes and Edge of Stability}
\author{%
    Mathieu Even\thanks{Denotes equal contribution} \\
    Inria - ENS Paris\\
    \And
    Scott Pesme$^*$\\
    EPFL
    \And
    Suriya Gunasekar\\
    Microsoft Research\\
    \And
    Nicolas Flammarion\\
    EPFL
  % David S.~Hippocampus\thanks{Use footnote for providing further information
  %   about author (webpage, alternative address)---\emph{not} for acknowledging
  %   funding agencies.} \\
  % Department of Computer Science\\
  % Cranberry-Lemon University\\
  % Pittsburgh, PA 15213 \\
  % \texttt{hippo@cs.cranberry-lemon.edu} \\
  % examples of more authors
  % \And
  % Coauthor \\
  % Affiliation \\
  % Address \\
  % \texttt{email} \\
  % \AND
  % Coauthor \\
  % Affiliation \\
  % Address \\
  % \texttt{email} \\
  % \And
  % Coauthor \\
  % Affiliation \\
  % Address \\
  % \texttt{email} \\
  % \And
  % Coauthor \\
  % Affiliation \\
  % Address \\
  % \texttt{email} \\
}
\begin{document}

\maketitle

\begin{abstract}
% In this paper, we investigate the impact of stochasticity and large stepsizes on the implicit regularisation of gradient descent (GD) and stochastic gradient descent (SGD) over $2$-layer diagonal linear networks. We prove the convergence of GD and SGD with macroscopic stepsizes in an overparametrised regression setting and characterise their solutions through an implicit regularisation problem. Our crisp characterisation leads to qualitative insights on the impact of stochasticity and stepsizes on the recovered solution. Specifically, we show that large stepsizes consistently benefit SGD for sparse regression problems, while they can hinder the recovery of sparse solutions for GD. These effects are magnified for stepsizes in a tight window just below the divergence threshold, in the ``edge of stability'' regime. Our findings are supported by experimental results.
In this paper, we investigate the impact of stochasticity and large stepsizes on the implicit regularisation of gradient descent (GD) and stochastic gradient descent (SGD) over $2$-layer diagonal linear networks. We prove the convergence of GD and SGD with macroscopic stepsizes in an overparametrised regression setting and provide a  characterisation of their solution through an implicit regularisation problem. Our characterisation provides insights on how the choice of minibatch sizes and stepsizes lead to qualitatively distinct behaviors in the solutions. Specifically, we show that for sparse regression learned with $2$-layer diagonal linear networks, large stepsizes consistently benefit SGD, whereas they can hinder the recovery of sparse solutions for GD. These effects are amplified for stepsizes in a tight window just below the divergence threshold, known as the "edge of stability" regime.

%Our crisp characterisation leads to insights on how the algorithmic choices of minibatch sizes and stepsizes lead to qualitatively different behaviors in the resulting solutions. Specifically, we show that for sparse regression problems, large stepsizes consistently benefit SGD, while they can hinder the recovery of sparse solutions for GD. These effects are magnified for stepsizes in a tight window just below the divergence threshold, in the ``edge of stability'' regime. %Our findings are supported by experimental results.

\end{abstract}

% \mathieu{should we change the title ?} \mathieu{paper=9 page long}
% \Snote{we can change to implicit bias}

\section{Introduction}

The stochastic gradient descent algorithm (SGD)
% (with stochasticity form minibatching of data)
\cite{robbins1951stochastic} is the foundational algorithm for almost all neural network training. Though a remarkably simple algorithm, it has led to many impressive empirical results and is a key driver of deep learning. However the performances of SGD are quite puzzling from a theoretical point of view as (1) its convergence is highly non-trivial and (2) there exist many global minimums for the training objective which generalise very poorly~\cite{zhang2016understanding}.
%, and this even with the use of explicit regularisation . 

% In the practice of deep learning, the path followed by the training algorithm and the resulting implicit regularisation are crucial sources of inductive bias for the learned model. Many studies have shown that large neural networks trained with SGD can learn good predictors even in overparameterized regimes \cite{zhang2021understanding}. 

To explain this second point, the concept of implicit regularisation has emerged:
% The idea behind implicit regularisation from optimisation is that
if overfitting is harmless in many real-world prediction tasks, it must be because the optimisation process is \textit{implicitly favoring} solutions that have good generalisation properties for the task.
The canonical example is overparametrised linear regression with more trainable parameters than number of samples: although there are infinitely many solutions that fit the samples, GD and SGD explore only a small subspace of all the possible parameters. As a result, it can be shown that they implicitly converge to the closest solution in terms of the $\ell_2$ distance, and this without explicit regularisation \cite{zhang2016understanding,gunasekar2018implicit_geometry}. %More broadly, research in the science of deep learning in past few years have deepened our understanding of training dynamics of neural networks  (see limited examples in  \cite{neyshabur2014search,belkin2019reconciling,allen2020backward,chizat2019lazy,gunasekar2018implicit_geometry,jacot2018ntk,frankle2018lottery}). 

% \begin{figure}[b]
% \centering
% \vspace*{-10.5pt}
% \begin{minipage}[c]{.8\linewidth}
% \hspace*{-25pt}
% \vspace*{-10.5pt}
% \includegraphics[trim={0cm 0 0 0cm}, clip, width=\linewidth]{}
%   \end{minipage}
%   % \hspace*{-15pt}
% %    \begin{minipage}[c]{.5\linewidth}
% %     % \vspace*{-7.5pt}
% %     % \hspace*{-5pt}
% %     \vspace*{5pt}
% % \includegraphics[trim={0cm 0 0 0cm}, clip, width=\linewidth]{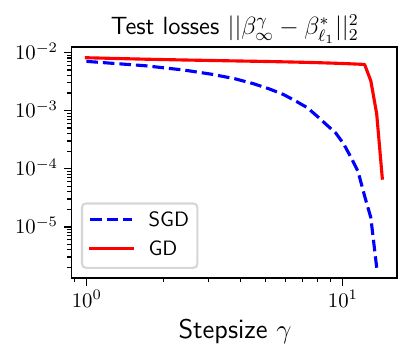}
% %    \end{minipage}
%     \hspace*{-25pt}
%   \caption{Sparse regression using diagonal  networks. \textit{Left:} centered data, large stepsizes benefit SGD but not for GD.  \label{fig:main_theorem}}
%      % \label{fig:main_theorem}
% \end{figure}

Currently, most theoretical works on implicit regularisation have primarily focused on continuous time approximations of (S)GD where the impact of crucial hyperparameters such as the stepsize and the minibatch size are ignored. One such common simplification is to analyse gradient flow, which is a continuous time limit of GD and minibatch SGD with an infinitesimal stepsize. By definition, this analysis does not capture the effect of stepsize or stochasticity. Another approach is to approximate SGD by a stochastic gradient flow \cite{wojtowytsch2021stochastic,pesme2021implicit}, which tries to capture the noise and the stepsize using an appropriate stochastic differential equation. However, there are no theoretical guarantees that these results can be transferred to minibatch SGD as used in practice.
This is a limitation in our understanding since the performances of most deep learning models are often sensitive to the choice of stepsize and minibatch size. 
The importance of stepsize and SGD minibatch size is common knowledge in practice and has also been systematically established in controlled experiments \cite{keskar2016large,masters2018revisiting,geiping2021stochastic}. 

In this work, we aim to expand our understanding of the impact of stochasticity and stepsizes by analysing the (S)GD trajectory in $2$-layer diagonal networks (DLNs). In \Cref{fig:main_theorem}, we show that even in our simple network, there are significant differences between the nature of the solutions recovered by SGD and GD at macroscopic stepsizes. We discuss this behavior further in the later sections.

% Parameters of this iterative algorithm are \emph{(i)} the stepsize (or learning rate), which is typically chosen as large as possible while ensuring convergence, and \emph{(ii)} the batch size, which controls the stochasticity of the algorithm and is typically determined by the number of workers in parallel.
% \begin{figure}[b]
% \centering
% \includegraphics[trim={0cm 0 0 0cm}, clip, width=\linewidth]{Figures/centered_test_vs_stepsize2.pdf}
%     % \hspace*{-25pt}
%    \caption{Sparse regression using diagonal  networks. \textit{Left:} centered data, large stepsizes benefit SGD but not GD.  \label{fig:main_theorem}}
%      % \label{fig:main_theorem}
% \end{figure}
% of which we explore the setup and implication in detail later, but for now, we see that for the same objective we get fundamentally different behaviours.
% using a simplified neural network as a case study by addressing the following question:
% \emph{what is the impact of stepsizes and stochasticity on the generalisation properties of (stochastic) gradient descent ?}
%by investigating the impact of stepsizes and stochasticity on the implicit regularisation of SGD, .

% In this work, we aim to address the gaps in our understanding of the impact of stochasticity and stepsizes by analysing the (S)GD trajectory in $2$-layer diagonal networks (DLNs). We can already see in \Cref{fig:main_theorem} the importance of these parameters in a noiseless sparse recovery problem  which we detail later: the solutions recovered by SGD and GD have very different generalisation performances.

The $2$-layer diagonal linear network which we consider is a simplified neural network that has received significant attention lately~\citep{pmlr-v125-woodworth20a,vavskevivcius2019implicit,haochen2020understanding,pillaudvivien2022labelnoise}.  Despite its simplicity, it surprisingly reveals training characteristics which are observed in much more complex architectures, such as the role of the initialisation~\citep{pmlr-v125-woodworth20a}, the role of noise~\citep{pesme2021implicit,pillaudvivien2022labelnoise}, or the emergence of saddle-to-saddle dynamics~\citep{berthier2022incremental,pesme2023saddle}. It therefore serves as an ideal proxy model for gaining a deeper understanding of complex phenomenons such as the roles of stepsizes and of stochasticity as highlighted in this paper. We also point out that implicit bias and convergence for more complex architectures such as 2-layer ReLU networks, matrix multiplication are not yet fully understood, even for the simple gradient flow. Therefore studying the subtler effects of large stepsizes and stochasticity in these settings is currently out of reach.

\begin{figure}[t]
\centering
\begin{multicols}{2}
\vspace*{-10.5pt}

  % \hspace*{-15pt}
%    \begin{minipage}[c]{.5\linewidth}
%     % \vspace*{-7.5pt}
%     % \hspace*{-5pt}
%     \vspace*{5pt}
% \includegraphics[trim={0cm 0 0 0cm}, clip, width=\linewidth]{Figures/uncentered_test_vs_stepsize.pdf}
%    \end{minipage}
    \hspace*{-25pt}
  \caption{Noiseless sparse regression with a diagonal linear network using SGD and GD, with parameters initialized at the scale of $\alpha=0.1$ (Section~\ref{sec:setup}). The test losses at convergence for various stepsizes are plotted for GD and SGD. Small stepsizes correspond to gradient flow (GF) performance. We see that increasing the stepsize improves the generalisation properties of SGD, but deteriorates that of GD.  The dashed vertical lines at stepsizes $\tilde{\gamma}_{\max}^{\text{SGD}}$ and $\tilde{\gamma}_{\max}^{\text{GD}}$ denote the largest stepsizes for which SGD and GD, respectively, converge. See  \Cref{sec:setup} for the precise experimental setting.    \label{fig:main_theorem}} % \textit{Left:} centered data, large stepsizes benefit SGD but not for GD.  \label{fig:main_theorem}}
     % \label{fig:main_theorem}
     \begin{minipage}[c]{.8\linewidth}
\hspace*{-25pt}
\vspace*{-10.5pt}
\includegraphics[trim={0cm 0 0 0cm}, clip, width=1.1\linewidth]{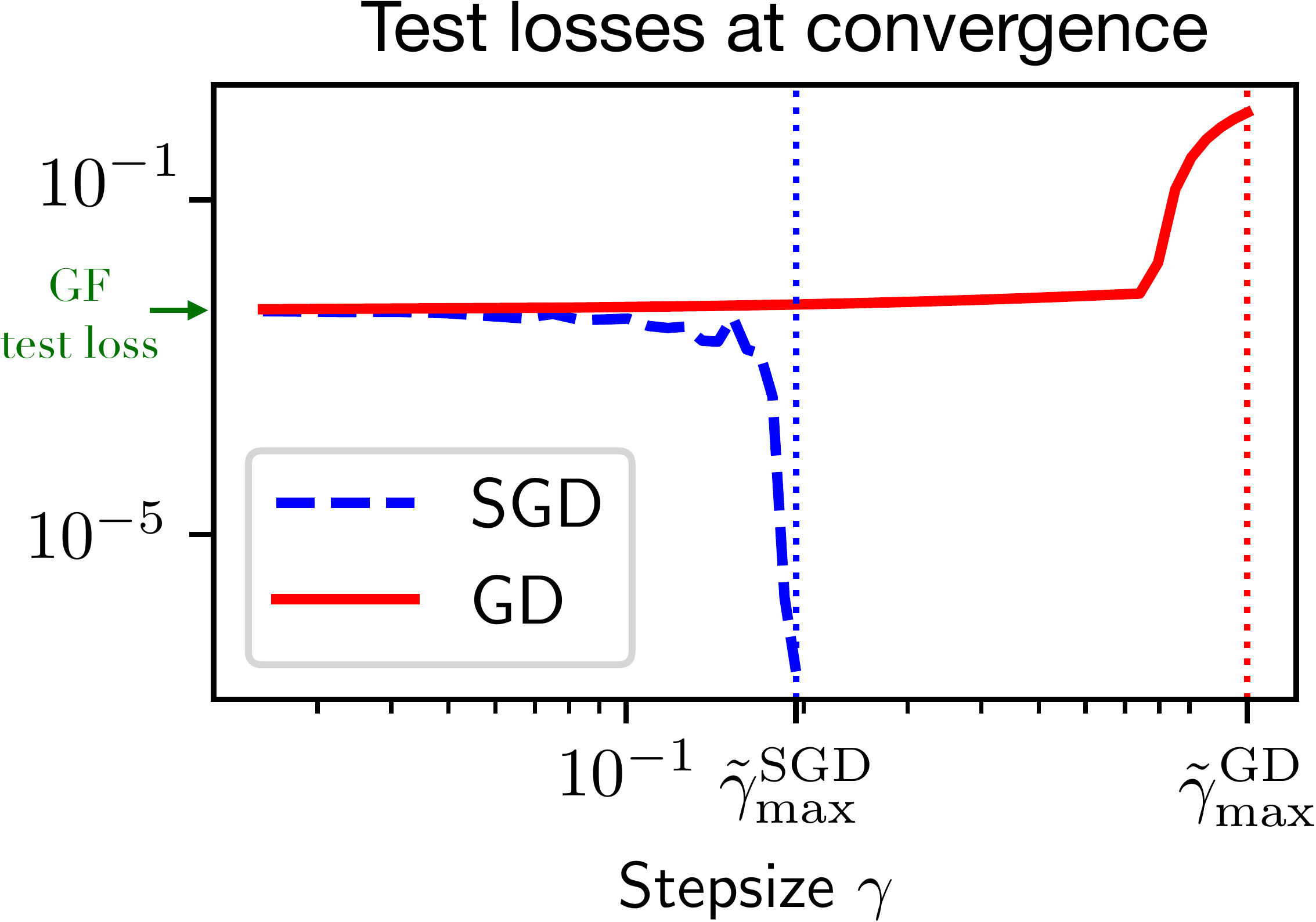}
  \end{minipage}
    \end{multicols}
\end{figure}

\subsection{Main results and paper organisation}

The overparametrised regression setting and diagonal linear networks are introduced in Section~\ref{sec:setup}.
We formulate our theoretical results (Theorems~\ref{thm:implicit_bias} and \ref{thm:conv_iterates}) in Section~\ref{sec:main_result}: we prove that for \textbf{macroscopic stepsizes}, gradient descent and stochastic gradient descent over $2$-layer diagonal linear networks converge to a zero-training loss solution $\beta^\star_\infty$. We further provide a refined characterization of $\beta^\star_\infty$ through a trajectory-dependent implicit regularisation problem, that captures the effects of hyperparameters of the algorithm, such as stepsizes and batchsizes, in useful and analysable ways. In~\Cref{sec:alphainfty} we then leverage this crisp characterisation to explain the influence of crucial parameters such as the stepsize and batch-size on the recovered solution. Importantly \textbf{our analysis shows a stark difference between the generalisation performances of GD and SGD for large stepsizes}, hence explaining the numerical results seen in \Cref{fig:main_theorem} for the sparse regression setting.
Finally, in \Cref{sec:EoS}, we use our results to shed new light on the \emph{Edge of Stability} (\emph{EoS}) phenomenon~\cite{cohen2021EoS}.

\subsection{Related works}

\myparagraph{Implicit bias.}
The concept of implicit bias from optimization algorithm in neural networks has been studied extensively in the past few years, starting with early works of \citet{telgarsky2013margins, neyshabur2014search, keskar2016large,  soudry2018implicit_separable}. The theoretical results on implicit regularisation have been extended to
%expanded to other types of architectures, such as 
multiplicative parametrisations~\citep{gunasekar2017implicit, gunasekar2018convolution}, linear networks~\citep{ji2018gradient}, and homogeneous networks~\citep{lyu2019gradient, ji2020directional,chizat2019lazy}. 
For regression loss on diagonal linear networks studied in this work, \citet{pmlr-v125-woodworth20a} demonstrate that the scale of the initialisation determines the type of solution obtained, with large initialisations yielding minimum $\ell_2$ norm solutions---the neural tangent kernel regime~\citep{jacot2018ntk} and small initialisation resulting in minimum $\ell_1$ norm solutions---the \emph{rich regime}~\citep{chizat2019lazy}.
The analysis relies on  the link between gradient descent and mirror descent  established  by \citet{pmlr-v117-ghai20a} and further explored by \citet{NEURIPS2020_024d2d69,NEURIPS2020_e9470886}.
These works focus on full batch gradient, and often in the inifitesimal stepsize limit (gradient flow), leading to general insights and results that do not take into account the effects of stochasticity and large stepsizes.

\myparagraph{The effect of stochasticity in SGD on generalisation.}
The relationship between stochasticity in SGD and generalisation has been studied in various works~\citep{mandt2016variational,hoffer2017train,chaudhari2018stochastic,kleinberg2018when,wu2019howsgd}. % 
Empirically, models generated by SGD exhibit better generalisation performance than those generated by GD~\citep{keskar2017on,jastrzebski2017three,he2019controlbatchsize}. Explanations related to the flatness of the minima picked by SGD have been proposed~\citep{Hochreiter1997flatminima}. 
Label noise has been shown to influence the implicit bias of SGD~\citep{haochen2020understanding,blanc2020implicit,damian2021label,pillaudvivien2022labelnoise}  by implicitly regularising the sharp minimisers. 
Recently, studying a \emph{stochastic gradient flow} that models the noise of SGD in continuous time with Brownian diffusion, \citet{pesme2021implicit} characterised for diagonal linear networks the limit of their stochastic process as the solution of an implicit regularisation problem. 
However similar explicit characterisation of the implicit bias remains unclear for SGD with large stepsizes.

\myparagraph{The effect of stepsizes in GD and SGD.}
Recent efforts to understand how the choice of stepsizes affects the learning process and the properties of the recovered solution suggest that larger stepsizes lead to the minimisation of some notion of flatness of the loss function \cite{smith2018bayesian,keskar2017on,nacson2022implicitbiasstepsize,jastrzkebski2018width,wu2019howsgd,mulayoff2021the}, backed by empirical evidences or  stability analyses. Larger stepsizes have also been proven to be beneficial for specific architectures or problems: two-layer network \cite{li2019towards}, regression \cite{wu2021direction}, kernel regression \cite{beugnot2022largestepsizesKernels} or matrix factorisation \cite{wang2021largeLR}.
For large stepsizes, it has been observed that GD enters an \emph{Edge of Stability (EoS)} regime \cite{jastrzębski2018sharpness,cohen2021EoS}, in which the iterates and the train loss oscillate before converging to a zero-training error solution; this phenomenon has then been studied on simple toy models \cite{ahn2022eos,zhu2022eos,chen2022eos, damian2022selfstabilization} for GD. %Note that most of these models are simple 1-dimensional non-convex optimisation problems.
Recently, \cite{andriushenko2022sgd} presented empirical evidence that large stepsizes can lead to  loss stabilisation  and  towards simpler predictors.

\section{Setup and preliminaries}\label{sec:setup}

\myparagraph{Overparametrised linear regression.} 
We consider a linear regression over inputs $X=(x_1, \dots, x_n) \in (\R^d)^n$  and outputs $y=(y_1, \dots, y_n) \in \R^n$. 
%Without loss of generality we assume the $\{x_i\}_{i}^n$ are linearly independent. 
We consider \textit{overparametrised} problems where  input dimension $d$ is (much) larger than the number of samples $n$. In this case, there exists infinitely many linear predictors $\beta^\star \in \R^d$ which perfectly fit the training set, \textit{i.e.}, $y_i = \langle \beta^\star, x_i \rangle$ for all $1 \leq i \leq n$. We call such vectors \textit{interpolating predictors} or \textit{interpolators} and we denote by $\mathcal{S}$ the set of all interpolators 
$ \cS = \{ \beta^\star \in \R^d \ \mathrm{s.t.} \   \langle \beta^\star, x_i \rangle = y_i, \forall i \in [n] \}$. 
%Though it is not necessary nor sufficient, one can think of the input dimension $d$ as being (much) larger than the number of samples $n$. This is commonly named the \textit{overparametrised} setting. In such a setting, there (nearly always) exists infinitely many parameters $\beta^\star \in \R^d$ which perfectly fit the training set, \textit{i.e.}, $y_i = \langle \beta^\star, x_i \rangle$ for all $1 \leq i \leq n$. We call such vectors \textit{interpolating parameters} or \textit{interpolators } and we denote by $\mathcal{S}$ the set of all interpoalators $ \cS = \{ \beta^\star \in \R^d \ \mathrm{s.t.} \   \langle \beta^\star, x_i \rangle = y_i, \forall i \in [n] \}$. 
Note that $\cS$ is an affine space of dimension greater than $d-n$ and equal to $\beta^\star + \mathrm{span}(x_1, \dots, x_n)^\perp$ for any $\beta^\star\in\cS$. We consider the following quadratic loss: $\cL(\beta)= \frac{1}{2n}\sum_{i=1}^n(\langle \beta, x_i \rangle-y_i)^2$, for $\beta\in\R^d$.

\myparagraph{2-layer linear diagonal network.}
We parametrise regression vectors $\beta$ as functions $\beta_w$ of trainable parameters $w \in \R^p$.
Although the final prediction function $x \mapsto \langle \beta_w, x \rangle$ is linear in the input $x$, the choice of the  parametrisation drastically changes the solution recovered by the optimisation algorithm \cite{gunasekar2018convolution}.
In the case of the linear parametrisation $\beta_w = w$ many first-order methods (SGD, GD, with or without momentum) converge towards the same solution and  the choice of stepsize does not impact the recovered solution beyond convergence.
%\cite{gunasekar2018implicit_geometry} 
%All these methods have the same implicit bias. 
%
In an effort to better understand the effects of stochasticity and large stepsize, we consider the next simple parametrisation, that of a $2$-layer diagonal linear neural network given by:
\begin{align}
\label{eq:DLN}
\beta_w = u\odot v \text{ where } 
w = (u,v) \in\R^{2d}\,.
\end{align}
This parametrisation can be viewed as a simple neural network $x \mapsto \langle u, \sigma(\diag(v) x) \rangle$ where the output weights are represented by $u$, the inner weights is the diagonal matrix $\diag(v)$, and the activation $\sigma$ is the identity function.
In this spirit, we refer to the entries of $w = (u, v) \in \R^{2d}$ as the \textit{weights} and to
$\beta \coloneqq u \odot v \in \R^d$ as the \textit{prediction parameter}.
%
%Diagonal linear networks have gained significant attention due to their ability to reveal characteristics found in more complex architectures while still being mathematically tractable.
% They serve as an ideal proxy model for gaining a deeper understanding of essential properties of neural network training.  
%
%With the parametrisation~\eqref{eq:DLN}, the loss function $F$ over parameters $w =(u,v)\in \R^{2d}$ is given by:
Despite the simplicity of the parametrisation~\eqref{eq:DLN}, the loss function $F$ over parameters $w =(u,v)\in \R^{2d}$ is \textbf{non-convex} (and thus the corresponding optimization problem is challenging to analyse), and is given by:
\begin{align}
\label{eq:non-convex-F-loss}
F(w) \coloneqq \cL(u\odot v)= \frac{1}{2 n} \sum_{i=1}^n (y_i - \langle u \odot v, x_i \rangle)^2\,.
\end{align}

\myparagraph{Mini-batch SGD.} 
We minimise $F$ using mini-batch SGD: let $w_0=(u_0,v_0)$ and for $k\geq0$,
%with initialisation $w_0=(u_0,v_0)$ and updates given by: 
% \begin{equation*}\label{eq:SGD_recursion}
%     w_{k+1} = w_{k} - \gamma_k \nabla F_{\cB_k}(w_k)\,,
% \end{equation*}
\begin{equation}\label{eq:SGD_recursion}
    w_{k+1} = w_{k} - \gamma_k \nabla F_{\cB_k}(w_k)\,,\quad \text{where} \quad F_{\cB_k}(w) \coloneqq  \frac{1}{2b}\sum_{i\in\cB_k}(y_i - \langle u \odot v ,x_i\rangle)^2\,,
\end{equation}
where $\gamma_k$ are stepsizes, $\cB_k\subset[n]$  are mini-batches of $b \in [n]$ distinct samples sampled uniformly and independently, and $\nabla F_{\cB_k}(w_k)$ are minibatch gradients of partial loss over $\cB_k$, $F_{\cB_k}(w) \coloneqq \cL_{\cB_k}(u \odot v)$ defined above.
%
%
% \myparagraph{Mini-batch Stochastic Gradient Descent.} 
% We minimise $F$ using mini-batch SGD:
% %with initialisation $w_0=(u_0,v_0)$ and updates given by: 
% % \begin{equation*}\label{eq:SGD_recursion}
% %     w_{k+1} = w_{k} - \gamma_k \nabla F_{\cB_k}(w_k)\,,
% % \end{equation*}
% \begin{equation}\label{eq:SGD_recursion}
%     w_0=(u_0,v_0)\,,\quad w_{k+1} = w_{k} - \gamma_k \nabla F_{\cB_k}(w_k)\,,
% \end{equation}
% where $\gamma_k$ are stepsizes, $\cB_k\subset[n]$  are mini-batches of $b \in [n]$ distinct samples sampled uniformly and independently, and $\nabla F_{\cB_k}(w_k)$ are minibatch gradients of partial loss over $\cB_k$:
% \begin{equation}\label{eq:partial_loss}
%     F_{\cB_k}(w) \coloneqq \cL_{\cB_k}(u \odot v) \coloneqq  \frac{1}{2b}\sum_{i\in\cB_k}(y_i - \langle u \odot v ,x_i\rangle)^2\,.
% \end{equation}
%
%
%
% 
Classical SGD and full-batch GD are special cases with $b=1$ and $b=n$, respectively.
For $k \geq 0$, we consider the successive prediction parameters $\beta_k \coloneqq u_k \odot v_k$ built from the weights $w_k=(u_k, v_k)$.
%Simple computations show that for our diagonal linear network framework, the gradient recursion can be written as 
%\begin{equation}\label{eq:SGD_recursion}
%    \begin{aligned}
%    u_{k+1}=u_k-\gamma_kg_k\odot v_k\,, \ \ \
%    v_{k+1}=v_k-\gamma_kg_k\odot u_k
%    \end{aligned}
%\end{equation}
%where we denote $g_k=\nabla \cL_{\cB_k}(\beta_k)$  the minibatch stochastic gradient sampled at iteration $k$.
We analyse SGD initialised at $u_0 = \sqrt{2} \balpha \in \R_{>0}^d$ and $v_0 = \mathbf{0} \in \R^d$, resulting in $\beta_0 =\mathbf{0} \in \R^d $ independently of the chosen weight initialisation $\balpha$\footnote{In Appendix~\ref{sec:app:param}, we show that the (S)GD trajectory with this initialisation exactly matches that of another common parametrisation $\beta_w=w_+^2-w_-^2$ with initialisation $w_{+,0}=w_{-,0}=\balpha$. 
The second layer of our diagonal linear network is set to 0 in order to obtain results that are easier to interpret. However, our proof techniques can be applied directly to a general initialisation, at the cost of additional notations in our Theorems.}.
%\mathieu{we received some critics about the initialisations : should we generalise or just add a few words ? } \Snote{I would leave it like this or add the general case to the appendix} \mathieu{like this is fine then.} \nf{you should add the comments you made in the rebuttal saying that is is not restrictive}

\myparagraph{Experimental details.} We consider the noiseless sparse regression setting where $(x_i)_{i \in [n]} \sim \mathcal{N}(0, I_d)$ and $y_i = \langle \beta^\star_{\ell_1}, x_i \rangle$ for some $s$-sparse vector $\beta^\star_{\ell_1}$.  We perform (S)GD over the DLN with a uniform initialisation $\balpha = \alpha \mathbf{1} \in \R^d$ where $\alpha > 0$. \Cref{fig:main_theorem} and \Cref{fig:Gain} (left) correspond to the setup $(n, d, s, \alpha) = (20, 30, 3, 0.1)$, \Cref{fig:Gain} (right) to $(n, d, s, \alpha) = (50, 100, 4, 0.1)$  and \Cref{fig:EoS} to $(n, d, s, \alpha) = (50, 100, 2, 0.1)$.
\myparagraph{Notations.} 
%We denote by $y = \frac{1}{\sqrt{n}} (y_1, \dots, y_n) \in \R^n$ the normalised output vector and  by $X=\frac{1}{\sqrt{n}}(x_1,\ldots,x_n)^\top \in \R^{n \times d}$ the normalised input matrix. 
%whose $i^{th}$ row is the normalised input $\frac{1}{\sqrt{n}} x_i \in \R^d$. $[n]$ denotes the set of all integers from $1$ to $n$. 
Let $H \coloneqq \nabla^2\cL= \frac{1}{n} \sum_i x_i x_i^\top$ denote the Hessian of $\cL$, and for a batch $\cB \subset [n]$ 
%we define the batch loss as $\cL_\cB(\beta)=\frac{1}{2|\cB|}\sum_{i\in\cB}(\langle x_i,\beta\rangle -y_i)^2$ and its Hessian as
let $H_\cB \coloneqq \nabla^2 \cL_\cB =\frac{1}{|\cB|}\sum_{i\in\cB}x_ix_i^\top$ denote the Hessian of the partial loss over the batch $\cB$. 
Let $L$ denote the ``smoothness'' such that $\forall\beta$, $\Vert H_\cB \beta \Vert_2 \leq L \Vert \beta \Vert_2$, $\NRM{H_\cB \beta}_\infty\leq L\NRM{\beta}_\infty$ for all batches $\cB \subset [n]$ of size $b$.
%All expectations in the paper are with respect to the uniform sampling of the batches $(\cB_k)_k$. 
A real function  (e.g, $\log,\exp$) applied to a vector must be understood as element-wise application, and for vectors $u,v\in\R^d$, $u^2=(u_i^2)_{i\in[d]}$, $u\odot v=(u_iv_i)_{i\in[d]}$ and $u / v = (u_i / v_i)_{i\in[d]}$. 
We write $\mathbf{1}$, $\mathbf{0}$ for the constant vectors with coordinates $1$ and $0$ respectively.
The Bregman divergence \cite{BREGMAN1967200} of a differentiable convex function $h : \R^d \to \R$ is defined as $D_h(\beta_1,\beta_2)=h(\beta_1)-(h(\beta_2) + \langle \nabla h(\beta_2),\beta_1-\beta_2\rangle)$.

\section{Implicit bias of SGD and GD} \label{sec:main_result}

We start by recalling some known results on the implicit bias of gradient flow on diagonal linear networks before presenting our main theorems on characterising the  (stochastic) gradient descent solutions (\cref{thm:implicit_bias}) as well as proving the convergence of the iterates (\cref{thm:conv_iterates}).

\subsection{Warmup: gradient flow}\label{sec:warmup}
% For $\balpha \in\R_{>0}^d$, let $\psi_\balpha : \R^d \to \R$ be defined as:
% \begin{equation*}
%     \psi_\balpha(\beta) = \frac{1}{2} \sum_{i=1}^d \Big ( \beta_i \mathrm{arcsinh}( \frac{\beta_i}{2\balpha_i^2} ) \ - \sqrt{\beta_i^2 + 4 \balpha_i^4} + 2 \balpha_i^2 \Big )\,,
% \end{equation*}
% corresponding to the hyperbolic entropy function  indexed by the initialisation scale $\balpha$ of the neurons.

We first review prior findings on gradient flow on diagonal linear neural networks. \citet{pmlr-v125-woodworth20a} show that the limit   $\beta_{\balpha}^*$  of the \emph{gradient flow} $\dd w_t = - \nabla F(w_t) \dd t$ initialised at $(u_0, v_0) = (\sqrt{2} \balpha, \mathbf{0})$  is the solution of the  minimal interpolation problem:
\begin{equation}\label{eq:implicit_opt_hypentropy}
    \beta_\balpha^* = \underset{ \beta^\star \in \cS }{\argmin} \  \psi_\balpha(\beta^\star)\,,\quad \text{where}\quad     \psi_\balpha(\beta) = \frac{1}{2} \sum_{i=1}^d \Big ( \beta_i \mathrm{arcsinh}( \frac{\beta_i}{\alpha_i^2} ) \! -\! \sqrt{\beta_i^2 + \alpha_i^4} +  \alpha_i^2 \Big )\,.
\end{equation}
The convex potential $\psi_\balpha$ is the \textbf{hyperbolic entropy function} (or \textbf{hypentropy}) \citep{pmlr-v117-ghai20a}.
Depending on the structure of the vector $\balpha$, the generalisation properties of $\beta^\star_\balpha$ highly vary.
We point out the two main characteristics of $\balpha$ that affect the behaviour of~$\psi_\balpha$ and therefore also the solution $\beta^\star_\balpha$.

\textbf{1.} The \textbf{Scale} of $\balpha$. For an initialisation vector $\balpha$ we call the $\ell_1$-norm $\Vert \balpha\Vert_1$ the \textbf{scale} of the initialisation. It is an important quantity affecting the properties of the recovered solution $\beta^\star_\balpha$. To see this let us consider a uniform initialisation of the form $\balpha=\alpha \bm{1}$ for a scalar value $\alpha > 0$. In this case the potential $\psi_{\balpha}$ has the property of resembling the $\ell_1$-norm as the scale $\alpha$ vanishes:  $\psi_{\balpha} \sim \ln(1/\alpha)\|.\|_1$ as $\alpha\to0$.
Hence, a small initialisation results in a low $\ell_1$-norm solution which is known to induce sparse recovery guarantees \citep{candestao}.
This setting is often referred to as the ``rich'' regime \citep{pmlr-v125-woodworth20a}.
% , where feature learning occurs.
%
In contrast, using a large initialisation scale leads to solutions with low $\ell_2$-norm: $\psi_{\balpha} \sim  \|.\|_2^2 / (2 \alpha^2)$ as $\alpha\to\infty$, a setting known as the ``kernel'' or ``lazy'' regime. 
% in which the weights of the neurons make only small adjustments to fit the data, resulting in dynamics similar to kernel linear regression.
%
Overall, to retrieve the minimum $\ell_1$-norm solution,
% $\beta^\star_{\ell_1} \coloneqq \argmin_{\beta^\star \in S} \Vert \beta^\star \Vert_1$
one should use a uniform initialisation with small scale $\alpha$, see \Cref{fig:weighted_l1_norm} in Appendix~\ref{sec:app:example} for an illustration and \cite[Theorem 2]{pmlr-v125-woodworth20a} for a precise characterisation.
% We however point out that as $w_0=(0,0)$ is a saddle point of $F$, taking $\balpha\to 0$ results in a longer training time. 

% for a homogeneous vector $\balpha = \alpha \bm{1}$, 
%     % we have $\psi_{\balpha}(\beta)\overset{\alpha\to0}{\sim}\|\beta\|_1$. Thus,  
%     %the smaller the scale $\alpha$, the closer $\psi_\balpha$ is to the $\ell_1$-norm and therefore the better the minimum $\ell_1$-norm recovery guarantees of the recovered interpolator. 
%    and as the scale $\alpha$ decreases, the function $\psi_\balpha$ becomes increasingly similar to the $\ell_1$-norm and the sparse recovery guarantees of  $\beta^\star_\balpha$ from \cref{eq:implicit_opt} improve.
%     See \Cref{fig:weighted_l1_norm} in Appendix~\ref{sec:app:example} and Theorem 2 of \citet{pmlr-v125-woodworth20a} for a precise characterisation.%

\textbf{2.} The \textbf{Shape} of $\balpha$.
In addition to the scale of the initialisation $\balpha$, a lesser studied aspect is its ``shape'', which is a term we use to refer to the relative distribution of $\{\alpha_i\}_i$ along the $d$ coordinates \cite{azulay2021implicit}.  It is a crucial property because having $\balpha \to \mathbf{0}$  \textbf{does not} necessarily lead to the potential $\psi_\balpha$ being close to the $\ell_1$-norm. Indeed, we have that $\psi_\balpha(\beta) \overset{\balpha \to \bm{0}}{\sim} \sum_{i=1}^d \ln(\frac{1}{ \alpha_i}) | \beta_i |$ (see Appendix~\ref{sec:app:example}), therefore if the vector $\ln(1 / \balpha)$ has entries changing at different rates, then $\psi_\balpha(\beta)$ is a \textbf{weighted} $\ell_1$-norm. In words, if the entries of $\balpha$ \textit{do not go to zero ``uniformly"}, then the resulting implicit bias minimizes a weighed $\ell_1$-norm. This phenomenon can lead to solutions with vastly different sparsity structure than the minimum $\ell_1$-norm interpolator. See \Cref{fig:weighted_l1_norm} and \Cref{app:example} in Appendix~\ref{sec:app:example}.

\subsection{Implicit bias of (stochastic) gradient descent}

In \cref{thm:implicit_bias}, we prove that for an initialisation $\sqrt{2} \balpha\in\R^d$ and for \textbf{arbitrary} stepsize sequences $(\gamma_k)_{k\geq0}$ \textbf{if the iterates converge to an interpolator}, then this interpolator is the solution of a constrained minimisation problem which involves the hyperbolic entropy $\psi_{\balpha_\infty}$ defined in \eqref{eq:implicit_opt_hypentropy}, where $\balpha_\infty\in\R^d$  is an {effective} initialisation which depends on the trajectory and on the stepsize sequence. Later, \textbf{we prove the convergence of iterates for macroscopic step sizes} in \cref{thm:conv_iterates}.
% \Snote{dans le theorem j'ai enlevé la positivité de l'init}

%Convergence of the iterates is shown in \cref{thm:conv_iterates}.

\begin{theorem}[Implicit bias of (S)GD] \label{thm:implicit_bias}
    Let $(u_k,v_k)_{k\geq0}$ follow the mini-batch SGD recursion \eqref{eq:SGD_recursion} initialised at $(u_0, v_0) = (\sqrt{2} \balpha, \mathbf{0})$ and with stepsizes $(\gamma_k)_{k\geq0}$. Let $(\beta_k)_{k\geq0}=(u_k\odot v_k)_{k\geq0}$ and assume that they converge to some interpolator $\beta_\infty^\star\in\cS$.
    Then, $\beta^\star_\infty$ satisfies:
    \vspace*{0.5em}
    \begin{equation}\label{eq:implicit_bias}    %\Large
        \beta^\star_{\infty} =  \underset{ \beta^\star \in \cS }{\argmin} \  D_{\psi_{\balpha_\infty}}(\beta^\star,\tilde\beta_0) \,,
    \end{equation}
    where $D_{\psi_{\balpha_\infty}}$ is the Bregman divergence with hyperentropy potential $\psi_{\balpha_\infty}$ of the \textbf{effective initialisation} $\balpha_\infty$, and $\tilde{\beta}_0$ is a small \textbf{perturbation term}. 
    The \textbf{effective initialisation} ${\balpha_\infty}$ is given by, 
    \begin{equation}\label{eq:def_alpha_inf}
        \balpha_\infty^2=\balpha^2\odot\exp\left(-\sum_{k=0}^\infty q\big(\gamma_k \nabla \cL_{\cB_k}(\beta_k)\big)\right)\,,%<\balpha^2, 
    \end{equation}
    where $q(x)=-\frac{1}{2} \ln((1-x^2)^2)$ satisfies $q(x)\geq0$ for $|x|\leq \sqrt{2}$, with the convention $q(1) = + \infty$. 
    
    The \textbf{perturbation term}  
    $\Tilde{\beta}_0\in\R^d$ is explicitly  %(see \Cref{eq:tildebeta0})
    given by $\tilde{\beta}_0=\half\big(\balpha_+^2-\balpha_-^2\big)$, where $q_\pm(x)= \mp 2x - \ln((1\mp x)^2)$, and  $\balpha_{\pm}^2=\balpha^2\odot\exp\left(-\sum_{k=0}^\infty q_\pm(\gamma_k \nabla \cL_{\cB_k}(\beta_k))\right)$.
\end{theorem}

\myparagraph{Trajectory-dependent characterisation.} The characterisation of $\beta^\star_\infty$ in \Cref{thm:implicit_bias} holds for any stepsize schedule such that the iterates converge and goes beyond the continuous-time frameworks previously studied \cite{pmlr-v125-woodworth20a,pesme2021implicit}. The result even holds for adaptive stepsize schedules which keep the stepsize scalar such as AdaDelta \citep{zeiler2012adadelta}.
An important aspect of our result is that $\balpha_\infty$ and $\tilde{\beta}_0$ depend on the iterates' trajectory.
% , whose limit point we are characterizing.
Nevertheless, we argue that our formulation provides  useful ingredients for understanding the implicit regularisation effects of (S)GD for this problem compared to trivial characterisations (such as \emph{e.g.}, $\min_\beta\NRM{\beta-\beta^\star_\infty}$).
Importantly, \textbf{the key parameters $\balpha_\infty,\tilde\beta_0$ depend on crucial parameters such as the stepsize and noise in a useful and analysable manner}: understanding how they affect $\balpha_\infty$ and $\tilde \beta_0$ coincides with understanding how they affect the recovered solution $\beta^\star_\infty$ and its generalisation properties.
This is precisely the object of \Cref{sec:EoS,sec:alphainfty} where we discuss the qualitative and quantitative insights from \Cref{thm:implicit_bias} in greater detail.

% by analyzing the effective initialisation $\balpha_\infty$. 

\myparagraph{The perturbation $\Tilde{\beta}_0$ can be ignored.}   
We show in Proposition~\ref{app:tilde_beta0}, under reasonable assumptions on the stepsizes, that $|\tilde\beta_0|\leq \balpha^2$ and $\balpha_\infty\leq \balpha$ (component-wise). The magnitude of $\Tilde{\beta}_0$ is therefore negligible in front of the magnitudes of $\beta^\star \in S$ and one can roughly ignore the term  $\Tilde{\beta}_0$. Hence, the implicit regularisation \cref{eq:implicit_bias} can be thought of as $\beta^\star_\infty \approx \argmin_{\beta^\star \in S} D_{\psi_{\balpha_\infty}}(\beta^\star,0)=\psi_{\balpha_\infty}(\beta^\star)$, and thus \emph{the solution $\beta^\star_\infty$ minimises the same potential function that the solution of gradient flow (see \Cref{eq:implicit_opt_hypentropy}), but with an {effective initialisation} $\balpha_\infty$.}
Also note that for $\gamma_k\equiv\gamma \to 0$ we have  $\balpha_\infty \to \balpha$ and $\tilde\beta_0\to \mathbf{0}$ (\cref{prop:conv_beta_alpha}), recovering the previously known result for gradient flow \eqref{eq:implicit_opt_hypentropy}.

\myparagraph{Deviation from gradient flow.} The difference with gradient flow is directly associated with the quantity $\sum_k q(\gamma_k \nabla \cL_{\cB_k}(\beta_k))$.
Also, as the (stochastic) gradients converge to 0 and $q(x) \overset{x \to 0}{\sim} x^2$, one should think of this sum as roughly being $\sum_k \nabla \cL_{\cB_k}(\beta_k)^2$:  the larger this sum, the more the recovered solution differs from that of  gradient flow.
The  full picture of how large stepsizes and stochasticity impact the generalisation properties of $\beta^\star_\infty$ and the recovery of minimum $\ell_1$-norm solution is nuanced  as clearly seen in \cref{fig:main_theorem}.

\subsection{Convergence of the iterates}
\Cref{thm:implicit_bias} provides the implicit minimisation problem but says nothing about the convergence of the iterates. Here we show under very reasonable assumptions on the stepsizes that the iterates indeed converge  towards a global optimum. Note that since the loss $F$ is non-convex, such a convergence result is non-trivial and requires an involved analysis. 
%\mathieu{Idk how to put more spotlight on this section}
% \Snote{why speak of bounded gradients and iterates?}

\begin{theorem}[Convergence of the iterates]\label{thm:conv_iterates}
    Let $(u_k,v_k)_{k\geq0}$ follow the mini-batch SGD recursion~\eqref{eq:SGD_recursion} initialised at $u_0=\sqrt{2} \balpha\in\R_{>0}^d$ and $v_0=\mathbf{0}$, and let $(\beta_k)_{k\geq0}=(u_k\odot v_k)_{k\geq0}$. Recall the ``smoothness'' parameter $L$ on the minibatch loss defined in the notations. 
There exist $B>0$ verifying $B=\tilde\cO(\min_{\beta^\star\in\cS}\NRM{\beta^\star}_\infty)$ and a numerical constant $c>0$ such that for stepsizes satisfying $\gamma_k\leq \frac{c}{LB}$, the iterates
% satisfy $\NRM{\gamma_k\nabla\cL_{\cB_k}(\beta_k)}_\infty\leq 1$ and $\NRM{\beta_k}_\infty\leq B$ for all $k$, and
$(\beta_k)_{k\geq0}$ converge almost surely to the interpolator $\beta_\infty^\star$ solution of \Cref{eq:implicit_bias}.
\end{theorem}
In fact, we can be more precise by showing an exponential rate of convergence of the losses as well as characterise the rate of convergence of the iterates as follows.

\begin{proposition}[Quantitative convergence rates]\label{prop:conv_quantitative}
For a uniform initialisation $\balpha = \alpha \mathbf{1}$ and under the assumptions of \Cref{thm:conv_iterates}, we have:
    \begin{equation*}
        \esp{\cL(\beta_k)}\leq \left(1-\frac{1}{2}\gamma\alpha^2\lambda_b\right)^k\cL(\beta_0)\quad\text{and}\quad         \esp{\NRM{\beta_k-\beta^\star_{\alpha_k}}^2}\leq C\left(1-\frac{1}{2}\gamma\alpha^2\lambda_b\right)^k\,,
    \end{equation*}
    where $\lambda_b>0$ is the largest value such that $\lambda_b H\preceq \E_\cB[H_\cB]$,
    % Moreover, for all $k\geq 0$:
    % \begin{equation*}
    %     \esp{\NRM{\beta_k-\beta^\star_{\alpha_k}}^2}\leq C\left(1-\frac{1}{2}\gamma\alpha^2\lambda_b\right)^k\,.
    % \end{equation*}
    $C=2B(\alpha^2\lambda^+_{\min})^{-1}\left(1+(4B\lambda_{\max})(\alpha^2\lambda_{\min}^+)^{-1}\right)\cL(\beta_0)$ and $\lambda_{\min}^+,\lambda_{\max}>0$ are respectively the smallest non-null and the largest eigenevalues of $H$, and $\beta^\star_{\alpha_k}$ is the interpolator that minimises the perturbed hypentropy $h_k$ of parameter $\alpha_k$, as defined in \Cref{eq:def_hk_main} in the next subsection.
\end{proposition}

The convergence of the losses is proved directly using the time-varying mirror structure that we exhibit in the next subsection, the convergence of the iterates is proved by studying the curvature of the mirror maps on a small neighborhood around the affine interpolation  space.

\subsection{Sketch of proof through a time varying mirror descent} 

As in the continuous-time framework, our results heavily rely on showing that the iterates $(\beta_k)_k$ follow a mirror descent recursion with time-varying potentials on the convex loss $\cL(\beta)$. To show this, we first define the following quantities:
\begin{align*}
&\balpha^2_k \coloneqq \balpha_{+, k} \odot \balpha_{-, k} \qquad  \text{and}\qquad\phi_k \coloneqq \frac{1}{2} \argsinh \left (  \frac{\balpha_{+, k}^2 - \balpha_{-, k}^2 }{2 \balpha_{k}^2 } \right ) \in \R^d
\,,
% \\
% &
% \text{where} \quad   \balpha_{\pm, k} \coloneqq \balpha  \exp\left( - \frac{1}{2} \sum_{i = 0}^{k-1} q_\pm\big( \gamma_\ell \nabla \cL_{\cB_\ell}(\beta_\ell)  \big)  \right) \in \R^d\,.
\end{align*}
where $\balpha_{\pm, k} \coloneqq \balpha  \exp\left( - \frac{1}{2} \sum_{i = 0}^{k-1} q_\pm\big( \gamma_\ell \nabla \cL_{\cB_\ell}(\beta_\ell)  \big)  \right) \in \R^d$.
% \begin{align*}
% &\balpha^2_k \coloneqq \balpha_{+, k} \odot \balpha_{-, k} \qquad  \text{and}\qquad\phi_k \coloneqq \frac{1}{2} \argsinh \left (  \frac{\balpha_{+, k}^2 - \balpha_{-, k}^2 }{2 \balpha_{k}^2 } \right ) \in \R^d
% \,,\\
% &
% \text{where} \quad   \balpha_{\pm, k} \coloneqq \balpha  \exp\left( - \frac{1}{2} \sum_{i = 0}^{k-1} q_\pm\big( \gamma_\ell \nabla \cL_{\cB_\ell}(\beta_\ell)  \big)  \right) \in \R^d\,.
% \end{align*}
% \begin{align*}
% &\balpha^2_k \coloneqq \balpha_{+, k} \odot \balpha_{-, k} \quad \text{where} \quad   \balpha_{\pm, k} \coloneqq \balpha  \exp\left( - \frac{1}{2} \sum_{i = 0}^{k-1} q_\pm\big( \gamma_\ell \nabla \cL_{\cB_\ell}(\beta_\ell)  \big)  \right) \in \R^d \,, % = \alph^2 \cdot \exp\left(- \sum_{i = 0}^{k-1}  \big [ q(\gamma_i\nabla\cL(\beta_i)) + q(-\gamma_i\nabla\cL(\beta_i)) \big ]  \right) \in \R^d 
% \,\\
% &\qquad\phi_k \coloneqq \frac{1}{2} \argsinh \left (  \frac{\balpha_{+, k}^2 - \balpha_{-, k}^2 }{2 \balpha_{k}^2 } \right ) \in \R^d\,.
% \end{align*}
Finally for $k \geq 0$, we define the potentials  $(h_k : \R^d \to \R)_{k\geq0}$ as:
\begin{align}
\label{eq:def_hk_main}
    h_k(\beta) = \psi_{\balpha_k}(\beta) - \langle \phi_k, \beta \rangle.
\end{align}
Where $\psi_{\balpha_k}$ is the hyperbolic entropy function defined \Cref{eq:implicit_opt_hypentropy}.  Now that all the relevant quantities are defined, we can state the following proposition which explicits the time-varying stochastic mirror descent.
\begin{proposition}\label{prop:tv_md_main}
The iterates $(\beta_k=u_k\odot v_k)_{k\geq0}$ from \Cref{eq:SGD_recursion} satisfy the Stochastic Mirror Descent recursion with varying potentials $(h_k)_k$:
\begin{align*}
% \label{eq:time_varying_MD}
   \nabla h_{k+1}(\beta_{k+1}) = \nabla h_{k}(\beta_{k}) - \gamma_k \nabla \cL_{\cB_k}(\beta_k)\,,
\end{align*}
where $h_k:\R^d\to\R$ for $k\geq 0$ are defined \Cref{eq:def_hk_main}. Since $\nabla h_0(\beta_0) = 0$ we have:
\begin{align}\label{eq:spanX}
    \nabla h_k(\beta_k) \in \mathrm{span}(x_1, \dots, x_n).
\end{align} 
\end{proposition}
\Cref{thm:implicit_bias,thm:conv_iterates,prop:conv_quantitative} follow from this key proposition: by suitably modifying classical convex optimization techniques to account for the time-varying potentials, we can prove the convergence of the iterates towards an interpolator $\beta^\star_\infty$ along with that of the relevant quantities $\balpha_{\pm, k}$, $\balpha_{k}$ and $\phi_k$.
The implicit regularisation problem then directly follows from: (1) the limit condition $\nabla h_\infty(\beta_\infty)\in\Span(x_1,\ldots,x_n)$ as seen from \cref{eq:spanX} and (2) the interpolation condition $X \beta^\star_\infty = y$. Indeed,  these two conditions exactly correspond to the KKT conditions of the convex problem \cref{eq:implicit_bias}.

% \emph{i.e.}, $\nabla h_{k+1}(\beta_{k+1}) = \nabla h_{k}(\beta_{k}) - \gamma_k \nabla \cL_{\cB_k}(\beta_k)
% $ for all $k\geq0$. 

% The potentials $(h_k)_k$ are defined in App.~\ref{app:sec:tvMD} and are closely related to the hyperbolic entropy \eqref{eq:implicit_opt_hypentropy}. 
% The implicit regularisation problem (Theorem~\ref{thm:implicit_bias}) follows from a limit $h_\infty(\beta_\infty^\star)\in\mathrm{Span}(x_1,\ldots,x_n)$ and KKT conditions, while the convergence follows from deriving descent lemmas for (stochastic) mirror descent with varying potentials.

\section{Analysis of the impact of the stepsize and stochasticity on  $\alpha_\infty$}\label{sec:alphainfty}
% \section{Analysis of $\balpha_\infty$: impact of stochasticity and stepsize}\label{sec:alphainfty}

In this section, we analyse the effects of large stepsizes and stochasticity on the implicit bias of (S)GD. We focus on how these factors influence the effective initialisation $\balpha_\infty$, which plays a key role as shown in \Cref{thm:implicit_bias}.
From its definition in \cref{eq:def_alpha_inf}, we see that $\balpha_\infty$ is a function of the vector $\sum_k q(\gamma_k \nabla \cL_{\cB_k} (\beta_k))$.
We  henceforth call this quantity the \textit{gain vector}. 
For simplicity of the discussions, from now on, we consider constant stepsizes $\gamma_k = \gamma$ for all $k \geq 0$ and a uniform initialisation of the weights $\balpha = \alpha \mathbf{1}$ with $\alpha >0$. We can then write the gain vector as: 
\begin{align*}
    \Gain_{\gamma} \coloneqq \ln \left ( \frac{\balpha^2}{\balpha_\infty^2} \right ) = \sum_k q(\gamma \nabla \cL_{\cB_k} (\beta_k))  \in \R^d\,.
\end{align*}
Following our discussion in \cref{sec:warmup} on the scale and the shape of $\balpha_\infty$, we recall the link between the scale and shape of $\rm Gain_\gamma$ and the recovered solution:

\textbf{1.} The \textbf{scale} of $\rm Gain _\gamma$, i.e. the magnitude of $\Vert \rm Gain _\gamma \Vert_1$ indicates how much the implicit bias of (S)GD differs from that of gradient flow: $\Vert \rm Gain _\gamma \Vert_1 \sim 0$ implies that $\balpha_\infty \sim \balpha$ and therefore the recovered solution is close to that of gradient flow. On the contrary, $\Vert \rm Gain _\gamma \Vert_1 >\!\!> \ln(1 / \alpha)$ implies that $\balpha_\infty$ has effective scale much smaller than $\balpha$ thereby changing the implicit regularisation \cref{eq:implicit_bias}. 

\textbf{2.} The \textbf{shape} of $\rm Gain _\gamma$ indicates which coordinates of $\beta$ in the associated minimum weighted $\ell_1$ problem are most penalised. First recall from \Cref{sec:warmup} that a uniformly large $\rm Gain _\gamma$ leads to $\psi_{\balpha_\infty}$ being closer to the $\ell_1$-norm.
However, with small weight initialisation $\alpha\to0$, we have,
\begin{align} 
\psi_{\balpha_\infty}(\beta) \sim 
    % \sum_{i=1}^d \ln(\frac{1}{\alpha_{\infty,i}}) \vert \beta_i \vert = 
    \ln(\frac{1}{\alpha}) \Vert \beta \Vert_1 + \sum_{i=1}^d  \rm Gain _\gamma(i) \vert \beta_i \vert\,,
    \label{eq:shape}
\end{align}
In this case, having a heterogeneously large vector $\rm Gain_\gamma$ leads to a weighted $\ell_1$ norm as the effective implicit regularisation, where the coordinates of $\beta$ corresponding to the largest entries of $\rm Gain _\gamma$ are less likely to be recovered.

% We investigate now in which cases the stepsizes and the stochasticity have an effect on the recovered solution, specifically when  $\Vert \rm Gain _\gamma \Vert_1 >\!\!> \ln(1 / \alpha)$ holds.

% In the following, for stepsizes $\gamma=(\gamma_k)_{k\geq0}$, we consider the gain vector $\Gain_\gamma$ defined as:

\subsection{The scale of $\rm Gain_\gamma$ is increasing with the stepsize}

% Recall that the key quantity corr
% We first start that by recalling that, under the assumptions of Theorem~\ref{thm:conv_gd}:
% \begin{equation}\label{eq:implicit_bias_formula}
%     \ln\left(\frac{\balpha_\infty}{\balpha}\right)=-\Theta\left(\sum_{k=0}^\infty \gamma_k^2g_k^2\right)\,,
% \end{equation}

% where $g_k=\nabla\cL_{\cB_k}(\beta_k)$ is the minibatch stochastic gradient sampled at iteration $k$. This leads to the following proposition, that controls the amplitude of the biasing effect of non-infinitesimal stepsizes and stochasticiy, summed across all coordinates. 

%\Snote{We could also move these notations to the appendix and simply use the $\Theta$ notation in prop 1 to highlight what the messages are.}

The following proposition highlights the dependencies of the scale of the gain $\Vert \rm Gain_\gamma \Vert_1$ in terms of various problem constants. 
\begin{restatable}{proposition}{propmagnitudeIB}
\label{prop:magnitude_IB}
Let $\Lambda_b, \lambda_b > 0$
\footnote{$\Lambda_b, \lambda_b > 0$ are data-dependent constants; for $b = n$, we have $(\lambda_n, \Lambda_n) = (\lambda_{\rm min}^+(H), \lambda_{\rm max}(H))$ where $\lambda_{\rm min}^+(H)$ is the smallest non-null eigenvalue of $H$; for $b = 1$, we have $\min_i \Vert x_i \Vert_2^2 \leq \lambda_1 \leq \Lambda_1 \leq \max_i \Vert x_i \Vert_2^2$. }
be the largest and smallest values, respectively, such that $\lambda_b H \preceq \mathbb{E}_\cB \big[ H_\cB^2 \big] \preceq \Lambda_b H$. For any stepsize $\gamma > 0$ satisfying $\gamma\leq \frac{c}{BL}$ (as in \cref{thm:conv_iterates}), initialisation $\alpha \mathbf{1}$ and batch size $b \in [n]$, the magnitude of the gain satisfies:
\begin{equation} \label{eq:gain_bound}
\!\lambda_b \gamma^2 \sum_k \E\cL(\beta_k)  \leq \mathbb{E} \left[\Vert \Gain_\gamma \Vert_1\right] \leq 2\Lambda_b  \gamma^2 \sum_k\E\cL(\beta_k)\,,
\end{equation}
where the expectation is over a  uniform and independent sampling of the batches $(\cB_k)_{k\geq 0}$.% at each iteration.
\end{restatable}

%\Snote{Different way of stating prop 1, even though I am not a big fan of putting loads of things in the $\Theta()$ notation, at least it highlights what the reader should take out from the prop.}
%\begin{restatable}{proposition}{propmagnitudeIB}
%\label{prop:magnitude_IB}
%For any stepsize $\gamma > 0$, initialisation $\alpha \mathbf{1}$ and batch size $b \in [n]$, the magnitude of the gain satisfies:
%\begin{equation} \label{eq:gain_bound}
%\mathbb{E} \left[\Vert \Gain_\gamma \Vert_1\right] = \Theta \Big (\sum_k\! \cL(\beta_k)\Big)
%\,,
%\end{equation}
%where the expectation is over uniform and independent sampling of the batches $(\cB_k)_{k\geq 0}$ at each iteration.
%Furthermore, for stepsize $0< \gamma \leq \gamma_{\max} =\frac{c}{BL}$ as in \Cref{thm:conv_iterates} we have that: 
%\begin{equation}\label{eq:sum_losses}
%    \mathbb{E} \left[\Vert \Gain_\gamma \Vert_1\right]  = \Theta (\gamma )\,.
%\end{equation}
%\end{restatable}
%We discuss the impact of the several quantities which appear in the previous proposition.
\myparagraph{The slower the training, the larger the gain.} \cref{eq:gain_bound} shows that the slower the training loss converges to $0$, the larger the sum of the loss and therefore the larger the scale of $\rm Gain _\gamma$. 
% When the training loss does not go to 0 too fast, 
This means that the (S)GD trajectory deviates from that of gradient flow if the stepsize and/or noise slows down the training.
This supports observations previously made from stochastic gradient flow~\citep{pesme2021implicit} analysis. 
%The constants $(\lambda_b, \Lambda_b)$ introduced in \cref{prop:magnitude_IB} only depend on $X$; for $b = n$, we have $(\lambda_n, \Lambda_n) = (\lambda_{\rm min}^+(H), \lambda_{\rm max}(H))$ where $\lambda_{\rm min}^+(H)$ is the smallest non-null eigenvalue of $H$; for $b = 1$, we have $\min_i \Vert x_i \Vert_2^2 \leq \lambda_1 \leq \Lambda_1 \leq \max_i \Vert x_i \Vert_2^2$. 
% the interesting regime for which the recovered solution significantly differs for gradient flow is when 
%This observation was previously made in \citet{pesme2021implicit} for stochastic gradient flow and we extend it here for SGD as well as gradient descent.  

\myparagraph{The bigger the stepsize, the larger the gain.}
The effect of the stepsize on the magnitude of the gain is not directly visible in \cref{eq:gain_bound} because a larger stepsize tends to speed up the training.
For stepsize $0< \gamma \leq \gamma_{\max} =\frac{c}{BL}$ as in \Cref{thm:conv_iterates} we have that (see \Cref{sec:app:gain}): 
\begin{equation}\label{eq:sum_losses}
    \sum_k \gamma^2\cL(\beta_k) = \Theta\left(\gamma \ln\left(\frac{1}{\alpha}\right)\NRM{\beta^\star_{\ell_1}}_1\right)\,.
\end{equation}
\cref{eq:sum_losses} clearly shows that increasing the stepsize \textbf{boosts} the magnitude $\Vert \rm Gain _\gamma \Vert_1$ up until the limit of $\gamma_{\text{max}}$.
Therefore, the larger the stepsize the smaller is the effective scale of $\balpha_\infty$. In turn, larger gap between $\balpha_\infty$ and $\balpha$ leads to a larger deviation of (S)GD from the gradient flow.

\myparagraph{Large stepsizes and Edge of Stability.} 
The previous paragraph holds for stepsizes smaller than $\gamma_{\max}$ for which we can theoretically prove convergence. But what if we use even bigger stepsizes? Let $(\beta_k^{\gamma})_k$ denote the iterates generated with stepsize $\gamma$ and let us define $\tilde{\gamma}_{\max} \coloneqq \sup_{\gamma \geq 0} \{ \gamma \ \text{s.t.}\ \forall\gamma' \in (0, \gamma), \ \sum_k \cL(\beta_k^{\gamma'}) < \infty \} $, which corresponds to the largest stepsize such that the iterates still converge for a given problem (even if not provably so). From \Cref{prop:magnitude_IB} we have that $\gamma_{\max} \leq \tilde{\gamma}_{\max}$. As we approach this upper bound on convergence  $\gamma \to \Tilde{\gamma}_{\max}$, the sum $\sum_k \cL(\beta_k^\gamma)$ diverges.
% \footnote{note that we observe this experimentally, however we do not show it and leave it as future work.}.  
For such large stepsizes, the iterates of gradient descent tend to ``bounce'' and this regime is commonly referred to as the \textit{Edge of Stability}. In this regime, the convergence of the loss can be made arbitrarily slow due to these bouncing effects. As a consequence, as seen through \Cref{eq:gain_bound}, the magnitude of $\rm \Gain _\gamma$ can be become arbitrarily big as observed in \cref{fig:Gain} (left). In this regime, the recovered solution tends to dramatically differ from the gradient flow solution, as seen in \cref{fig:main_theorem}.

%\Snote{I still think we should remove this next paragraph, I don't see how it fits in the paper} 
%\myparagraph{Impact of stochasticity and linear scaling rule.} 
%We now relate the batch size $b$ (and thus the effect of stochasticity) to the magnitude of the gain. 
%Provided some probabilistic assumptions (inputs $x_i$ sampled from $\mathcal{N}(0, \sigma^2 I_d)$ for $\sigma^2 > 0$
%\footnote{The Gaussian assumption can be generalized to non-isotropic sub-Gaussian random variables using more refined concentration bounds. The zero-mean assumption can be relaxed at the cost of an additional $\|\mu\|^2$ term. })
%on the data distribution,  with high probability over the dataset and for stepsizes $0< \gamma \leq \gamma_{\max} =\frac{c}{BL}$, we have 
%(Appendix~\ref{sec:app:linear_scaling_rule}):
%\begin{align}\label{eq:bound_lambda_b_gain}
%\mathbb{E} \left[\Vert \Gain_\gamma \Vert_1\right] = \Theta  \Big( \gamma \frac{\sigma^2 d}{b}  \ln \big ( \frac{1}{\alpha} \big) \Vert \beta^\star_{\ell_1} \Vert_1 \Big)\,.
%\end{align}
%\Cref{eq:bound_lambda_b_gain} directly shows that the scale of the $\rm Gain _\gamma$ decreases with the size of the batch and that there exists a factor $n$ between that of SGD and that of GD. Notice also from \Cref{eq:bound_lambda_b_gain} that the magnitude of the $\rm Gain _\gamma$ depends on the value $\frac{\gamma}{b}$, a phenomenon reminiscent of the \textbf{linear scaling rule}, a standard practice in deep learning \cite{goyal2017accurate}.

\myparagraph{Impact of stochasticity and linear scaling rule.}
Assuming inputs $x_i$ sampled from $\mathcal{N}(0, \sigma^2 I_d)$ with $\sigma^2 > 0$,
%\footnote{The Gaussian assumption can be generalized to non-isotropic sub-Gaussian random variables using more refined concentration bounds. The zero-mean assumption can be relaxed at the cost of an additional $\|\mu\|^2$ term. } % and considering stepsizes $0 < \gamma \leq \gamma_{\max} = \frac{c}{BL}$,
we obtain $\mathbb{E} \left[\Vert \Gain_\gamma \Vert_1\right] = \Theta  \Big( \gamma \frac{\sigma^2 d}{b}  \ln \big ( \frac{1}{\alpha} \big) \Vert \beta^\star_{\ell_1} \Vert_1 \Big)\,$, w.h.p. over the dataset (see~\Cref{sec:app:linear_scaling_rule}, \cref{prop:lambda_b_sum}). 
The scale of $\rm Gain _\gamma$ decreases with  batch size and there exists a factor $n$ between that of SGD and that of GD. Additionally, the magnitude of $\rm Gain_\gamma$ depends on $\frac{\gamma}{b}$, resembling the \textbf{linear scaling rule} commonly used in deep learning~\cite{goyal2017accurate}.

By analysing the magnitude $\Vert \rm Gain _\gamma \Vert_1$, we have explained \textbf{the distinct behavior of (S)GD with large stepsizes compared to gradient flow}.  However, our current analysis does not qualitatively distinguish the behavior between SGD and GD beyond the linear stepsize scaling rules, in contrast with~\cref{fig:main_theorem}. A deeper understanding of the shape of $\Gain{\gamma}$ is needed to explain this disparity.

\subsection{The shape of $\rm Gain _\gamma$ explains the differences between GD and SGD}

In this section, we restrict our presentation to single batch SGD ($b=1$) and full batch GD ($b=n$).
When  visualising  the typical shape of $\rm Gain _\gamma$ for large stepsizes (see \Cref{fig:Gain} - right), we note that GD and SGD behave very differently.  
For GD,  the  magnitude of $\rm Gain _\gamma$ is higher for coordinates in the support of $\beta^\star_{\ell_1}$ and thus these coordinates are adversely weighted in the asymptotic limit of $\psi_{\balpha_\infty}$ (per \eqref{eq:shape}). This explains the distinction seed in \cref{fig:main_theorem}, where GD in this regime has poor sparse recovery despite having a small scale of $\balpha_\infty$, as opposed to SGD that behaves well. %We now formalize this intuition.

\begin{figure}[t]
\centering
\includegraphics[width=0.9\linewidth]{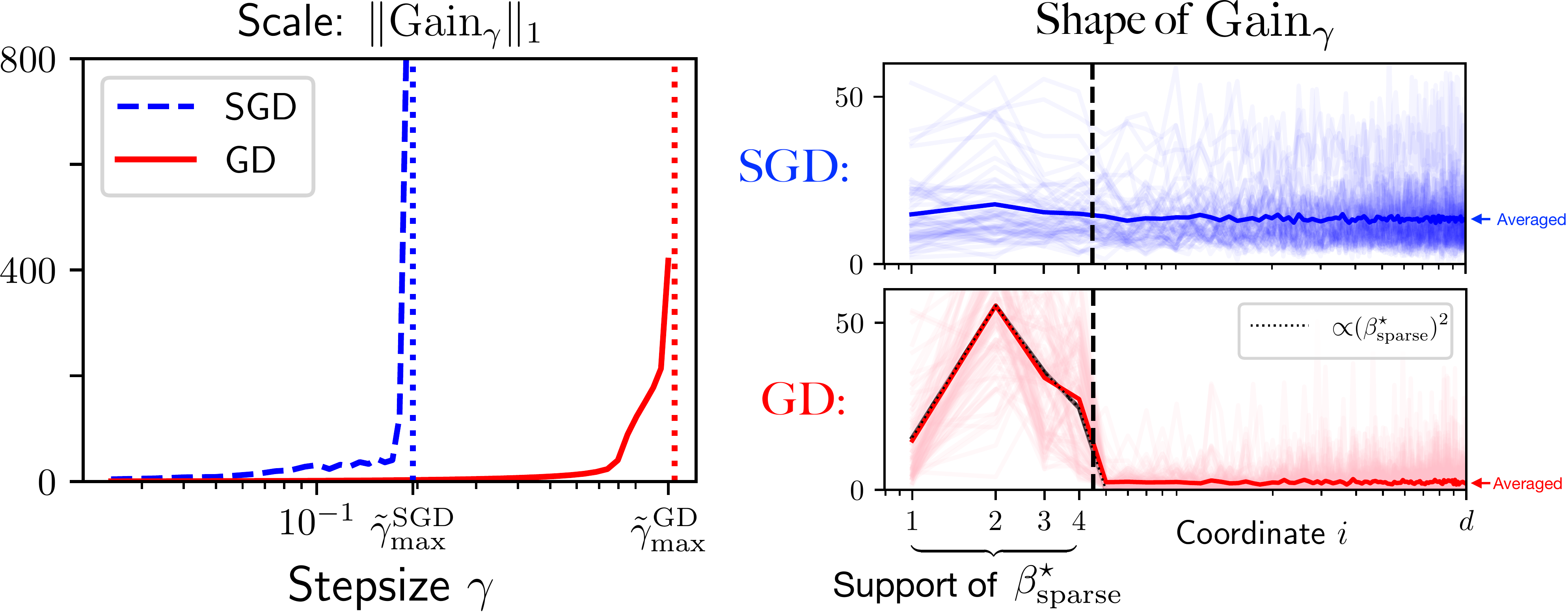}
\hspace*{-15pt}
\caption{ \textit{Left:} the scale of $\rm Gain_\gamma$ explodes as $\gamma \to \tilde{\gamma}_{\rm max}$ for both GD and SGD. 
\textit{Right:} $\beta^\star_{\rm sparse}$ is fixed, we perform $100$ runs of GD and SGD with different feature matrices, and we plot the $d$ coordinates of $\Gain_\gamma$ (for GD and SGD) on the $x$-axis (which is in log scale for better visualisation). The shape of $\rm Gain_\gamma^{\rm SGD}$ is homogeneous whereas that of GD is heterogeneous with much higher magnitude on the support of $\beta^\star_{\rm sparse}$. The shape of $\rm Gain_{\gamma}^{\rm GD}$ is proportional to the expected gradient at initialisation which is $(\beta^\star_{\rm sparse})^2$.\label{fig:Gain}} 
\end{figure}
%
% \begin{figure}[t]
% \centering
% \begin{minipage}[c]{.5\linewidth}
% \hspace*{-15pt}
% \includegraphics[width=0.99\linewidth]{}
% \end{minipage}
% \hspace*{-15pt}
% \begin{minipage}[c]{.5\linewidth}
% \includegraphics[width=0.99\linewidth]{}
% % \caption{SGD $b=1$\label{subfig:sgd}}
% \end{minipage}
% \hspace*{-15pt}
% \caption{ \textit{Left:} the scale of $\rm Gain_\gamma$ explodes as $\gamma \to \tilde{\gamma}_{\rm max}$ for both GD and SGD. 
% \textit{Right:} The shape of $\rm Gain_\gamma^{\rm SGD}$ is homogeneous whereas that of GD is heterogeneous with much higher magnitude on the support of $\beta^\star_{\ell_1}$ (first two coordinates on the left of the dashed vertical line). The shape of $\rm Gain_{\gamma}^{\rm GD}$ matches that of the gradient at initialisation.\label{fig:Gain}}
% \end{figure}
%
%
%
%
%We now formalize the above intuition. 
The \textbf{shape} of $\rm Gain _\gamma$ is determined by the sum of the squared gradients $\sum_k \nabla \cL_{\cB_k}(\beta_k)^2$, and in particular by the degree of heterogeneity among the coordinates of this sum.
Precisely analysing the sum over the whole trajectory of the iterates $(\beta_k)_k$ is technically out of reach. However, we empirically observe for the trajectories shown in \Cref{fig:Gain} that the shape is largely determined within the first few iterates as formalized in the observation below. %and that the shape of the whole sum is close to that $\mathbb{E}[ \nabla \cL_{\cB_k}(\beta_0)]$. 
%We formalise this observation below. 
%\vspace{-5pt}
\begin{observation}
\label{claim:shady}
%Though it is hard to analyse $\sum_k \nabla \cL_{\cB_k}(\beta_k)^2$  over the whole trajectory $(\beta_k)_k$, we claim that 
 $\sum_k \nabla \cL_{\cB_k}(\beta_k)^2 \appropto \mathbb{E}[ \nabla \cL_{\cB_k}(\beta_0)^2]$\,.
\end{observation}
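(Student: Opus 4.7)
Since the statement is only an approximate proportionality ($\appropto$) between two random vectors in $\R^d$, my plan is a heuristic two-step argument: a \emph{gradient-freezing} step during the rich-regime warm-up, followed by a law-of-large-numbers step over the i.i.d.\ batches. The cumulative shape of the squared gradient is then dictated by the initial gradient alone, with the late-stage contributions being either small in $\ell_\infty$ or already aligned with the early-phase pattern.

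\textbf{Step 1 (gradient freezing).} Starting from $\beta_0 = \mathbf{0}$, the parametrisation $\beta = u \odot v$ is at a saddle of $F$, and for a small uniform initialisation $\balpha = \alpha \mathbf{1}$ one can check, via the explicit update $\beta_{k+1} = \beta_k - \gamma(u_k^2+v_k^2)\odot\nabla\cL_{\cB_k}(\beta_k)$, that the iterates remain of order $\alpha$ for a warm-up window of discrete length $T_\alpha = \Theta(\log(1/\alpha)/\gamma)$. On that window, smoothness of the quadratic yields $\nabla \cL_{\cB_k}(\beta_k) = \nabla \cL_{\cB_k}(\mathbf{0}) + H_{\cB_k}\beta_k$ with $\|H_{\cB_k}\beta_k\|_\infty \le L\|\beta_k\|_\infty = O(\alpha)$, negligible against $\nabla \cL_{\cB_k}(\mathbf{0})$. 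Coordinatewise squaring preserves this approximation, so $\sum_{k<T_\alpha}\nabla \cL_{\cB_k}(\beta_k)^2 \approx \sum_{k<T_\alpha}\nabla \cL_{\cB_k}(\mathbf{0})^2$.

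\textbf{Step 2 (LLN over batches).} Because the batches $(\cB_k)_k$ are sampled i.i.d.\ uniformly, the vectors $\nabla \cL_{\cB_k}(\mathbf{0})^2$ are i.i.d.\ with mean $\mathbb{E}[\nabla \cL_{\cB}(\mathbf{0})^2]$ and uniformly bounded entries. A coordinatewise Bernstein inequality then yields $\frac{1}{T_\alpha}\sum_{k<T_\alpha}\nabla \cL_{\cB_k}(\mathbf{0})^2 = \mathbb{E}[\nabla \cL_{\cB}(\mathbf{0})^2]\bigl(1+o(1)\bigr)$ with high probability, and combining with Step 1 proves the observation up to a late-stage tail. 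This picture also makes the GD/SGD gap of \cref{fig:Gain} transparent: for $b=n$ the expectation collapses to the deterministic vector $(\nabla\cL(\mathbf{0}))^2 \propto (H\beta^\star_{\ell_1})^2$, concentrated on the support of $\beta^\star_{\ell_1}$, whereas for $b=1$ the Wick cross-terms arising from Gaussian inputs render $\mathbb{E}[\nabla\cL_{\cB}(\mathbf{0})^2]$ much more uniform across coordinates.

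\textbf{Main obstacle.} The hardest step is to show that the tail $\sum_{k\ge T_\alpha}\nabla\cL_{\cB_k}(\beta_k)^2$ does not reshape the cumulative sum. Combining smoothness with \cref{prop:magnitude_IB} gives $\sum_k \|\nabla\cL_{\cB_k}(\beta_k)\|_\infty^2 \lesssim L\sum_k \cL_{\cB_k}(\beta_k) = O(\log(1/\alpha)/\gamma)$, which matches the early-phase $\ell_1$-mass only in \emph{order}; upgrading to a coordinatewise control (no late-time directional alignment with particular coordinates) would require tracking the time-varying mirror flow \eqref{eq:time_varying_MD} through the rich-to-kernel transition, and I suspect this is precisely why the authors state the claim as an empirical observation rather than as a theorem.
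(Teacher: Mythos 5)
The paper does not actually prove this statement: it is deliberately labelled an \emph{Observation}, and the surrounding text states that ``precisely analysing the sum over the whole trajectory of the iterates $(\beta_k)_k$ is technically out of reach'', so the only support offered is the empirical evidence of \cref{fig:Gain}. Your two-step heuristic (gradient freezing during the small-$\alpha$ warm-up, then an LLN over the i.i.d.\ batches) is a reasonable formalisation of the paper's informal justification that ``the shape is largely determined within the first few iterates'', and your honest flagging of the unresolved tail is exactly the issue the authors sidestep by not claiming a theorem. So you have not missed a proof that exists in the paper; you have sketched a plausibility argument for a claim the paper itself leaves unproven.

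That said, two points in your sketch deserve a caveat. First, in Step 1 the bound $\|\beta_k\|_\infty = O(\alpha)$ over the whole window $T_\alpha = \Theta(\log(1/\alpha)/\gamma)$ is too strong: during that window the iterates sweep all scales from $\alpha^2$ up to order $\|\beta^\star\|$, so the freezing approximation only holds for most, not all, of the warm-up, and the final $O(1/\gamma)$ steps already carry order-one gradients. Second, and more importantly, in the large-stepsize regime that the paper actually cares about the tail is \emph{not} negligible: as $\gamma \to \tilde\gamma_{\max}$ the sum $\sum_k \cL(\beta_k)$ blows up (this is how the paper explains the exploding scale of $\Gain_\gamma$ in \cref{fig:Gain}), so the post-escape oscillation phase can dominate $\sum_k \nabla\cL_{\cB_k}(\beta_k)^2$. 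The paper's Edge-of-Stability discussion (Section 5) attributes the shape of the gain for GD precisely to these late oscillations being confined to the support of $\beta^\star_{\rm sparse}$ — i.e.\ the late-phase shape happens to coincide with the initial-gradient shape, which is an additional empirical fact, not something that follows from your Steps 1--2. So your argument, even if the tail bound you ask for in $\ell_\infty$ were obtained, would still not explain the observation in the EoS regime where the early phase is not dominant; closing that gap would require exactly the trajectory-level coordinatewise control you identify as out of reach.
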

In the simple case of a Gaussian  noiseless sparse recovery problem (where $y_i=\langle \beta^\star_{\rm sparse},x_i\rangle$ for some sparse vector $\beta^\star_{\rm sparse}$), we can control these gradients for GD and SGD (\Cref{sec:app:stoch_grad}) as:
\begin{align}
    &\nabla \cL (\beta_0  )^2 = (\beta^\star_{\rm sparse})^2  + \varepsilon \,, \text{ for some  } \eps \text{ verifying } \NRM{\eps}_\infty <\!< \NRM{\beta^\star_\sparse}_\infty^2 \label{eq:nabla0_GD}\,, \\
    &\mathbb{E}_{i_0} [\nabla \cL_{i_0}(\beta_0)^2] = \Theta \Big(  \Vert \beta^\star_{\rm sparse} \Vert_2^2 \mathbf{1} \Big)\,. \label{eq:nabla0_SGD}
\end{align}

\myparagraph{The gradient of GD is heterogeneous.}  Since $\beta^\star_{\rm sparse}$ is sparse by definition,  we deduce from \cref{eq:nabla0_GD} that $\nabla \cL(\beta_0)$ is heterogeneous with larger values corresponding to the support of $\beta^\star_{\rm sparse}$. Along with \cref{claim:shady}, this means that $\rm Gain _\gamma$ \textbf{has much larger values on the support of $\beta^\star_{\rm sparse}$}. The corresponding weighted $\ell_1$-norm therefore %has bigger weights penalising 
penalises the coordinates  belonging to the support of $\beta^\star_{\rm sparse}$, which hinders the recovery of $\beta^\star_{\rm sparse}$ (as explained in Example~\ref{app:example}, Appendix~\ref{sec:app:example}).

\myparagraph{The stochastic gradient of SGD is homogeneous.}  On the contrary, from \cref{eq:nabla0_SGD}, we have that the initial stochastic gradients are homogeneous, leading to a weighted $\ell_1$-norm where the weights are roughly balanced. %Hence, there is more weight on the coordinates corresponding to the support of $\beta^\star_{\rm sparse}$ than on the other coordinates, as is the case for gradient descent. 
The corresponding weighted $\ell_1$-norm is therefore close to the uniform $\ell_1$-norm and the classical $\ell_1$ recovery guarantees are expected.

% Thus while the sum of the SGD squared stochastic gradients is homogeneous, it is not the case for the GD's sum of squared gradients.

% \paragraph{Overall summary of the joint effects of the scale and shape.}
% In summary we have the following trichotomy which fully explains \Cref{fig:main_theorem}:
% \textit{(i)}  for small stepsizes the scale is small (\Cref{fig:Gain} left) and the recovered (S)GD solution is close is close to the gradient flow solution.
% \textit{(ii)} for large stepsizes the scale is significant and the recovered solutions differ from GF: \textit{(ii-a)} for SGD the shape of $\balpha_\infty$ is uniform, the associated norm is closer to the $\ell_1$-norm and the recovered solution is closer to the sparse solution \textit{(ii-b)} for GD, the shape is heterogeneous, the associated norm is weighted and hinders the recovery of the sparse solution. 

\paragraph{Overall summary of the joint effects of the scale and shape.}
In summary we have the following trichotomy which fully explains \Cref{fig:main_theorem}:
\begin{enumerate}
    \item for small stepsizes, the scale is small,  and %recovered 
    (S)GD solutions are close to that of gradient flow;
    \item for large stepsizes the scale is significant and the recovered solutions differ from GF: 
    \begin{itemize}
        \item for SGD the shape of $\balpha_\infty$ is uniform, the associated norm is closer to the $\ell_1$-norm and the recovered solution is closer to the sparse solution;
        \item for GD, the shape is heterogeneous, the associated norm is weighted such that it hinders the recovery of the sparse solution. 
    \end{itemize}
\end{enumerate}

% Large but heterogeneous $\rm \rm Gain_\gamma \Rightarrow$ the coordinates corresponding to large values of $\rm Gain _\gamma$ are less likely to be recovered.

% As a consequence, when the amplitude of the gain increases (which is the case when we increase the stepsize, as explained in the previous section), \textbf{the recovery of sparse vector in hindered for GD} (the coordinates to recover become heavily penalised), while \textbf{SGD favors sparse solutions}.
%In this section, we interpreted the implicit bias formulated in \cref{thm:implicit_bias} by studying the scale and shape of the gain vector $\Gain_\gamma$, and showed that large stepsizes consistently benefit SGD for sparse regression problems (uniform and large $\Gain_\gamma$), while the recovery of sparse vectors is hindered with large stepsizes for GD, for which $\Gain_\gamma$ becomes heterogeneous, with spikes on the coordinates to recover. 
In this last section, we relate heuristically these findings to the \emph{Edge of Stability} phenomenon.

%\Snote{I have some trouble with this section, I feel that with Assumption 3 we're trying to make it rigorous, but the final message in Prop 2 is not easy to read. It should be clear that the message of this section (1) for SGD Gain(gamma) is homogeneous (2) for GD it is heterogeneous. Maybe the best would be to make Prop 2 less formal.}

\section{Edge of Stability: the neural point of view}\label{sec:EoS}
%
% \Snote{some reviewers where not happy with this section, but I don't see how we could improve it apart from adding some more convincing experiments.}

In recent years it has been noticed that when training neural networks with `large' stepsizes at the limit of divergence, GD enters the \emph{Edge of Stability (EoS)} regime.
In this regime, as seen in \Cref{fig:EoS}, the iterates of GD ‘bounce' / 'oscillate'.
In this section, we come back to the point of view of the weights $w_k = (u_k, v_k) \in \R^{2d}$ and make the connection between our previous results and the common understanding of the \emph{EoS}  phenomenon.  The question we seek to answer is: in which case does GD enter the \textit{EoS} regime, and if so, what are the consequences on the trajectory? 
% directions do our iterates `bounce' and what are the consequences of this effect?}.  
\emph{Keep in mind that this section aims to provide insights rather than formal statements.}
We study the GD trajectory starting from a small initialisation $\balpha = \alpha \mathbf{1}$ where $\alpha < \! \! <  1$ such that we can consider that gradient flow converges close to the sparse interpolator $\beta^\star_{\rm sparse} = \beta_{w^\star_{\rm sparse}}$ corresponding to the weights $w^\star_\sparse = (\sqrt{|\beta^\star_{\rm sparse}|}, \mathrm{sign}(\beta^\star_{\rm sparse}) \sqrt{|\beta^\star_{\rm sparse}|})$ (see Lemma 1 in \cite{pesme2023saddle} for the mapping from the predictors to weights for gradient flow).
%
% The oscillating behaviour is observed for GD when $\gamma$ is larger than $2$ over the local smoothness around the iterates.
%
The trajectory of GD as seen in \cref{fig:EoS} (left) can be decomposed into up to $3$ phases.

\myparagraph{First phase: gradient flow.}
The stepsize is appropriate for the local curvature (as seen in \Cref{fig:EoS}, lower right) around initialisation and the iterates of GD remain close to the trajectory of gradient flow (in black in \cref{fig:EoS}). If the stepsize is such that $\gamma < \frac{2}{\lambda_{\rm max} (\nabla^2 F(w^\star_{\rm sparse}))}$, then it is compatible with the local curvature and the iterates can converge: in this case GF and GD converge to the same point (as seen in \cref{fig:main_theorem} for small stepsizes). For larger  $\gamma >\frac{2}{\lambda_{\rm max} (\nabla^2 F(w^\star_{\rm sparse}))}$ (as is the case for $\gamma_{\mathrm{GD}}$ in \cref{fig:EoS}, lower right), the iterates cannot converge to $\beta^\star_{\rm sparse}$ and we enter the oscillating phase.

\myparagraph{Second phase: oscillations.}
The iterates start oscillating. The gradient of $F$ writes 
$\nabla_{(u, v)} F(w) \sim (\nabla \cL(\beta) \odot v, \nabla \cL(\beta) \odot u)$ and for $w$ in the vicinity of $w^\star_{\rm sparse}$ we have that $u_i \approx v_i \approx 0$ for $i \notin \rm supp (\beta^\star_{\rm sparse})$. Therefore for $w \sim w^\star_{\rm sparse}$ we have that $\nabla_u F(w)_{i} \approx \nabla_v F(w)_{i} \approx 0$ for $i \notin \rm supp (\beta^\star_{\rm sparse})$ and the gradients roughly belong to $\Span(e_i, e_{i+d})_{i \in {\rm supp} (\beta^\star_{\rm sparse})}$. This means that only the coordinates of the weights $(u_i, v_i)$ for $i \in \rm supp (\beta^\star_{\rm sparse})$ can oscillate  and similarly for $(\beta_i)_{i \in \rm supp (\beta^\star_{\rm sparse})}$ (as seen \Cref{fig:EoS} left).

% \vspace*{-1cm}

\myparagraph{Last phase: convergence.} 
Due to the oscillations, the iterates gradually drift towards a region of lower curvature (\cref{fig:EoS}, lower right, the sharpness decreases) where they may (potentially) converge. 
\cref{thm:implicit_bias} enables us to understand where they converge: the coordinates of $\beta_k$ that have  oscillated significantly along the trajectory belong to the support of $\beta^\star_{\rm sparse}$, and therefore  $\rm Gain_\gamma(i)$ becomes much larger for $i \in \rm supp (\beta^\star_{\rm sparse})$ than for the other coordinates. 
Thus, the coordinates of the solution recovered in the \textit{EoS} regime are heavily penalised on the support of the sparse solution. This is observed in \Cref{fig:EoS} (left): the oscillations of $(\beta_i)_{i \in \rm supp (\beta^\star_{\rm sparse})}$ lead to a gradual shift of these coordinates towards $0$, hindering an accurate recovery of the solution $\beta^\star_{\rm sparse}$.

\begin{figure}[t]
\hspace*{-15pt}
\centering
\includegraphics[trim={0 0 0 0}, width=0.9\linewidth]{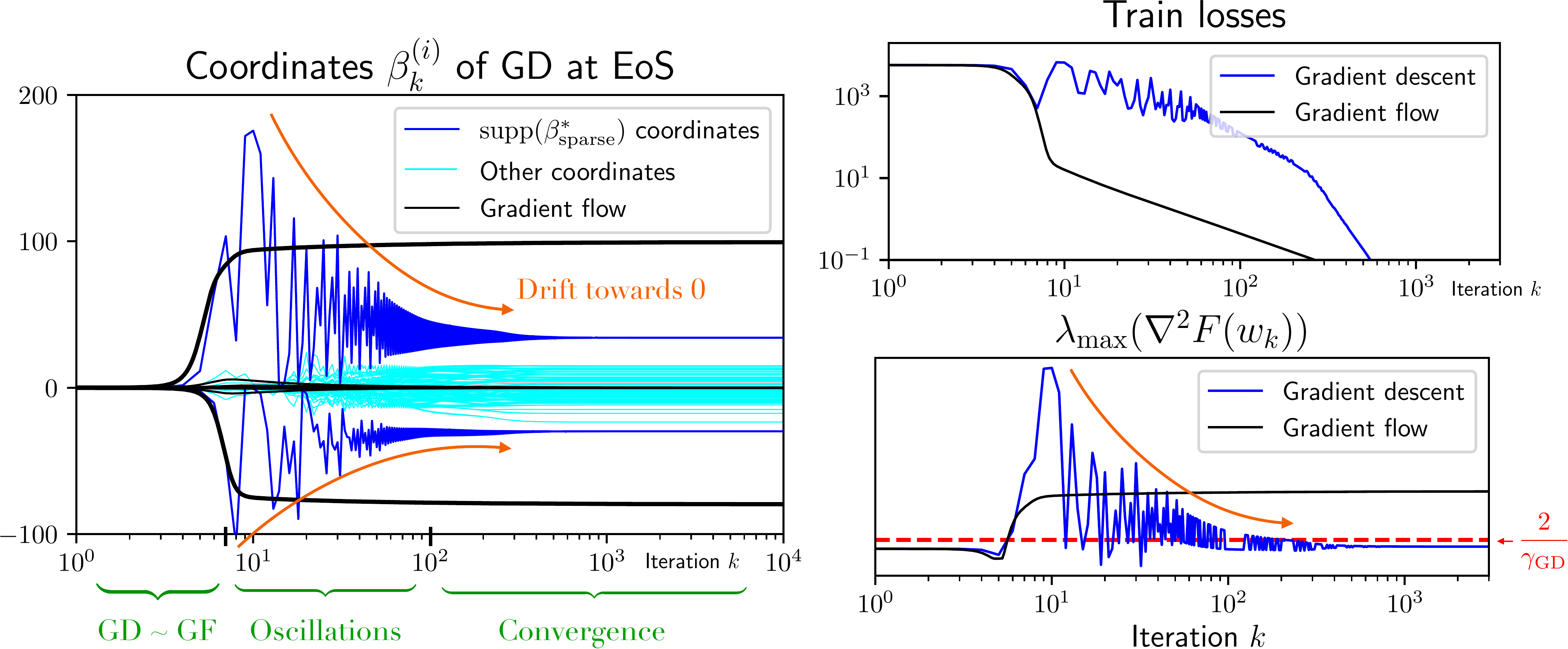}
   \caption{GD at the \textit{EoS}. \textit{Left:} For GD, the coordinates on the support of $\beta^\star_{\rm sparse}$ oscillate and drift towards $0$. \textit{Right, top:} The GD train losses saturate before eventually converging. \textit{Bottom:} GF converges towards a solution that has a high hessian maximum eigenvalue. GD cannot converge towards this solution because of its large stepsize: it therefore drifts towards a solution that has a curvature just below $2 / \gamma$. \label{fig:EoS}}.
     % \label{fig:main_theorem}
     \vspace*{-1em}
\end{figure}
%
% \begin{remark}[Flat minima and generalisation] 
% It is commonly understood that flatter minima are beneficial for generalisation, and that larger stepsizes result in flatter minima \cite{damian2022selfstabilisation}. However, in our DLN case, while larger stepsizes do produce flatter minima, they also result in poor generalisation, highlighting the complexity of generalisation beyond the notion of flatness at the optimum.\nf{cite kaur if we keep it}
% \end{remark}

\myparagraph{SGD in the \emph{EoS} regime.} 
In contrast to the behavior of GD where the oscillations primarily occur on the non-sparse coordinates of ground truth sparse model, for SGD we see a different behavior in \Cref{fig:eos_sgd} (\Cref{app:sec:add_exp}). For stepsizes in the \textit{EoS} regime, just below the non-convergence threshold: the fluctuation of the coordinates occurs evenly over all coordinates, leading to a uniform $\balpha_\infty$. These fluctuations are reminiscent of label-noise SGD~\citep{andriushenko2022sgd}, that have been shown to recover the sparse interpolator in diagonal
linear networks \cite{pillaudvivien2022labelnoise}.
\section{Conclusion}
We study the effect of stochasticity along with large stepsizes when training DLNs with (S)GD. We prove convergence of the iterates as well as explicitly characterise the recovered solution by exhibiting an implicit regularisation problem which depends on the iterates' trajectory. In essence the impact of stepsize and minibatch size are captured by the effective initialisation parameter $\balpha_\infty$ that depends on these choices in an informative way. We then use our characterisation to explain key empirical differences between SGD and GD and provide further insights on the role of stepsize and stochasticity. In particular, our characterisation explains the fundamentally different generalisation properties of SGD and GD solutions at large stepsizes as seen in \Cref{fig:main_theorem}: without stochasticity, the use of large stepsizes can prevent the recovery of the sparse interpolator, even though the effective scale of the initialization decreases with larger stepsize for both SGD and GD. We also provide insights on the link between the \textit{Edge of Stability} regime and our results.

% Studying large-stepsize (S)GD over DLNs, we prove that the minima found by (S)GD is highly correlated to the interplay between stepsizes and stochasticity, captured by the sum of the squared gradients.

% When entering the \emph{EoS} regime and departing from the gradient flow regime, intriguing phenomenons happen for GD and SGD: SGD is driven to minimum $\ell^1$-norm solutions with favorable properties, while GD may generalise poorly. 
% Such effects are deduced from our main result which generalises previous works beyond the infinitesimal stepsize limit: we prove the convergence and implicit regularization of (S)GD with non-asymptotic stepsizes we derive in \cref{thm:conv_gd}. 

\subsection*{Aknowledgements}
M. Even deeply thanks Laurent Massoulié for making it possible to visit Microsoft Research and the Washington state during an internship supervised by Suriya Gunasekar, the MSR Machine Learning Foundations group for hosting him, and Martin Jaggi for inviting him for a week in Lausanne at EPFL, making it possible to meet and discuss with Scott Pesme and Nicolas Flammarion.

\newpage

\bibliographystyle{plainnat} %plainnat après mais à changer !!!!
\bibliography{refs}

\begin{thebibliography}{67}
\providecommand{\natexlab}[1]{#1}
\providecommand{\url}[1]{\texttt{#1}}
\expandafter\ifx\csname urlstyle\endcsname\relax
  \providecommand{\doi}[1]{doi: #1}\else
  \providecommand{\doi}{doi: \begingroup \urlstyle{rm}\Url}\fi

\bibitem[Ahn et~al.(2022)Ahn, Bubeck, Chewi, Lee, Suarez, and
  Zhang]{ahn2022eos}
Kwangjun Ahn, Sébastien Bubeck, Sinho Chewi, Yin~Tat Lee, Felipe Suarez, and
  Yi~Zhang.
\newblock Learning threshold neurons via the "edge of stability".
\newblock \emph{arXiv preprint}, 2022.

\bibitem[Andriushchenko et~al.(2022)Andriushchenko, Varre, Pillaud-Vivien, and
  Flammarion]{andriushenko2022sgd}
M.~Andriushchenko, A.~Varre, L.~Pillaud-Vivien, and N.~Flammarion.
\newblock {SGD} with large step sizes learns sparse features.
\newblock \emph{arXiv preprint}, 2022.

\bibitem[Azulay et~al.(2021)Azulay, Moroshko, Nacson, Woodworth, Srebro,
  Globerson, and Soudry]{azulay2021implicit}
Shahar Azulay, Edward Moroshko, Mor~Shpigel Nacson, Blake~E Woodworth, Nathan
  Srebro, Amir Globerson, and Daniel Soudry.
\newblock On the implicit bias of initialization shape: Beyond infinitesimal
  mirror descent.
\newblock In \emph{International Conference on Machine Learning}, pages
  468--477. PMLR, 2021.

\bibitem[Baraniuk et~al.(2008)Baraniuk, Davenport, DeVore, and
  Wakin]{Baraniuk2008}
R.~Baraniuk, M.~Davenport, R.~DeVore, and M.~Wakin.
\newblock A simple proof of the restricted isometry property for random
  matrices.
\newblock \emph{Constructive Approximation}, 28\penalty0 (3):\penalty0
  253--263, January 2008.

\bibitem[Bauschke et~al.(2017)Bauschke, Bolte, and
  Teboulle]{bauschke2017descent}
H.~H Bauschke, J.~Bolte, and M.~Teboulle.
\newblock A descent lemma beyond {L}ipschitz gradient continuity: first-order
  methods revisited and applications.
\newblock \emph{Mathematics of Operations Research}, 42\penalty0 (2):\penalty0
  330--348, 2017.

\bibitem[Berthier(2022)]{berthier2022incremental}
Rapha{\"e}l Berthier.
\newblock Incremental learning in diagonal linear networks.
\newblock \emph{arXiv preprint arXiv:2208.14673}, 2022.

\bibitem[Beugnot et~al.(2022)Beugnot, Mairal, and
  Rudi]{beugnot2022largestepsizesKernels}
G.~Beugnot, J.~Mairal, and A.~Rudi.
\newblock On the benefits of large learning rates for kernel methods.
\newblock In \emph{Proceedings of Thirty Fifth Conference on Learning Theory},
  volume 178 of \emph{Proceedings of Machine Learning Research}, pages
  254--282. PMLR, 02--05 Jul 2022.

\bibitem[Blanc et~al.(2020)Blanc, Gupta, Valiant, and
  Valiant]{blanc2020implicit}
G.~Blanc, N.~Gupta, G.~Valiant, and P.~Valiant.
\newblock Implicit regularization for deep neural networks driven by an
  {O}rnstein-{U}hlenbeck like process.
\newblock In \emph{Proceedings of Thirty Third Conference on Learning Theory},
  volume 125 of \emph{Proceedings of Machine Learning Research}, pages
  483--513. PMLR, 09--12 Jul 2020.

\bibitem[Bregman(1967)]{BREGMAN1967200}
L.M. Bregman.
\newblock The relaxation method of finding the common point of convex sets and
  its application to the solution of problems in convex programming.
\newblock \emph{USSR Computational Mathematics and Mathematical Physics},
  7\penalty0 (3):\penalty0 200--217, 1967.
\newblock ISSN 0041-5553.

\bibitem[Candès et~al.(2006)Candès, Romberg, and Tao]{candestao}
E.~Candès, J.~Romberg, and T.~Tao.
\newblock Stable signal recovery from incomplete and inaccurate measurements.
\newblock \emph{Communications on Pure and Applied Mathematics}, 59\penalty0
  (8):\penalty0 1207--1223, 2006.

\bibitem[Chaudhari and Soatto(2018)]{chaudhari2018stochastic}
Pratik Chaudhari and Stefano Soatto.
\newblock Stochastic gradient descent performs variational inference, converges
  to limit cycles for deep networks.
\newblock In \emph{International Conference on Learning Representations}, 2018.

\bibitem[Chen and Bruna(2022)]{chen2022eos}
Lei Chen and Joan Bruna.
\newblock On gradient descent convergence beyond the edge of stability, 2022.

\bibitem[Chizat et~al.(2019)Chizat, Oyallon, and Bach]{chizat2019lazy}
L\'{e}na\"{\i}c Chizat, Edouard Oyallon, and Francis Bach.
\newblock \emph{On Lazy Training in Differentiable Programming}.
\newblock 2019.

\bibitem[Cohen et~al.(2021)Cohen, Kaur, Li, Kolter, and
  Talwalkar]{cohen2021EoS}
Jeremy Cohen, Simran Kaur, Yuanzhi Li, J~Zico Kolter, and Ameet Talwalkar.
\newblock Gradient descent on neural networks typically occurs at the edge of
  stability.
\newblock In \emph{International Conference on Learning Representations}, 2021.

\bibitem[Damian et~al.(2021)Damian, Ma, and Lee]{damian2021label}
Alex Damian, Tengyu Ma, and Jason~D. Lee.
\newblock Label noise {SGD} provably prefers flat global minimizers.
\newblock In \emph{Advances in Neural Information Processing Systems}, 2021.

\bibitem[Damian et~al.(2023)Damian, Nichani, and
  Lee]{damian2022selfstabilization}
Alex Damian, Eshaan Nichani, and Jason~D. Lee.
\newblock Self-stabilization: The implicit bias of gradient descent at the edge
  of stability.
\newblock In \emph{International Conference on Learning Representations}, 2023.

\bibitem[Doob(1990)]{Doob1990-jw}
J.~L. Doob.
\newblock \emph{Stochastic Processes}.
\newblock John Wiley \& Sons, 1990.

\bibitem[Dragomir et~al.(2021)Dragomir, Even, and
  Hendrikx]{pmlr-v139-dragomir21a}
Radu~Alexandru Dragomir, Mathieu Even, and Hadrien Hendrikx.
\newblock Fast stochastic {B}regman gradient methods: Sharp analysis and
  variance reduction.
\newblock In \emph{Proceedings of the 38th International Conference on Machine
  Learning}, volume 139 of \emph{Proceedings of Machine Learning Research},
  pages 2815--2825. PMLR, 18--24 Jul 2021.

\bibitem[Even and Massoulie(2021)]{pmlr-v134-even21a}
Mathieu Even and Laurent Massoulie.
\newblock Concentration of non-isotropic random tensors with applications to
  learning and empirical risk minimization.
\newblock In \emph{Proceedings of Thirty Fourth Conference on Learning Theory},
  volume 134 of \emph{Proceedings of Machine Learning Research}, pages
  1847--1886. PMLR, 15--19 Aug 2021.

\bibitem[Geiping et~al.(2022)Geiping, Goldblum, Pope, Moeller, and
  Goldstein]{geiping2021stochastic}
Jonas Geiping, Micah Goldblum, Phillip~E Pope, Michael Moeller, and Tom
  Goldstein.
\newblock Stochastic training is not necessary for generalization.
\newblock In \emph{International Conference on Learning Representations}, 2022.

\bibitem[Ghai et~al.(2020)Ghai, Hazan, and Singer]{pmlr-v117-ghai20a}
Udaya Ghai, Elad Hazan, and Yoram Singer.
\newblock Exponentiated gradient meets gradient descent.
\newblock In \emph{Proceedings of the 31st International Conference on
  Algorithmic Learning Theory}, volume 117 of \emph{Proceedings of Machine
  Learning Research}, pages 386--407. PMLR, 08 Feb--11 Feb 2020.

\bibitem[Goyal et~al.(2017)Goyal, Doll{\'a}r, Girshick, Noordhuis, Wesolowski,
  Kyrola, Tulloch, Jia, and He]{goyal2017accurate}
Priya Goyal, Piotr Doll{\'a}r, Ross Girshick, Pieter Noordhuis, Lukasz
  Wesolowski, Aapo Kyrola, Andrew Tulloch, Yangqing Jia, and Kaiming He.
\newblock Accurate, large minibatch {SGD}: Training imagenet in 1 hour.
\newblock \emph{arXiv preprint arXiv:1706.02677}, 2017.

\bibitem[Gunasekar et~al.(2017)Gunasekar, Woodworth, Bhojanapalli, Neyshabur,
  and Srebro]{gunasekar2017implicit}
Suriya Gunasekar, Blake~E Woodworth, Srinadh Bhojanapalli, Behnam Neyshabur,
  and Nati Srebro.
\newblock Implicit regularization in matrix factorization.
\newblock \emph{Advances in Neural Information Processing Systems}, 30, 2017.

\bibitem[Gunasekar et~al.(2018{\natexlab{a}})Gunasekar, Lee, Soudry, and
  Srebro]{gunasekar2018implicit_geometry}
Suriya Gunasekar, Jason Lee, Daniel Soudry, and Nathan Srebro.
\newblock Characterizing implicit bias in terms of optimization geometry.
\newblock In \emph{Proceedings of the 35th International Conference on Machine
  Learning}, volume~80 of \emph{Proceedings of Machine Learning Research},
  pages 1832--1841. PMLR, 10--15 Jul 2018{\natexlab{a}}.

\bibitem[Gunasekar et~al.(2018{\natexlab{b}})Gunasekar, Lee, Soudry, and
  Srebro]{gunasekar2018convolution}
Suriya Gunasekar, Jason~D Lee, Daniel Soudry, and Nati Srebro.
\newblock Implicit bias of gradient descent on linear convolutional networks.
\newblock In \emph{Advances in Neural Information Processing Systems},
  volume~31, 2018{\natexlab{b}}.

\bibitem[HaoChen et~al.(2021)HaoChen, Wei, Lee, and
  Ma]{haochen2020understanding}
Jeff~Z. HaoChen, Colin Wei, Jason Lee, and Tengyu Ma.
\newblock Shape matters: Understanding the implicit bias of the noise
  covariance.
\newblock In \emph{Proceedings of Thirty Fourth Conference on Learning Theory},
  volume 134 of \emph{Proceedings of Machine Learning Research}, pages
  2315--2357. PMLR, 15--19 Aug 2021.

\bibitem[He et~al.(2019)He, Liu, and Tao]{he2019controlbatchsize}
Fengxiang He, Tongliang Liu, and Dacheng Tao.
\newblock Control batch size and learning rate to generalize well: Theoretical
  and empirical evidence.
\newblock In \emph{Advances in Neural Information Processing Systems},
  volume~32, 2019.

\bibitem[Hochreiter and Schmidhuber(1997)]{Hochreiter1997flatminima}
Sepp Hochreiter and J\"{u}rgen Schmidhuber.
\newblock Flat minima.
\newblock \emph{Neural Computation}, 9\penalty0 (1):\penalty0 1--42, January
  1997.

\bibitem[Hoffer et~al.(2017)Hoffer, Hubara, and Soudry]{hoffer2017train}
Elad Hoffer, Itay Hubara, and Daniel Soudry.
\newblock Train longer, generalize better: Closing the generalization gap in
  large batch training of neural networks.
\newblock In \emph{Proceedings of the 31st International Conference on Neural
  Information Processing Systems}, page 1729–1739, 2017.

\bibitem[Jacot et~al.(2018)Jacot, Gabriel, and Hongler]{jacot2018ntk}
Arthur Jacot, Franck Gabriel, and Cl\'{e}ment Hongler.
\newblock Neural tangent kernel: Convergence and generalization in neural
  networks.
\newblock In \emph{Proceedings of the 32nd International Conference on Neural
  Information Processing Systems}, page 8580–8589, 2018.

\bibitem[Jastrzebski et~al.(2017)Jastrzebski, Kenton, Arpit, Ballas, Fischer,
  Bengio, and Storkey]{jastrzebski2017three}
Stanisław Jastrzebski, Zachary Kenton, Devansh Arpit, Nicolas Ballas, Asja
  Fischer, Yoshua Bengio, and Amos Storkey.
\newblock Three factors influencing minima in {SGD}, 2017.

\bibitem[Jastrzebski et~al.(2019)Jastrzebski, Kenton, Ballas, Fischer, Bengio,
  and Storkey]{jastrzębski2018sharpness}
Stanisław Jastrzebski, Zachary Kenton, Nicolas Ballas, Asja Fischer, Yoshua
  Bengio, and Amost Storkey.
\newblock On the relation between the sharpest directions of {DNN} loss and the
  {SGD} step length.
\newblock In \emph{International Conference on Learning Representations}, 2019.

\bibitem[Jastrzkebski et~al.(2018)Jastrzkebski, Kenton, Arpit, Ballas, Fischer,
  Bengio, and Storkey]{jastrzkebski2018width}
Stanislaw Jastrzkebski, Zachary Kenton, Devansh Arpit, Nicolas Ballas, Asja
  Fischer, Yoshua Bengio, and Amos Storkey.
\newblock Width of minima reached by stochastic gradient descent is influenced
  by learning rate to batch size ratio.
\newblock In \emph{Artificial Neural Networks and Machine Learning -- ICANN
  2018}, pages 392--402, 2018.

\bibitem[Ji and Telgarsky(2019)]{ji2018gradient}
Ziwei Ji and Matus Telgarsky.
\newblock Gradient descent aligns the layers of deep linear networks.
\newblock In \emph{International Conference on Learning Representations}, 2019.

\bibitem[Ji and Telgarsky(2020)]{ji2020directional}
Ziwei Ji and Matus Telgarsky.
\newblock Directional convergence and alignment in deep learning.
\newblock In \emph{Advances in Neural Information Processing Systems},
  volume~33, pages 17176--17186, 2020.

\bibitem[Keskar et~al.(2017{\natexlab{a}})Keskar, Mudigere, Nocedal,
  Smelyanskiy, and Tang]{keskar2016large}
Nitish~Shirish Keskar, Dheevatsa Mudigere, Jorge Nocedal, Mikhail Smelyanskiy,
  and Ping Tak~Peter Tang.
\newblock On large-batch training for deep learning: Generalization gap and
  sharp minima.
\newblock In \emph{International Conference on Learning Representations},
  2017{\natexlab{a}}.

\bibitem[Keskar et~al.(2017{\natexlab{b}})Keskar, Mudigere, Nocedal,
  Smelyanskiy, and Tang]{keskar2017on}
Nitish~Shirish Keskar, Dheevatsa Mudigere, Jorge Nocedal, Mikhail Smelyanskiy,
  and Ping Tak~Peter Tang.
\newblock On large-batch training for deep learning: Generalization gap and
  sharp minima.
\newblock In \emph{International Conference on Learning Representations},
  2017{\natexlab{b}}.

\bibitem[Kleinberg et~al.(2018)Kleinberg, Li, and Yuan]{kleinberg2018when}
Bobby Kleinberg, Yuanzhi Li, and Yang Yuan.
\newblock An alternative view: When does {SGD} escape local minima?
\newblock In \emph{Proceedings of the 35th International Conference on Machine
  Learning}, volume~80 of \emph{Proceedings of Machine Learning Research},
  pages 2698--2707. PMLR, 10--15 Jul 2018.

\bibitem[Li et~al.(2019)Li, Wei, and Ma]{li2019towards}
Yuanzhi Li, Colin Wei, and Tengyu Ma.
\newblock Towards explaining the regularization effect of initial large
  learning rate in training neural networks.
\newblock In \emph{Proceedings of the 33rd International Conference on Neural
  Information Processing Systems}, 2019.

\bibitem[Lyu and Li(2019)]{lyu2019gradient}
Kaifeng Lyu and Jian Li.
\newblock Gradient descent maximizes the margin of homogeneous neural networks.
\newblock \emph{arXiv preprint arXiv:1906.05890}, 2019.

\bibitem[Mandt et~al.(2016)Mandt, Hoffman, and Blei]{mandt2016variational}
Stephan Mandt, Matthew~D. Hoffman, and David~M. Blei.
\newblock A variational analysis of stochastic gradient algorithms.
\newblock In \emph{Proceedings of the 33rd International Conference on
  International Conference on Machine Learning - Volume 48}, page 354–363,
  2016.

\bibitem[Masters and Luschi(2018)]{masters2018revisiting}
Dominic Masters and Carlo Luschi.
\newblock Revisiting small batch training for deep neural networks.
\newblock \emph{arXiv preprint arXiv:1804.07612}, 2018.

\bibitem[Mulayoff et~al.(2021)Mulayoff, Michaeli, and Soudry]{mulayoff2021the}
Rotem Mulayoff, Tomer Michaeli, and Daniel Soudry.
\newblock The implicit bias of minima stability: A view from function space.
\newblock In \emph{Advances in Neural Information Processing Systems}, 2021.

\bibitem[Nacson et~al.(2022)Nacson, Ravichandran, Srebro, and
  Soudry]{nacson2022implicitbiasstepsize}
Mor~Shpigel Nacson, Kavya Ravichandran, Nathan Srebro, and Daniel Soudry.
\newblock Implicit bias of the step size in linear diagonal neural networks.
\newblock In \emph{Proceedings of the 39th International Conference on Machine
  Learning}, volume 162 of \emph{Proceedings of Machine Learning Research},
  pages 16270--16295. PMLR, 17--23 Jul 2022.

\bibitem[Neyshabur et~al.(2014)Neyshabur, Tomioka, and
  Srebro]{neyshabur2014search}
Behnam Neyshabur, Ryota Tomioka, and Nathan Srebro.
\newblock In search of the real inductive bias: On the role of implicit
  regularization in deep learning.
\newblock \emph{arXiv preprint arXiv:1412.6614}, 2014.

\bibitem[O'Donnell(2021)]{odonell2021boolean}
Ryan O'Donnell.
\newblock Analysis of boolean functions, 2021.

\bibitem[Orabona et~al.(2015)Orabona, Crammer, and
  Cesa-Bianchi]{orabona2013varyingmirror}
Francesco Orabona, Koby Crammer, and Nicol\`{o} Cesa-Bianchi.
\newblock A generalized online mirror descent with applications to
  classification and regression.
\newblock \emph{Mach. Learn.}, 99\penalty0 (3):\penalty0 411–435, jun 2015.

\bibitem[Pesme et~al.(2021)Pesme, Pillaud-Vivien, and
  Flammarion]{pesme2021implicit}
S.~Pesme, L.~Pillaud-Vivien, and N.~Flammarion.
\newblock Implicit bias of {SGD} for diagonal linear networks: a provable
  benefit of stochasticity.
\newblock In \emph{Advances in Neural Information Processing Systems}, 2021.

\bibitem[Pesme and Flammarion(2023)]{pesme2023saddle}
Scott Pesme and Nicolas Flammarion.
\newblock Saddle-to-saddle dynamics in diagonal linear networks.
\newblock \emph{arXiv preprint arXiv:2304.00488}, 2023.

\bibitem[Pillaud-Vivien et~al.(2022)Pillaud-Vivien, Reygner, and
  Flammarion]{pillaudvivien2022labelnoise}
L.~Pillaud-Vivien, J.~Reygner, and N.~Flammarion.
\newblock Label noise (stochastic) gradient descent implicitly solves the lasso
  for quadratic parametrisation.
\newblock In \emph{Proceedings of Thirty Fifth Conference on Learning Theory},
  volume 178 of \emph{Proceedings of Machine Learning Research}, pages
  2127--2159. PMLR, 2022.

\bibitem[{Robbins} and {Monro}(1951)]{robbins1951stochastic}
H.~{Robbins} and S.~{Monro}.
\newblock {A stochastic approxiation method}.
\newblock \emph{Ann. Math. Statist}, 22\penalty0 (3):\penalty0 400--407, 1951.

\bibitem[Smith and Le(2018)]{smith2018bayesian}
Samuel~L. Smith and Quoc~V. Le.
\newblock A {B}ayesian perspective on generalization and stochastic gradient
  descent.
\newblock In \emph{International Conference on Learning Representations}, 2018.

\bibitem[Soudry et~al.(2018)Soudry, Hoffer, Nacson, Gunasekar, and
  Srebro]{soudry2018implicit_separable}
Daniel Soudry, Elad Hoffer, Mor~Shpigel Nacson, Suriya Gunasekar, and Nathan
  Srebro.
\newblock The implicit bias of gradient descent on separable data.
\newblock \emph{J. Mach. Learn. Res.}, 19\penalty0 (1):\penalty0 2822–2878,
  jan 2018.

\bibitem[Tao(2010)]{tao_concentration_course}
Terrence Tao.
\newblock Concentration of measure.
\newblock \emph{254A, Notes 1, Blogpost}, 2010.

\bibitem[Telgarsky(2013)]{telgarsky2013margins}
Matus Telgarsky.
\newblock Margins, shrinkage, and boosting.
\newblock In \emph{International Conference on Machine Learning}, pages
  307--315. PMLR, 2013.

\bibitem[Vaskevicius et~al.(2020)Vaskevicius, Kanade, and
  Rebeschini]{NEURIPS2020_024d2d69}
Tomas Vaskevicius, Varun Kanade, and Patrick Rebeschini.
\newblock The statistical complexity of early-stopped mirror descent.
\newblock In \emph{Advances in Neural Information Processing Systems},
  volume~33, pages 253--264, 2020.

\bibitem[Va\v{s}kevi\v{c}ius et~al.(2019)Va\v{s}kevi\v{c}ius, Kanade, and
  Rebeschini]{vavskevivcius2019implicit}
Tomas Va\v{s}kevi\v{c}ius, Varun Kanade, and Patrick Rebeschini.
\newblock Implicit regularization for optimal sparse recovery.
\newblock In \emph{Proceedings of the 33rd International Conference on Neural
  Information Processing Systems}, 2019.

\bibitem[Vershynin(2018)]{vershynin_2018}
Roman Vershynin.
\newblock \emph{High-Dimensional Probability: An Introduction with Applications
  in Data Science}.
\newblock Cambridge Series in Statistical and Probabilistic Mathematics.
  Cambridge University Press, 2018.

\bibitem[Wang et~al.(2022)Wang, Chen, Zhao, and Tao]{wang2021largeLR}
Yuqing Wang, Minshuo Chen, Tuo Zhao, and Molei Tao.
\newblock Large learning rate tames homogeneity: Convergence and balancing
  effect.
\newblock In \emph{International Conference on Learning Representations}, 2022.

\bibitem[Wojtowytsch(2021)]{wojtowytsch2021stochastic}
Stephan Wojtowytsch.
\newblock Stochastic gradient descent with noise of machine learning type. part
  {II}: Continuous time analysis.
\newblock \emph{arXiv preprint arXiv:2106.02588}, 2021.

\bibitem[Woodworth et~al.(2020)Woodworth, Gunasekar, Lee, Moroshko, Savarese,
  Golan, Soudry, and Srebro]{pmlr-v125-woodworth20a}
Blake Woodworth, Suriya Gunasekar, Jason~D. Lee, Edward Moroshko, Pedro
  Savarese, Itay Golan, Daniel Soudry, and Nathan Srebro.
\newblock Kernel and rich regimes in overparametrized models.
\newblock In \emph{Proceedings of Thirty Third Conference on Learning Theory},
  volume 125 of \emph{Proceedings of Machine Learning Research}, pages
  3635--3673. PMLR, 09--12 Jul 2020.

\bibitem[Wu and Rebeschini(2020)]{NEURIPS2020_e9470886}
Fan Wu and Patrick Rebeschini.
\newblock A continuous-time mirror descent approach to sparse phase retrieval.
\newblock In \emph{Advances in Neural Information Processing Systems},
  volume~33, pages 20192--20203, 2020.

\bibitem[Wu et~al.(2021)Wu, Zou, Braverman, and Gu]{wu2021direction}
Jingfeng Wu, Difan Zou, Vladimir Braverman, and Quanquan Gu.
\newblock Direction matters: On the implicit bias of stochastic gradient
  descent with moderate learning rate.
\newblock In \emph{International Conference on Learning Representations}, 2021.

\bibitem[Wu et~al.(2018)Wu, Ma, and E]{wu2019howsgd}
Lei Wu, Chao Ma, and Weinan E.
\newblock How {SGD} selects the global minima in over-parameterized learning: A
  dynamical stability perspective.
\newblock In \emph{Advances in Neural Information Processing Systems},
  volume~31, 2018.

\bibitem[Zeiler(2012)]{zeiler2012adadelta}
Matthew~D Zeiler.
\newblock Adadelta: an adaptive learning rate method.
\newblock \emph{arXiv preprint arXiv:1212.5701}, 2012.

\bibitem[Zhang et~al.(2017)Zhang, Bengio, Hardt, Recht, and
  Vinyals]{zhang2016understanding}
Chiyuan Zhang, Samy Bengio, Moritz Hardt, Benjamin Recht, and Oriol Vinyals.
\newblock Understanding deep learning requires rethinking generalization.
\newblock In \emph{International Conference on Learning Representations}, 2017.

\bibitem[Zhu et~al.(2023)Zhu, Wang, Wang, Zhou, and Ge]{zhu2022eos}
Xingyu Zhu, Zixuan Wang, Xiang Wang, Mo~Zhou, and Rong Ge.
\newblock Understanding edge-of-stability training dynamics with a minimalist
  example.
\newblock \emph{International Conference on Learning Representations}, 2023.

\end{thebibliography}
%%%%%%%%%%%%%%%%%%%%%%%%%%%%%%%%%%%%%%%%%%%%%%%%%%%%%%%%%%%%
\appendix
\onecolumn

\paragraph{Organisation of the Appendix.}
\begin{enumerate}
    \item In \Cref{app:sec:add_exp}, we provide additional experiments for uncentered data as well as on the behaviour of the sharpness and trace of the Hessian along the trajectory of the iterates. We finally provide an experiment highlighting  the EoS regime for SGD.
    \item In \Cref{sec:app:ingredients}, we prove that $(\beta_k)$ follows a Mirror descent recursion with varying potentials. We explicit these potentials and discuss some consequences.
    \item In \Cref{sec:app:param} we prove that (S)GD on the $\frac{1}{2}(w_+^2-w_-^2)$ and $u\odot v$ parametrisations with suitable initialisations lead to the same sequence $(\beta_k)$.
    \item In \Cref{sec:app:example}, we show that the hypentropy $\psi_\balpha$ converges to a \textbf{weighted}-$\ell_1$-norm when $\balpha$ converges to  $0$ non-uniformly. We then discuss the effects of this \textbf{weighted} $\ell_1$-norm for sparse recovery.
    \item In \Cref{app:sec:descentlemmas}, we provide our descent lemmas for mirror descent with varying potentials and prove the boundedness of the iterates.
    \item In \Cref{app:sec:proof_main}, we prove our main results: \cref{thm:implicit_bias} and \cref{thm:conv_iterates}, as well as quantitative convergence (\Cref{prop:conv_quantitative}).
    \item In \Cref{sec:app:misc}, we prove the lemmas and propositions given in the main text.
    \item In \Cref{sec:app:technical}, we provide technical lemmas used throughout the proof of \cref{thm:implicit_bias} and \cref{thm:conv_iterates}.
    \item In \Cref{app:concentration}, we provide concentration results for random matrices and random vectors, used to estimate with high probability (w.r.t. the dataset) quantities related to the data.
\end{enumerate}

\newpage

\section{Additional experiments and results}\label{app:sec:add_exp}

\subsection{Uncentered data \label{app:uncentered}}

When the data is uncentered, the discussion and the conclusion for GD are somewhat different. This paragraph is motivated by the observation of \citet{nacson2022implicitbiasstepsize} who notice that GD with large stepsizes helps to recover low $\ell_1$ solutions for uncentered data (\cref{fig:uncentered}). We make the following assumptions on the uncentered inputs.
\begin{assumption}\label{ass:RIP2}
There exist $\mu \in \R^d$ and $\delta, c_0, c_1, c_2 > 0 $ such that for all $s$-sparse vectors $\beta$ verifying $\langle\mu,\beta\rangle\geq  c_0\NRM{\beta}_\infty\NRM{\mu}_\infty$, there exists $\varepsilon \in \R^d$ such that $(X^\top X) \beta = \langle \beta, \mu \rangle \mu + \varepsilon$  where $\Vert \varepsilon \Vert_2 \leq \delta \Vert \beta \Vert_2$ and 
$ c_1 \langle \beta, \mu \rangle^2 \mu^2 \leq \frac{1}{n} \sum_i  x_i^2 \langle x_i, \beta \rangle^2 \leq c_2  \langle \beta, \mu \rangle^2 \mu^2$.
\end{assumption}
% We then assume that for some ground truth vector $\beta^\star_{\rm sparse}$ that we assume to be $s$-sparse for some $s\in[d]$, the outputs are generated as $y_i=\langle\beta^\star_{\rm sparse}, x_i\rangle$ for $i\in[n]$, so that the set of interpolators $\cS$ is non-empty. The test loss thus writes as $\cL_{\rm test}(\beta)=\frac{\sigma^2}{2}\NRM{\beta-\beta^\star_{\rm sparse}}^2_2$, and the optimizer aims at retrieving this sparse vectors from the $n$ measurements $(x_i,y_i)$: this is the classical sparse regression setting.
%\mathieu{I think we need to specify this setting and the ground truth vector, we will need this for the shape of $\balpha_\infty$ discussion, to argue that the wrong coordinates of $\balpha_\infty$ are killed, for GD during EoS. But can be removed if you think it is too precise.}
\cref{ass:RIP2} is not restrictive and holds with high probability for  $\mathcal{N}(\mu \one, \sigma^2 I _d)$ inputs when  $\mu > \!\!> \sigma \mathbf{1}$ (see  \cref{lemma:RIP_uncentered} in Appendix). The following lemma characterises the initial shape of SGD and GD gradients for uncentered data.
%shows that when the data are uncentered, the initial gradients of SGD and GD are homogeneous.
\begin{proposition}[Shape of the (stochastic) gradient at initialisation]\label{lemma:shape_SG_init_uncentered}
Under \cref{ass:RIP2} and if $\langle\mu,\beta_\sparse^\star\rangle\geq c_0\NRM{\beta}_\infty\NRM{\mu}_\infty$, the squared full batch gradient and the expected stochastic gradient descent at initialisation satisfy, for some $\eps$ satisfying $\NRM{\eps}_\infty<\!\!<\NRM{\beta_\sparse}_2$:
\begin{align}
    \nabla \cL & (\beta_0) = \langle \beta^\star_{\rm sparse},\mu\rangle^2 \mu^2 + \varepsilon \,,\label{eq:nabla0_GD_un}\\
\mathbb{E}_{i \sim \rm{Unif}([n])} & [\nabla \cL_{i}(\beta_0)^2] 
= \Theta \Big( \langle \beta^\star_{\rm sparse},\mu\rangle^2 \mu ^2  \Big)\,.\label{eq:nabla0_SGD_un}
\end{align}
\end{proposition}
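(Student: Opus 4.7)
The plan is to evaluate both objects directly at $\beta_0=\mathbf{0}$, use the interpolation condition $y_i=\langle\beta^\star_\sparse,x_i\rangle$, and then invoke the two clauses of Assumption~\ref{ass:RIP2} applied with $\beta=\beta^\star_\sparse$; this is admissible since $\beta^\star_\sparse$ is $s$-sparse and, by hypothesis, satisfies $\langle\mu,\beta^\star_\sparse\rangle\geq c_0\NRM{\beta^\star_\sparse}_\infty\NRM{\mu}_\infty$. From the identities $\nabla\cL(\beta)=\frac{1}{n}\sum_i(\langle\beta,x_i\rangle-y_i)x_i$ and $\nabla\cL_i(\beta)=(\langle\beta,x_i\rangle-y_i)x_i$, evaluation at $\beta_0=\mathbf{0}$ yields $\nabla\cL(\beta_0)=-(X^\top X)\beta^\star_\sparse$ and $\nabla\cL_i(\beta_0)=-\langle x_i,\beta^\star_\sparse\rangle x_i$, which are the common starting points for the two parts.

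For the SGD identity~\eqref{eq:nabla0_SGD_un}, squaring $\nabla\cL_i(\beta_0)$ coordinate-wise and averaging over the uniform choice of $i\in[n]$ gives $\mathbb{E}_i[\nabla\cL_i(\beta_0)^2]=\frac{1}{n}\sum_i x_i^2\langle x_i,\beta^\star_\sparse\rangle^2$. The second clause of Assumption~\ref{ass:RIP2} then directly sandwiches this quantity as $c_1\langle\beta^\star_\sparse,\mu\rangle^2\mu^2\leq \mathbb{E}_i[\nabla\cL_i(\beta_0)^2]\leq c_2\langle\beta^\star_\sparse,\mu\rangle^2\mu^2$, which is exactly the $\Theta$-bound claimed.

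For the GD identity~\eqref{eq:nabla0_GD_un}, the first clause of Assumption~\ref{ass:RIP2} provides $(X^\top X)\beta^\star_\sparse=\langle\beta^\star_\sparse,\mu\rangle\mu+\varepsilon_1$ with $\NRM{\varepsilon_1}_2\leq \delta\NRM{\beta^\star_\sparse}_2$, so $\nabla\cL(\beta_0)=-\langle\beta^\star_\sparse,\mu\rangle\mu-\varepsilon_1$. Squaring componentwise yields
\[
\nabla\cL(\beta_0)^{2}=\langle\beta^\star_\sparse,\mu\rangle^{2}\mu^{2}+2\langle\beta^\star_\sparse,\mu\rangle\,\mu\odot\varepsilon_1+\varepsilon_1^{2},
\]
so the proposition holds with $\varepsilon=2\langle\beta^\star_\sparse,\mu\rangle\,\mu\odot\varepsilon_1+\varepsilon_1^{2}$, which is the announced decomposition up to an unambiguous sign coming from squaring.

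The main (and essentially only) step requiring care is verifying that this $\varepsilon$ is negligible compared to the leading term $\langle\beta^\star_\sparse,\mu\rangle^2\mu^2$. Using $\NRM{\varepsilon_1}_\infty\leq \NRM{\varepsilon_1}_2\leq \delta\NRM{\beta^\star_\sparse}_2$, I will bound the cross term by $\NRM{\mu\odot\varepsilon_1}_\infty\leq \delta\NRM{\mu}_\infty\NRM{\beta^\star_\sparse}_2$ and the quadratic term by $\NRM{\varepsilon_1^{2}}_\infty\leq \delta^2\NRM{\beta^\star_\sparse}_2^{2}$. In the data regime of interest for uncentered inputs, namely the setting of Lemma~\ref{lemma:RIP_uncentered} where $\mu\gg \sigma\mathbf{1}$, the constant $\delta$ can be made arbitrarily small relative to $|\langle\beta^\star_\sparse,\mu\rangle|\NRM{\mu}_\infty$, yielding $\NRM{\varepsilon}_\infty\ll\NRM{\beta^\star_\sparse}_2$ as stated. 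Everything else in the proof reduces to a direct application of the two clauses of the uncentered RIP-type Assumption~\ref{ass:RIP2}.
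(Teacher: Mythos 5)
Your proposal is correct and follows essentially the same route as the paper: the paper's proof simply states that the argument proceeds as in the centered case (evaluate the gradients at $\beta_0=\mathbf{0}$, use $y_i=\langle\beta^\star_{\rm sparse},x_i\rangle$, and apply the two clauses of the RIP-type assumption to $\beta^\star_{\rm sparse}$, bounding the cross and squared error terms via $\NRM{\varepsilon_1}\leq\delta\NRM{\beta^\star_{\rm sparse}}_2$). Your write-up is in fact more explicit than the paper's, including the sign bookkeeping from squaring and the componentwise bound on the perturbation.
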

In this case the initial gradients of SGD and of GD \textbf{are both homogeneous}, explaining the behaviours of gradient descent in \cref{fig:uncentered} (App.~\ref{app:sec:add_exp}): large stepsizes help in the recovery of the sparse solution in the presence of uncentered data, as opposed to centered data.
Note that for decentered data with a $\mu\in\R^d$ orthogonal to $\beta^\star_\sparse$, there is no effect of decentering on the recovered solution.
If the support of $\mu$ is the same as that of $\beta_\sparse^\star$, the effect is detrimental and the same discussion as in the centered data case applies.

\Cref{fig:uncentered}: for uncentered data the solutions of GD and SGD have similar behaviours, corroborating \Cref{lemma:shape_SG_init_uncentered}.

\begin{figure}[h]
\centering
\vspace*{-12.5pt}
\begin{minipage}[c]{.6\linewidth}
\hspace*{-25pt}
\vspace*{-10.5pt}
\includegraphics[trim={0cm 0 0 0cm}, clip, width=\linewidth]{}
  \end{minipage}
  % \hspace*{-15pt}
%    \begin{minipage}[c]{.5\linewidth}
%     % \vspace*{-7.5pt}
%     % \hspace*{-5pt}
%     \vspace*{5pt}
% \includegraphics[trim={0cm 0 0 0cm}, clip, width=\linewidth]{Figures/uncentered_test_vs_stepsize.pdf}
%    \end{minipage}
    \hspace*{-25pt}
  \caption{Noiseless sparse regression with a $2$-layer DLN with uncentered data $x_i\sim\cN(\mu \mathbf{1}, I_d)$ where $\mu = 5$. 
  All the stepsizes lead to convergence to a global solution and the solutions of SGD and GD have similar behaviours, corroborating \cref{lemma:shape_SG_init_uncentered}. The setup corresponds to $(n, d, s, \alpha) = (20, 30, 3, 0.1)$. \label{fig:uncentered}} % \textit{Left:} centered data, large stepsizes benefit SGD but not for GD.  \label{fig:main_theorem}}
     % \label{fig:main_theorem}
\end{figure}

\subsection{Behaviour of the maximal value and trace of the hessian}
\label{app:hessian}

Here in \cref{fig:flatness}, we provide some additional experiments on the behaviour of: (1) the maximum eigenvalue of the hessian $\nabla^2 F(w_\infty^\gamma)$ at the convergence of the iterates of SGD and GD (2) the trace of hessian at the convergence of the iterates. As is clearly observed, increasing the stepsize for GD leads to a `flatter' minimum in terms of the maximum eigenvalue of the hessian, while increasing the stepsize for SGD leads to a `flatter' minimum in terms of its trace. These two solutions have very different structures. Indeed from the value of the hessian \cref{eq:hessian_F} at a global solution, and (very) roughly assuming that `$X^\top X = I_d$' and that `$\balpha \sim 0 $' (pushing the EoS phenomenon), one can see that minimising $\lambda_{\rm max}(\nabla^2 F(w))$ under the constraints $X(w_+^2 - w_-^2) = y$ and $w_+ \odot w_- = 0$ is equivalent to minimising $\Vert \beta \Vert_\infty$ under the constaint $X \beta = y$. On the other hand minimising the trace of the hessian is equivalent to minimising the $\ell_1$-norm.

\begin{figure}[ht]
\centering
\begin{minipage}[c]{.5\linewidth}
\hspace*{-15pt}
\includegraphics[trim={0 0 0 0}, width=\linewidth]{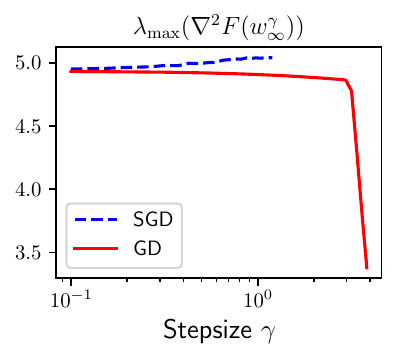}
   \end{minipage}
   \hspace*{-15pt}
   \begin{minipage}[c]{.5\linewidth}
    \vspace*{-6.5pt}
\includegraphics[width=0.99\linewidth]{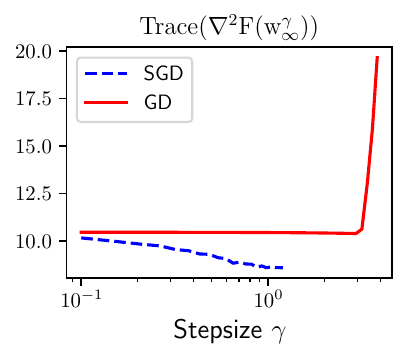}
   \end{minipage}
    \hspace*{-15pt}
   \caption{ Noiseless sparse regression setting. Diagonal linear network. Centered data. Behaviour of $2$ different types of flatness of the recovered solution by SGD and GD depending on the stepsize. The setup corresponds to $(n, d, s, \alpha) = (20, 30, 3, 0.1)$.\label{fig:flatness}}
     % \label{fig:main_theorem}
\end{figure}

\subsection{Edge of Stability for SGD}

\begin{figure}[ht]
\centering
\includegraphics[trim={0 0 0 0}, width=0.5\linewidth]{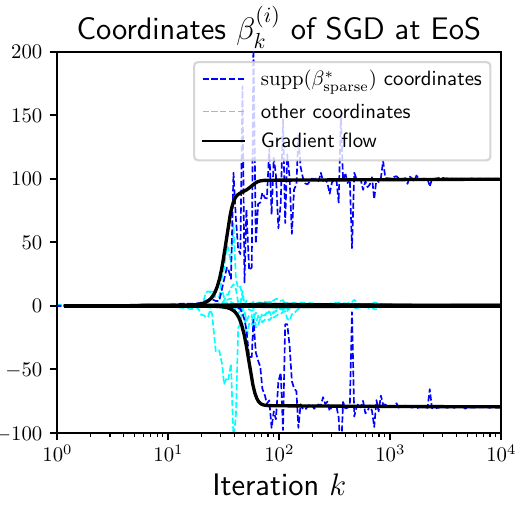}
   \caption{SGD at the edge of stability: all coordinates fluctuate, and the sparse solution is recovered.
   As opposed to GD at the EoS, since all coordinates fluctuate, the coordinates to recover are not more penalised than the others.
   \label{fig:eos_sgd}}
     % \label{fig:main_theorem}
\end{figure}

\section{Main ingredients behind the proof of \cref{thm:implicit_bias} and \cref{thm:conv_iterates}}\label{sec:app:ingredients}

In this section, we show that the iterates $(\beta_k)_{k\geq0}$ follow a \emph{stochastic mirror descent} with \emph{varying potentials}. At the core of our analysis, this result enables us to \emph{(i)} prove convergence of the iterates to an interpolator and \emph{(ii)} completely characterise the inductive bias of the algorithm (SGD or GD).
Unveiling a mirror-descent like structure to characterise the implicit bias of a gradient method is classical. For gradient flow over diagonal linear networks \citep{pmlr-v125-woodworth20a}, the iterates follow a mirror flow with respect to the hypentropy \eqref{eq:implicit_opt_hypentropy} with parameter $\alpha$ the initialisation scale, while for stochastic gradient flow \citep{pesme2021implicit} the mirror flow has a continuously evolving potential. 

\subsection{Mirror descent and varying potentials}

We recall that for a strictly convex reference function $h:\R^d\to\R$, the (stochastic) mirror descent iterates algorithm write as \citep{bauschke2017descent,pmlr-v139-dragomir21a}, where the minimum is assumed to be attained over $\R^d$ and unique:
\begin{equation}\label{eq:md}
    \beta_{k+1}=\argmin_{\beta\in\R^d}\set{\eta_k\langle g_k,\beta\rangle + D_h(\beta,\beta_k)}\,,
\end{equation}
for stochastic gradients $g_k$, stepsize $\gamma_k\geq0$, and $D_h(\beta,\beta')=h(\beta)-h(\beta')-\langle\nabla h(\beta'),\beta-\beta'\rangle$ is the Bregman divergence associated to $h$. Iteration \eqref{eq:md} can also be cast as
\begin{equation}\label{eq:md2}
    \nabla h(\beta_{k+1})=\nabla h(\beta_k)-\gamma_k g_k\,.
\end{equation}
Now, let $(h_k)$ be strictly convex reference functions $\R^d\to \R$. Whilst in continuous time, there is only one natural way to extend mirror flow to varying potentials, in discrete time the varying potentials can be incorporated in \eqref{eq:md} (replacing $h$ by $h_k$ and leading to $\nabla h_k(\beta_{k+1})=\nabla h_k(\beta_k)-\gamma_k g_k$), the mirror descent with varying potentials we study in this paper incorporates $h_{k+1}$ and $h_k$ in \eqref{eq:md2}. 
The iterates are thus defined as through:
\begin{equation*}
    \beta_{k+1}=\argmin_{\beta\in\R^d}\set{\eta_k\langle g_k,\beta\rangle + D_{h_{k+1},h_k}(\beta,\beta_k)}\,,
\end{equation*}
where $D_{h_{k+1},h_k}(\beta,\beta')=h_{k+1}(\beta)-h_k(\beta')-\langle\nabla h_k(\beta'),\beta-\beta'\rangle$,
a recursion that can also be cast as:
\begin{equation*}
       \nabla h_{k+1}(\beta_{k+1}) = \nabla h_{k}(\beta_{k}) - \gamma_k g_k\,.
\end{equation*}
To derive convergence of the iterates, we prove analogs to classical mirror descent lemmas, generalised to time-varying potentials.

\subsection{The iterates $(\beta_k)$ follow a stochastic mirror descent with varying potential recursion}\label{app:sec:tvMD}

% \subsection{Time-varying stochastic mirror descent: proof of Proposition~\ref{prop:tv_md}}

In this section we show and prove that the iterates $(\beta_k)_k$ follow a stochastic mirror descent with varying potentials. Before stating the proposition, we recall the definition of the potentials. To do so we introduce several quantities. 

% We now explicit the reference functions $(h_k)_{k\geq0}$, that are related to the hyperbolic entropy of scale $\balpha_k$ with a translation $\phi_k$, where $\balpha_k$ and $\phi_k$ are defined below. 

Let $q,q_\pm:\R\to\R\cup\set{\infty}$ be defined as:
\begin{align*}
& q_\pm(x)= \mp 2 x - \ln \big ( (1 \mp x)^2 )\, ,\\
& q(x)=  \frac{1}{2} (q_+(x)+ q_-(x)) = - \frac{1}{2} \ln\big((1-x^2)^2\big)\,,
\end{align*}
with  the convention that $q(1) = \infty$.
Notice that $q(x) \geq 0$ for $|x| \leq \sqrt{2}$ and $q(x)<0$ otherwise.
For the iterates $\beta_k = u_k \odot v_k \in \R^d$, we recall the definition of the following quantities:
\begin{align*}
&\balpha_{\pm, k} = \balpha  \exp( - \frac{1}{2} \sum_{i = 0}^{k-1} q_\pm( \gamma_\ell \nabla \cL_{\cB_\ell}(\beta_\ell)  )  ) \in \R_{>0}^d \,,\\
&\balpha^2_k = \balpha_{+, k} \odot \balpha_{-, k}% = \alph^2 \cdot \exp\left(- \sum_{i = 0}^{k-1}  \big [ q(\gamma_i\nabla\cL(\beta_i)) + q(-\gamma_i\nabla\cL(\beta_i)) \big ]  \right) \in \R^d 
\,,\\
&\phi_k = \frac{1}{2} \argsinh \big (  \frac{\balpha_{+, k}^2 - \balpha_{-, k}^2 }{2 \balpha_{k}^2 } \big ) \in \R^d\,.
\end{align*}
Finally for $k \geq 0$, we define the potentials  $(h_k : \R^d \to \R)_{k\geq0}$ as:
\begin{equation}
\label{eq:def_hk}
    h_k(\beta) = \psi_{\balpha_k}(\beta) - \langle \phi_k, \beta \rangle\,,
\end{equation}
where $\psi_{\balpha_k}$ is the hyperbolic entropy defined in \eqref{eq:implicit_opt_hypentropy} of scale $\balpha_k$:
\[\psi_{\balpha_k}(\beta) = \frac{1}{2} \sum_{i=1}^d \Big ( \beta_i \mathrm{arcsinh}( \frac{\beta_i}{\alpha_{k, i}^2} ) \! -\! \sqrt{\beta_i^2 + \alpha_{k, i}^4} +  \alpha_{k, i}^2 \Big )\]
where $\alpha_{k, i}$ corresponds to the $i^{th}$ coordinate of the vector $\balpha_k$.

Now that all the relevant quantities are define, we can state the following proposition which explicits the time-varying stochastic mirror descent followed by $(\beta_k)_k$

\begin{restatable}{proposition}{tvSMD}\label{prop:tv_md}
%Consider the iterates $(\beta_k)_k \coloneqq (u_k \cdot v_k)_k$ from \cref{eq:GD}. Then, provided that for all $k$, $\NRM{2\gamma_k\nabla \cL(\beta_k)}_\infty<1$, 
The iterates $(\beta_k=u_k\odot v_k)_{k\geq0}$ from \Cref{eq:SGD_recursion} satisfy the Stochastic Mirror Descent recursion with varying potentials $(h_k)_k$:
\begin{align}
\label{eq:time_varying_MD}
   \nabla h_{k+1}(\beta_{k+1}) = \nabla h_{k}(\beta_{k}) - \gamma_k \nabla \cL_{\cB_k}(\beta_k)\,,
\end{align}
where $h_k:\R^d\to\R$ for $k\geq 0$ are defined \Cref{eq:def_hk}. Since $\nabla h_0(\beta_0) = 0$ we have:
\[ \nabla h_k(\beta_k) \in \mathrm{span}(x_1, \dots, x_n)\] 
\end{restatable}

% A natural and crucial consequence of Proposition~\ref{prop:tv_md} is the following corollary, that characterises the limit, and that holds irrespectively of the structure of the stochastic gradients, as long as $g_k\in\Span(x_1,\ldots,x_n)$ holds for all $k$ (as is the case for minibatch SGD, GD, or label noise SGD).

% \begin{restatable}{corollary}{corr}
% %Assume that the iterates $\beta_k$ are bounded: $\sup_k \Vert \beta_k \Vert_\infty \leq B$ \Snote{also need the limit to be in this ball}.Then for stepsizes $\gamma_k \leq \frac{c}{L B}$, 
% Assume that the iterates converge to some interpolator $\beta^\star_\infty$ and that there exist $\alph_\infty, \phi_\infty \in \R^d$ such that $\balpha_k\to\alpha_\infty$ and $\phi_k\to\phi_\infty$. Then, $\beta^\star_\infty$ is uniquely defined by the following implicit regularization problem:
% \begin{equation*}
%             \beta^\star_{\infty} = \underset{\beta^\star \in S}{\argmin} \  D_{\psi_{\alpha_\infty}}(\beta^\star,\tilde\beta_0)\,,
% \end{equation*}
% where $\tilde\beta_0=2\alpha_\infty^2\sinh\big(2\phi_\infty\big) $.
% \end{restatable}

% This result is directly obtained by noticing that for all $k\geq 0$, we have $\nabla h_k(\beta_k)\in\Span(x_1,\ldots,x_n)$, the convergence of the iterates and of the reference functions being a consequence of the assumptions made in the corollary.

\begin{proof}
Using Proposition~\ref{prop:equivalence_param}, we study the $\half(w_+^2-w_-^2)$ parametrisation instead of the $u \odot v$, indeed this is the natural parametrisation to consider when doing the calculations as it ``separates" the recursions on $w_+$ and $w_-$.

Let us focus on the recursion of $w_+$:
\begin{align*}
    w_{+, k+1} &=  (1 - \gamma_k \nabla \cL_{\cB_k}(\beta_k)) \cdot w_{+, k}\,.
\end{align*}
We have:
\begin{align*}
    w_{+, k+1}^2 &=  (1 - \gamma_k \nabla \cL_{\cB_k}(\beta_k))^2 \cdot w_{+, k}^2 \\
    &= \exp{ (\ln (( 1 - \gamma_k \nabla \cL_{\cB_k}(\beta_k))^2 ) )} \cdot  w_{+, k}^2\,,
\end{align*}
with the convention that $\exp( \ln(0)) = 0$. This leads to:
\begin{align*}
    w_{+, k+1}^2 
    &= \exp \big ( - 2\gamma_k \nabla \cL_{\cB_k}(w_k) +  2\gamma_k \nabla \cL_{\cB_k}(\beta_k) + \ln ( (1 - \gamma_k \nabla \cL_{\cB_k}(\beta_k))^2 )  \big) \cdot  w_{+, k}^2 \\
    &= \exp \big ( - 2\gamma_k \nabla \cL_{\cB_k}(\beta_k) - q_+(\gamma_k \nabla \cL_{\cB_k}(\beta_k) ) \big ) \cdot  w_{+, k}^2 \,,
\end{align*}
% where $q_+$ verifies $(1-x)^2=e^{-2x}e^{q_+(x)}$ 
since $q_+(x)= - 2x - \ln((1-x)^2)$.
Expanding the recursion and using that $ w_{+, k=0}$ is initialised at $ w_{+, k=0}=\balpha$, we thus obtain:
\begin{align*}
    w_{+, k}^2  &= \balpha^2  \exp( - \sum_{\ell = 0}^{k-1} q_+( \gamma_\ell \nabla \cL_{\cB_\ell}(\beta_\ell)  )  )  \exp{( - 2 \sum_{\ell = 0}^{k-1}  \gamma_\ell \nabla \cL_{\cB_\ell}(\beta_\ell) ) } \\
    &= \balpha_{+, k}^2  \exp{( - 2 \sum_{\ell = 0}^{k-1}  \gamma_\ell \nabla \cL_{\cB_\ell}(\beta_\ell) ) } \,,
\end{align*}
where we recall that $\balpha_{\pm, k}^2 = \balpha^2  \exp( - \sum_{\ell = 0}^{k-1} q_\pm( \gamma_\ell g_\ell  )  )$.
One can easily check that we similarly get:
\begin{align*}
    w_{-, k}^2  &= \balpha_{-, k}^2  \exp{( + 2 \sum_{\ell = 0}^{k-1}  \gamma_\ell \nabla \cL_{\cB_\ell}(\beta_\ell) ) } \,,
\end{align*}
leading to:
\begin{align*}
    \beta_k  &= \frac{1}{2} (w_{+, k}^2 - w_{-, k}^2) \\
    &= \frac{1}{2}  \balpha_{+, k}^2  \exp{( - 2 \sum_{\ell = 0}^{k-1}  \gamma_\ell \nabla \cL_{\cB_\ell}(\beta_\ell) ) } - \frac{1}{2}  \balpha_{-, k}^2  \exp{( + 2 \sum_{\ell = 0}^{k-1}  \gamma_\ell \nabla \cL_{\cB_\ell}(\beta_\ell) ) } \,.
\end{align*}
Using Lemma~\ref{lem:argsh}, the previous equation can be simplified into: 
\begin{align*}
    \beta_k   &= \balpha_{+, k}  \balpha_{-, k}  \sinh{ \Big ( - 2 \sum_{\ell = 0}^{k-1}  \gamma_\ell \nabla \cL_{\cB_\ell}(\beta_\ell) + \argsinh \big (  \frac{\balpha_{+, k}^2 - \balpha_{-, k}^2 }{2 \balpha_{+, k}  \balpha_{-, k}  } \big ) } \Big ) \,,
\end{align*}
which writes as:
\begin{align*}
    \frac{1}{2} \argsinh \big ( \frac{\beta_k}{ \balpha_k^2} \big) - \phi_k =  - \sum_{\ell = 0}^{k-1}  \gamma_\ell \nabla \cL_{\cB_\ell}(\beta_\ell) \in \mathrm{span}(x_1, \dots, x_n)\,,
\end{align*}
where $\phi_k = \frac{1}{2} \argsinh \big (  \frac{\balpha_{+, k}^2 - \balpha_{-, k}^2 }{2 \balpha_{k}^2 } \big )$, $\balpha^2_k = \balpha_{+, k} \odot \balpha_{-, k}$ and
since the potentials $h_k$ are defined in \cref{eq:def_hk} as $h_k=\psi_{\balpha_k}  - \langle \phi_k,\cdot\rangle$ with
\begin{equation}
    \psi_\balpha(\beta) = \frac{1}{2} \sum_{i=1}^d \Big ( \beta_i \mathrm{arcsinh}( \frac{\beta_i}{\balpha_i^2} ) \ - \sqrt{\beta_i^2 + \balpha_i^4} +  \balpha_i^2 \Big )
\end{equation}
specifically such that $\nabla h_k(\beta_k) = \frac{1}{2} \argsinh \big ( \frac{\beta_k}{ \balpha_k^2} \big) - \phi_k$. 
Hence,
\begin{equation*}
   \nabla h_k(\beta_k)=\sum_{\ell<k} \gamma_\ell \nabla \cL_{\cB_\ell}(\beta_\ell)  \,,
\end{equation*}
so that:
\begin{align*}
    \nabla h_{k+1}(\beta_{k+1}) =  \nabla h_k(\beta_k)  - \gamma_k \nabla \cL_{\cB_k}(\beta_k)\,,
\end{align*}
which corresponds to a Mirror Descent with varying potentials $(h_k)_k$.
\end{proof}

% which means that $\balpha_k$ is decreasing coordinate wise as long as $|\gamma_\ell \nabla \cL_{\cB_k}(\beta_\ell) | \leq 1$ for all $\ell \geq 0$, since $\alpha_{k+1}^2=\balpha_k^2e^{-q(\gamma_kg_k)}$, and $q(x)\geq0$ for $|x|\leq \sqrt{2}$.

\section{Equivalence of the $u \odot v$ and $\frac{1}{2} (w_+^2-w_-^2)$ parametrisations}\label{sec:app:param}

We here prove the equivalence between the $\frac{1}{2} (w_+^2-w_-^2)$ and $u\odot v$ parametrisations, \textbf{that we use throughout the proofs in the Appendix.}

\begin{restatable}{proposition}{propequivparam}
\label{prop:equivalence_param}
Let $(\beta_k)_{k\geq 0}$ and $(\beta'_k)_{k\geq0}$ be respectively generated by stochastic gradient descent on the $u\odot v$ and $\half(w_+^2-w_-^2)$ parametrisations:
\begin{equation*}
        (u_{k+1},v_{k+1})=(u_k,v_k)-\gamma_k \nabla_{u,v}\big( \cL_{\cB_k}(u\odot v)\big)(u_k,v_k)\,,
\end{equation*}
and 
\begin{equation*}
        w_{\pm,k+1}=w_{\pm,k}-\gamma_k \nabla_{w_\pm}\big( \cL_{\cB_k}(\half(w_+^2-w_-^2))\big)(w_{+,k},w_{-,k})\,,
\end{equation*}
initialised as $u_0=\sqrt{2} \balpha,v_0=0$ and $w_{+,0}=w_{-,0}=\balpha$. Then for all $k\geq 0$, we have $\beta_k=\beta'_k$.
\end{restatable}

% \propequivparam*

\begin{proof}
We have:
\begin{equation*}
      w_{\pm,0}=\balpha\,,\quad w_{\pm, {k+1}} =(1 \mp \gamma_k \nabla \cL_{\cB_k}(\beta'_k)) w_{\pm, k}\,,
\end{equation*}
and 
\begin{equation*}
   u_0=\sqrt{2} \balpha\,,\quad v_0=0\,,\quad u_{k+1}= u_k - \gamma_k \nabla \cL_{\cB_k}(\beta_k) v_k\,,\quad v_{k+1}= v_k - \gamma_k \nabla \cL(\beta_k) u_k\,.
\end{equation*}
Hence,
\begin{equation*}
    \beta_{k+1}= (1+\gamma_k^2\nabla\cL(\beta_k)^2)\beta_k - \gamma_k (u_k^2+v_k^2)\nabla \cL_{\cB_k}(\beta_k)\,,
\end{equation*}
and
\begin{equation*}
    \beta'_{k+1}= (1+\gamma_k^2\nabla\cL_{\cB_k}(\beta'_k)^2)\beta'_k - \gamma_k (w_{+,k}^2+w_{-,k}^2)\nabla \cL_{\cB_k}(\beta'_k)\,.
\end{equation*}
Then, let $z_k=\frac{1}{2}(u_k^2-v_k^2)$ and $z'_k=w_{+,k}w_{-k}$.
We have $z_0=\balpha^2$, $z'_0=\balpha^2$ and:
\begin{equation*}
    z_{k+1}=(1-\gamma_k^2\nabla\cL_{\cB_k}(\beta_k)^2)z_k\,,\quad z'_{k+1}=(1-\gamma_k^2\nabla\cL_{\cB_k}(\beta'_k)^2)z'_k\,.
\end{equation*}
Using $a^2+b^2=\sqrt{(2ab)^2+(a^2-b^2)^2}$ for $a,b\in\R$, we finally obtain that:
\begin{equation*}
    u_k^2+v_k^2 = \sqrt{(2\beta_k)^2+(2z_k)^2}\,,\quad w_{+,k}^2+w_{-,k}^2 = \sqrt{(2\beta'_k)^2+(2z'_k)^2}\,.
\end{equation*}
We conclude by observing that $(\beta_k,z_k)$ and $(\beta'_k,z'_k)$ follow the exact same recursions, initialised at the same value $(0,\balpha^2)$\,.

\end{proof}

\section{Convergence of $\psi_\alpha$ to a weighted $\ell_1$ norm and harmful behaviour}\label{sec:app:example}

We show that when taking the scale of the initialisation to $0$, one must be careful in the characterisation of the limiting norm, indeed if each entry does not go to zero "at the same speed", then the limit norm is a \textbf{weighted} $\ell_1$-norm rather than the classical $\ell_1$ norm. 
\begin{proposition}
    \label{app:prop:weightedl1norm}
    For $\alpha \geq 0$ and a vector $h \in \R^d$, let $\tilde{\alpha} = \alpha \exp(- h \ln(1 / \alpha)) \in \R^d$. Then we have that for all $\beta \in \R^d$
    \begin{align*}
        \psi_{\tilde{\alpha}}(\beta) \underset{\alpha \to 0 }{\sim} \ln( \frac{1}{\alpha}) \cdot \sum_{i=1}^d (1 + h_i) \vert \beta_i \vert.
    \end{align*}
\end{proposition}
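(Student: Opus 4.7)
The potential $\psi_{\tilde{\balpha}}$ decomposes coordinatewise as $\psi_{\tilde{\balpha}}(\beta) = \sum_{i=1}^d \psi_{\tilde{\alpha}_i}(\beta_i)$, where each scalar summand is
\[
\psi_{\tilde{\alpha}_i}(\beta_i) = \tfrac{1}{2}\Bigl(\beta_i\,\mathrm{arcsinh}(\beta_i/\tilde{\alpha}_i^{\,2}) - \sqrt{\beta_i^2 + \tilde{\alpha}_i^{\,4}} + \tilde{\alpha}_i^{\,2}\Bigr),
\]
so the plan is to analyse each coordinate separately and then sum. The key simplification comes from rewriting $\tilde{\alpha}_i = \alpha\exp(-h_i \ln(1/\alpha)) = \alpha^{1+h_i}$, hence $\tilde{\alpha}_i^{\,2} = \alpha^{2(1+h_i)}$.

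Next, I would fix a coordinate $i$ with $\beta_i \neq 0$ and (implicitly assuming $1+h_i > 0$ so that $\tilde{\alpha}_i \to 0$) compute the asymptotics of each of the three pieces as $\alpha \to 0$. The ratio $\beta_i/\tilde{\alpha}_i^{\,2} = \beta_i\, \alpha^{-2(1+h_i)}$ diverges, so using $\mathrm{arcsinh}(x) = \ln(2x) + o(1)$ as $x \to +\infty$ and the oddness of $\mathrm{arcsinh}$, one obtains
\[
\beta_i\,\mathrm{arcsinh}(\beta_i/\tilde{\alpha}_i^{\,2}) = |\beta_i|\bigl(\ln(2|\beta_i|) + 2(1+h_i)\ln(1/\alpha)\bigr) + o(1).
\]
Meanwhile the remaining terms $-\sqrt{\beta_i^2 + \tilde{\alpha}_i^{\,4}} + \tilde{\alpha}_i^{\,2} \to -|\beta_i|$, which is bounded and thus $o(\ln(1/\alpha))$. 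Multiplying by $1/2$ and keeping only the $\ln(1/\alpha)$-order term yields $\psi_{\tilde{\alpha}_i}(\beta_i) \sim (1+h_i)|\beta_i|\ln(1/\alpha)$. Coordinates with $\beta_i = 0$ contribute $0$ on both sides and can be discarded.

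Summing across $i \in \{1,\dots,d\}$ gives the claimed equivalent $\ln(1/\alpha)\sum_i (1+h_i)|\beta_i|$, provided at least one $\beta_i \neq 0$ so the leading term is nonzero. The only mildly delicate step is controlling the error in $\mathrm{arcsinh}(x) = \ln(2x) + o(1)$ uniformly enough to conclude the $o(\ln(1/\alpha))$ bound on the remainder; since the error of this expansion is $O(1/x^2)$ and $1/x^2 = O(\alpha^{4(1+h_i)})$ vanishes much faster than $\ln(1/\alpha)$ diverges, this presents no real obstacle. The whole argument is thus a careful asymptotic expansion of the scalar hyperbolic entropy, and the main thing to keep in mind is the implicit hypothesis $1+h_i > 0$ for each coordinate (otherwise $\tilde{\alpha}_i$ would not vanish and the weighted-$\ell_1$ limit fails).
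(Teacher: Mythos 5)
Your proposal is correct and follows essentially the same route as the paper's proof: a coordinatewise decomposition of $\psi_{\tilde{\balpha}}$, the asymptotic $\mathrm{arcsinh}(x)\sim \mathrm{sgn}(x)\ln|x|$ applied to the diverging argument $\beta_i/\tilde{\alpha}_i^2$ together with $\ln(1/\tilde{\alpha}_i^2)=(1+h_i)\ln(1/\alpha^2)$, and the observation that the remaining terms are bounded hence negligible against $\ln(1/\alpha)$. Your explicit flagging of the implicit hypotheses ($1+h_i>0$ and $\beta\neq 0$) only makes precise assumptions the paper leaves tacit.
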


\begin{proof}
    Recall that 
    \begin{align*}
        \psi_{\tilde{\alpha}}(\beta) &=  \frac{1}{2} \sum_{i=1}^d \Big ( \beta_i \mathrm{arcsinh}( \frac{\beta_i}{\tilde{\alpha}_i^2} ) \ - \sqrt{\beta_i^2 + \tilde{\alpha_i}^4} +  \tilde{\alpha}_i^2 \Big ) 
    \end{align*}
Using that $\mathrm{arcsinh}(x) \underset{|x| \to \infty}{\sim} \mathrm{sgn}(x) \ln(|x|)$, and that $\ln(\frac{1}{\tilde{\alpha}_i^2}) = (1 + h_i) \ln(\frac{1}{\alpha^2})$ we obtain that 
\begin{align*}
        \psi_{\tilde{\alpha}}(\beta) &\underset{\alpha \to 0}{\sim} \frac{1}{2} \sum_{i=1}^d \mathrm{sgn}(\beta_i) \beta_i (1 + h_i) \ln( \frac{1}{\alpha^2}) \\
        &= \frac{1}{2} \ln( \frac{1}{\alpha^2}) \sum_{i=1}^d (1 + h_i)  \vert \beta_i \vert.
    \end{align*}
\end{proof}

The following \Cref{fig:weighted_l1_norm} illustrates the effect of the non-uniform shape $\balpha$ on the corresponding potential $\psi_\balpha$.

 \begin{figure}[h!]
\centering
\begin{minipage}[c]{.5\linewidth}
\hspace*{-15pt}
\includegraphics[trim={1 0 0 0}, width=\linewidth]{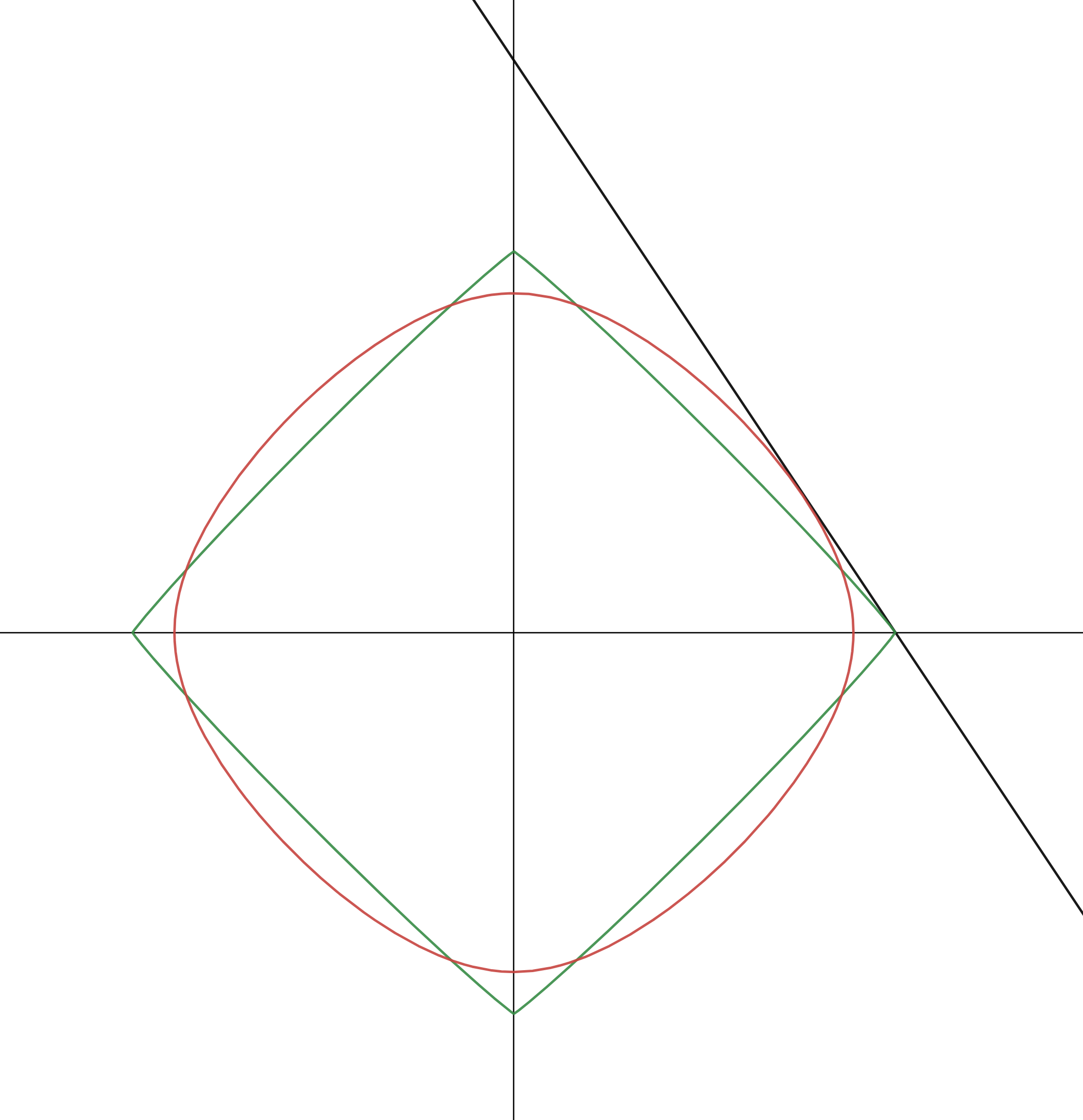}
    \end{minipage}
    \hspace*{-15pt}
    \begin{minipage}[c]{.5\linewidth}
    \vspace*{-6.5pt}
\includegraphics[width=0.913\linewidth]{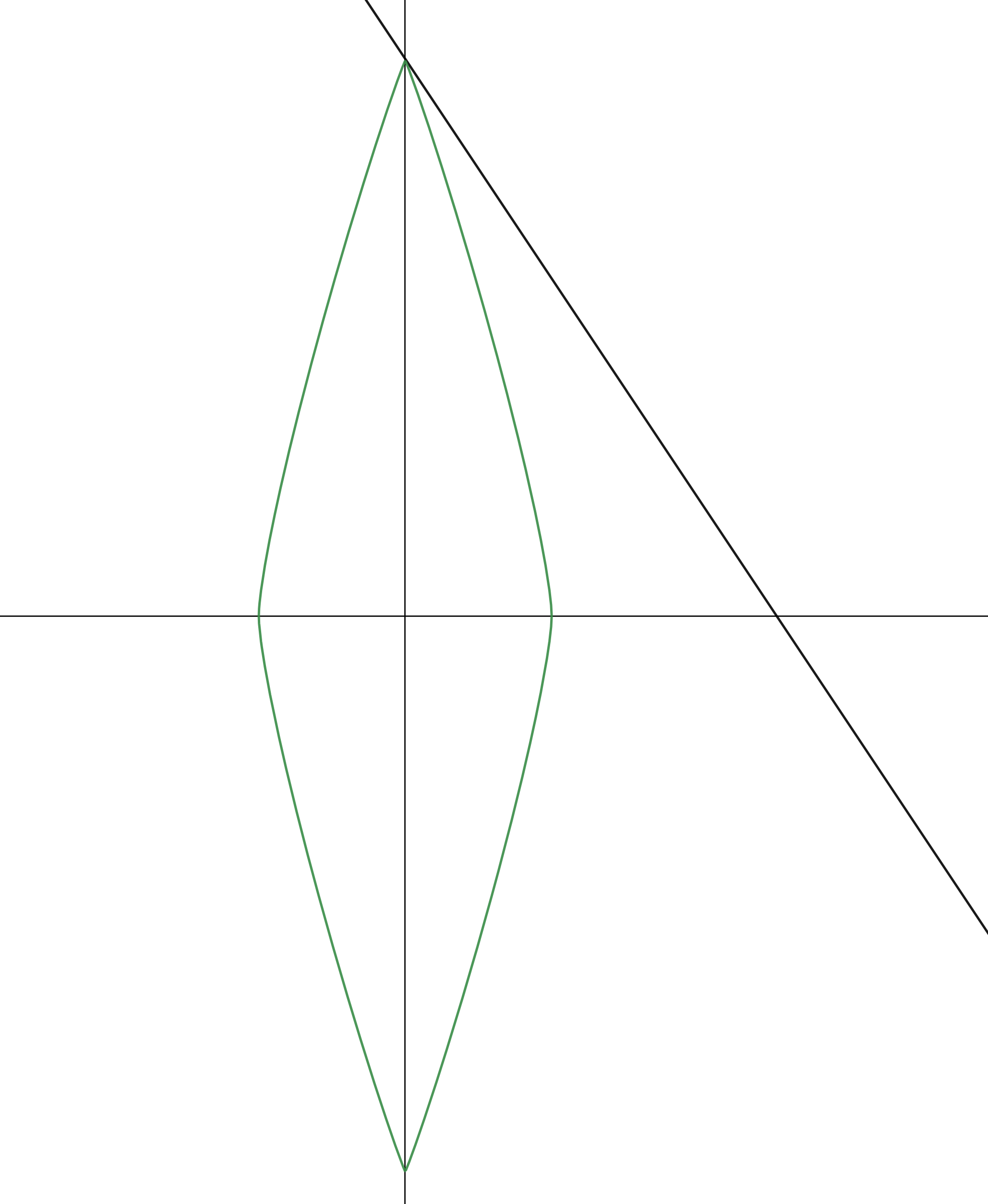}
    \end{minipage}
    \hspace*{-15pt}
    \caption{ \textit{Left}: Uniform $\balpha = \alpha \mathbf{1}$: a smaller scale $\alpha$ leads to the potential $\psi_\alpha$ being closer to the $\ell_1$-norm. \textit{Right}: A non uniform $\balpha$ can lead to the recovery of a solution which is very far from the minimum $\ell_1$-norm solution.  The affine line corresponds to the set of interpolators when $n=1$, $d=2$ and $s = 1$. \label{fig:weighted_l1_norm}} 
\end{figure}
 
More generally, for $\alpha$ such that $\alpha_i\to 0$ for all $i\in[d]$ at rates such that $\ln(1/\alpha_i)\sim q_i \ln(1/\max_i\alpha_i)$, we retrieve a weighted $\ell_1$ norm:
\begin{equation*}
    \frac{\psi_\alpha(\beta)}{\ln(1/\alpha^2)}\to \sum_{i=1}^d q_i |\beta_i|\,.
\end{equation*}
Hence, even for arbitrary small $\max_i\alpha_i$, if the \emph{shape} of $\alpha$ is ‘bad', the interpolator $\beta_\alpha$ that minimizes $\psi_\alpha$ can be arbitrary far away from $\beta_{\ell^1}^\star$ the interpolator of minimal $\ell_1$ norm.

% \Snote{copy pasted from main:}
% \small
% : for small $\alpha$, the closeness between $\beta_\alpha$ (Equation~\eqref{eq:implicit_opt}) and the minimum $\ell_1$-norm solution $\beta^\star$ is controlled by how small the largest coordinate $\max_i \alpha_i$ is and by the order of magnitude of $\ln(\max_i \alpha_i/(\min_i\alpha_i))$, hence the importance to have a \emph{uniformly} small $\alpha$ accross all coordinates.

% \normalsize
We illustrate the importance of the previous proposition in the following example.

% \vspace{-.5em}

\begin{example}
\label{app:example}
We illustrate how, even for arbitrary small $\max_i \alpha_i$, the interpolator $\beta^\star_\alpha$ that minimizes $\psi_\alpha$ can be far from the minimum $\ell_1$ norm solution, due to the shape of $\balpha$ that is not uniform. The message of this example is that for $\balpha\to 0$ non-uniformly across coordinates, if the coordinates of $\alpha$ that go slowly to $0$ coincide with the non-null coordinates of the sparse interpolator we want to retrieve, then $\beta^\star_\alpha$ will be far from the sparse solution.

A simple counterexample can be built: let $\beta^\star_{\rm sparse}=(1,\ldots,1,0,\ldots,0)$ (with only the $s=o(d)$ first coordinates that are non-null), and let $(x_i)$, $(y_i)$ be generated as $y_i=\langle \beta^\star_{\rm sparse},  x_i \rangle$ with $x_i\sim\cN(0,1)$. For $n$ large enough ($n$ of order $s\ln(d)$ where $s$ is the sparsity), the design matrix $X$ is RIP~\citep{candestao}, so that the minimum $\ell_1$ norm interpolator $\beta^\star_{\ell^1}$ is exactly equal to $\beta^\star_{\rm sparse}$.

However, if $\alpha$ is such that $\max_i\alpha_i\to 0$ with $h_i>>1$ for $j\leq s$ and $h_i=1$ for $i\geq s+1$ ($h_i$ as in Proposition~\ref{app:prop:weightedl1norm}), $\beta^\star_\alpha$ will be forced to verify $\beta^\star_{\alpha,i}=0$ for $i\leq s$ and hence $\Vert \beta^\star_{\alpha,1}-\beta^\star_{\ell^1} \Vert _1\geq s$.
\end{example}

\section{Main descent lemma and boundedness of the iterates}\label{app:sec:descentlemmas}

The goal of this section is to prove the following proposition, our main descent lemma: for well-chosen stepsizes, the Bregman divergences $(D_{h_k}(\beta^\star, \beta_k))_{k\geq0}$ decrease.
We then use this proposition to bound the iterates for both SGD and GD.

\begin{restatable}{proposition}{propbregmandescentbound}
\label{prop:bregman_descent_and_bound}
There exist a constant $c>0$ and $B>0$ such that $B=\cO(\inf_{\beta^\star\in\cS}\NRM{\beta^\star}_\infty)$ for GD and $B=\cO(\ln(1/\alpha)\inf_{\beta^\star\in\cS}\NRM{\beta^\star}_\infty)$ for SGD, such that if $\gamma_k\leq\frac{c}{L B}$ for all $k$, then
% that can be taken as $B=\cO(\NRM{\beta^\star}_1\ln(1+\NRM{\beta^\star}_1/\balpha_{\min}^2))$ where $\balpha_{\min}=\inf_{0\leq k,1\leq i\leq d}\balpha_{k,i}$, 
we have, for all $k\geq0$ and any interpolator $\beta^\star \in \cS$:
\begin{equation*}
    D_{h_{k+1}}(\ww^\star,\ww_{k+1})\leq D_{h_k}(\ww^\star,\ww_k)-\gamma_k\cL_{\cB_k}(\ww_k)\,.
\end{equation*}
\end{restatable}

To prove this result, we first provide a general descent lemma for time-varying mirror descent (\cref{prop:MD_varying}, \cref{app:MD_varying}), before proving the proposition for fixed iteration $k$ and bound $B>0$ on the iterates infinity norm in Appendix~\ref{app:descent_lemma_fixed_k} (\cref{prop:bregman_descent}). We finally use this to prove a bound on the iterates infinity norm in \cref{app:bound_iterates}.

\subsection{Descent lemma for (stochastic) mirror descent with varying potentials} \label{app:MD_varying}

In the following we adapt a classical mirror descent equality but for time varying potentials, that differentiates from \citet{orabona2013varyingmirror} in that it enables us to prove the decrease of the Bregman divergences of the iterates. Moreover, as for classical MD, it is an equality.

\begin{proposition}\label{prop:MD_varying}
    For $h,g:\R^d\to\R$ functions, let $D_{h,g}(\ww,\ww')=h(\ww)-g(\ww')-\langle \nabla g(\ww'),\ww-\ww'\rangle$\footnote{for $h=g$, we recover the classical Bregman divergence that we denote $D_h=D_{h,h}$} for $\beta,\beta'\in\R^d$.
    Let $(h_k)$ strictly convex functions defined $\R^d$ $\cL$ a convex function defined on $\R^d$.
    Let $(\ww_k)$ defined recursively through $\ww_0\in \R^d$, and
    \begin{equation*}
        \ww_{k+1}\in\argmin_{\ww\in \R^d} \set{\gamma_k\langle\nabla \cL(\ww_k),\ww-\ww_k\rangle + D_{h_{k+1},h_k}(\ww,\ww_k)}\,,
    \end{equation*}
    where we assume that the minimum is unique and attained in $\R^d$. Then, $(\ww_k)$ satisfies
    \begin{equation*}
        \nabla h_{k+1}(\ww_{k+1})=\nabla h_k(\ww_k)-\gamma_k \nabla \cL(\ww_k)\,,
    \end{equation*}
    and for any $\beta\in \R^d$,
    \begin{align*}
        D_{h_{k+1}}(\ww,\ww_{k+1})&=D_{h_k}(\ww,\ww_k)-\gamma_k \langle \nabla \cL(\beta_k), \beta_k - \beta \rangle + D_{h_{k+1}}(\ww_k,\ww_{k+1})\\
        &\quad - \big(h_{k+1}-h_{k}\big)(\ww_k) + \big(h_{k+1}-h_{k}\big)(\ww)\,.
    \end{align*}
\end{proposition}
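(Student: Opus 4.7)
The plan is to establish both claims by direct algebraic manipulation, relying only on the definitions of $D_{h_{k+1},h_k}$, $D_{h_k}$ and $D_{h_{k+1}}$, plus the strict convexity of the $h_k$'s.

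For the update rule, I would note that the map $\beta \mapsto \gamma_k \langle \nabla \cL(\beta_k), \beta - \beta_k\rangle + D_{h_{k+1},h_k}(\beta,\beta_k) = \gamma_k\langle\nabla\cL(\beta_k),\beta-\beta_k\rangle + h_{k+1}(\beta) - h_k(\beta_k) - \langle \nabla h_k(\beta_k), \beta-\beta_k\rangle$ is strictly convex (only the $h_{k+1}(\beta)$ term depends on $\beta$ nonlinearly), so the assumed minimizer $\beta_{k+1}$ is characterised by the first-order optimality condition, which after rearrangement gives exactly $\nabla h_{k+1}(\beta_{k+1}) = \nabla h_k(\beta_k) - \gamma_k\nabla\cL(\beta_k)$.

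For the identity, I would start from the definition $D_{h_{k+1}}(\beta,\beta_{k+1}) = h_{k+1}(\beta) - h_{k+1}(\beta_{k+1}) - \langle \nabla h_{k+1}(\beta_{k+1}),\beta-\beta_{k+1}\rangle$, substitute the expression for $\nabla h_{k+1}(\beta_{k+1})$ just derived, and split the inner product via $\beta-\beta_{k+1} = (\beta-\beta_k) + (\beta_k - \beta_{k+1})$. The $\langle\nabla h_k(\beta_k),\beta-\beta_k\rangle$ piece combines with $h_k(\beta)$ (added and subtracted) to reconstruct $D_{h_k}(\beta,\beta_k)$; the $\langle\nabla h_k(\beta_k) - \gamma_k\nabla\cL(\beta_k),\beta_k-\beta_{k+1}\rangle$ piece combines with $h_{k+1}(\beta_k) - h_{k+1}(\beta_{k+1})$ (again with $h_{k+1}(\beta_k)$ added and subtracted) to reconstruct $D_{h_{k+1}}(\beta_k,\beta_{k+1})$, using part 1 in reverse to recognise $\nabla h_{k+1}(\beta_{k+1})$. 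The leftover $\gamma_k\langle\nabla\cL(\beta_k),\beta-\beta_k\rangle$ yields the $-\gamma_k\langle\nabla\cL(\beta_k),\beta_k-\beta\rangle$ term, and the book-keeping terms $h_{k+1}(\beta)-h_k(\beta)$ and $-(h_{k+1}(\beta_k)-h_k(\beta_k))$ are precisely the two stated differences $(h_{k+1}-h_k)(\beta)$ and $-(h_{k+1}-h_k)(\beta_k)$.

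I do not foresee a real obstacle: this is the classical three-point (or "law of cosines") identity for Bregman divergences, with careful book-keeping to track the mismatch between $h_k$ and $h_{k+1}$ at the two evaluation points $\beta$ and $\beta_k$. The only subtlety is that one must systematically use part 1 both to eliminate $\nabla h_{k+1}(\beta_{k+1})$ when expanding $D_{h_{k+1}}(\beta,\beta_{k+1})$ and to reintroduce it when folding terms into $D_{h_{k+1}}(\beta_k,\beta_{k+1})$; apart from that the derivation is entirely mechanical.
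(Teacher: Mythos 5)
Your proposal is correct and takes essentially the same route as the paper: part 1 via the first-order optimality condition of the (strictly convex in $\beta$) objective, and part 2 via the generalized three-point identity obtained by substituting $\nabla h_{k+1}(\beta_{k+1})=\nabla h_k(\beta_k)-\gamma_k\nabla\cL(\beta_k)$, splitting $\beta-\beta_{k+1}=(\beta-\beta_k)+(\beta_k-\beta_{k+1})$, and adding/subtracting $h_k(\beta)$ and $h_{k+1}(\beta_k)$. The paper merely packages the second step through the auxiliary objective $V_k(\beta)=\gamma_k\langle\nabla\cL(\beta_k),\beta-\beta_k\rangle+D_{h_{k+1},h_k}(\beta,\beta_k)$ and the observations $\nabla V_k(\beta_{k+1})=0$ and $D_{V_k}=D_{h_{k+1}}$, but the underlying term-splitting and book-keeping are the same as yours.
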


\begin{proof}
    Let $\beta\in \R^d$.
    Since we assume that the minimum through which $\ww_{k+1}$ is computed is attained in $\R^d$, the gradient of the function $V_k(\ww)=\gamma_k\langle\nabla \cL(\ww_k),\ww-\ww_k\rangle + D_{h_{k+1},h_k}(\ww,\ww_k)$ evaluated at $\ww_{k+1}$ is null, leading to $\nabla h_{k+1}(\ww_{k+1})=\nabla h_k(\ww_k)-\gamma_k \nabla \cL(\ww_k)$.

    Then, since $\nabla V_{k}(\ww_{k+1})=0$, we have $D_{V_k}(\ww,\ww_{k+1})=V_k(\ww)-V_k(\ww_{k+1})$.
    Using $\nabla^2 V_k=\nabla^2h_{k+1}$, we also have $D_{V_k}=D_{h_{k+1}}$. Hence:
    \begin{equation*}
        D_{h_{k+1}}(\ww,\ww_{k+1})=\gamma_k\langle\nabla\cL(\ww_k),\ww-\ww_{k+1}\rangle  + D_{h_{k+1},h_k}(\ww,\ww_k)-D_{h_{k+1},h_k}(\ww_{k+1},\ww_k)\,.
    \end{equation*}
    We write $\gamma_k\langle\nabla\cL(\ww_k),\ww-\ww_{k+1}\rangle=\gamma_k\langle\nabla\cL(\ww_k),\ww -\ww^{k}\rangle + \gamma_k\langle\nabla\cL(\ww_k),\ww_k-\ww_{k+1}\rangle$.
    We also have $\gamma_k\langle\nabla\cL(\ww_k),\ww_k-\ww_{k+1}\rangle= \langle\nabla h_k(\ww_k)-\nabla h_{k+1}(\ww_{k+1}),\ww_k-\ww_{k+1}\rangle=D_{h_k,h_{k+1}}(\ww_k,\ww_{k+1})+D_{h_{k+1},h_{k}}(\ww_{k+1},\ww^{k})$, so that $\gamma_k\langle\nabla\cL(\ww_k),\ww_k-\ww_{k+1}\rangle-D_{h_{k+1},h_{k}}(\ww_{k+1},\ww^{k})=D_{h_k,h_{k+1}}(\ww_k,\ww_{k+1})$.
    Thus,
    \begin{equation*}
        D_{h_{k+1}}(\ww ,\ww_{k+1})=D_{h_{k+1},h_k}(\ww ,\ww_k)-\gamma_k\big(D_f(\ww ,\ww_k)+D_f(\ww_k,\ww )\big) + D_{h_k,h_{k+1}}(\ww_k,\ww_{k+1})\,,
    \end{equation*}
    and writing $D_{h,g}(\ww,\ww')=D_g(\ww,\ww')+h(\ww)-g(\ww)$ concludes the proof.
\end{proof}

\subsection{Proof of Proposition~\ref{prop:bregman_descent}}\label{app:descent_lemma_fixed_k}

In next proposition, we use Proposition~\ref{prop:MD_varying} to prove our main descent lemma.
To that end, we bound the error terms that appear in Proposition~\ref{prop:MD_varying} as functions of $\cL_{\cB_k}(\beta_k)$ and norms of $\beta_k,\beta_{k+1}$, so that for explicit stepsizes, the error terms can be cancelled by half of the negative quantity $-2\cL_{\cB_k}(\beta_k)$.

\textbf{Additional notation:} let $L_2,L_\infty>0$ such that $\forall\beta$, $\Vert H_\cB \beta \Vert_2 \leq L \Vert \beta \Vert_2$, $\NRM{H_\cB \beta}_\infty\leq L\NRM{\beta}_\infty$ for all batches $\cB \subset [n]$ of size $b$.

\begin{restatable}{proposition}{propbregmandescent}
\label{prop:bregman_descent}
Let $k\geq0$ and $B>0$.
Provided that $\NRM{\beta_{k}}_\infty,\NRM{\beta_{k+1}}_\infty,\NRM{\beta^\star}_\infty\leq B$ and $\gamma_k\leq\frac{c}{L B}$ where $c>0$ is some numerical constant,
% that can be taken as $B=\cO(\NRM{\beta^\star}_1\ln(1+\NRM{\beta^\star}_1/\balpha_{\min}^2))$ where $\balpha_{\min}=\inf_{0\leq k,1\leq i\leq d}\balpha_{k,i}$, 
we have:
\begin{equation}\label{eq:2_descent_bis}
    D_{h_{k+1}}(\ww^\star,\ww_{k+1})\leq D_{h_k}(\ww^\star,\ww_k)-\gamma_k\cL_{\cB_k}(\ww_k)\,.
\end{equation}
\end{restatable}

\begin{proof}
Let $\beta^\star\in\cS$ be any interpolator.
From Proposition~\ref{prop:MD_varying}:
%, we thus have, for $\ww^\star$ any interpolator and  $\gamma_k \leq 1 / L$ (where $L$ is the relative smoothness w.r.t all the $h_k$'s i.e. $L = O(\lambda_{max} B$):
    % \begin{align*}
    %     D_{h_{k+1}}(\ww^\star,\ww_{k+1})&=D_{h_k}(\ww^\star,\ww_k)-\gamma_k\big(D_f(\ww^\star,\ww_k)+D_f(\ww_k,\ww^\star)\big) + D_{h_{k+1}}(\ww_k,\ww_{k+1})\\
    %     &\quad + \big(h_k-h_{k+1}\big)(\ww_k) - \big(h_k-h_{k+1}\big)(\ww^\star)\,.
    % \end{align*}
\begin{align*}
   D_{h_{k+1}}(\beta^\star, \beta_{k+1}) = D_{h_k}(\beta^\star, \beta_k) - 2 \gamma_k \cL_{\cB_k}(\beta_{k}) +  D_{h_{k+1}}(\beta_{k+1}, \beta_k) - (h_{k+1}- h_{k})(\beta_k)  + (h_{k+1} - h_k)(\beta^\star).
\end{align*}
We want to bound the last three terms of this equality. First, to bound the last two we apply \cref{tech_lemma:bound_hk+1-hk} assuming that $\Vert \beta^\star \Vert_\infty, \Vert \beta_{k+1} \Vert_\infty \leq B$:
\begin{align*}
    - (h_{k+1}- h_{k})(\beta_k)  + (h_{k+1} - h_k)(\beta^\star) \leq 24 B L_2\gamma_k^2\cL_{\cB_k}(\ww_k)
\end{align*}

We now bound $D_{h_{k+1}}(\ww_k,\ww_{k+1})$.
Classical Bregman manipulations provide that 
\begin{align*}
  D_{h_{k+1}}(\ww_k,\ww_{k+1}) &=D_{h_{k+1}^*}(\nabla h_{k+1}(\ww_{k+1}),\nabla h_{k+1}(\ww_k)) \\
  &=D_{h_{k+1}^*}(\nabla h_{k}(\ww^{k})-\gamma_k\nabla \cL_{\cB_k}(\ww_k),\nabla h_{k+1}(\ww_k))\, .
\end{align*}
From Lemma~\ref{lemma:relative_smoothness} we have that $h_{k+1}$ is $\min(1/(4\alpha_{k+1}^2),1/(4B))$ strongly convex on the $\ell^\infty$-centered ball of radius $B$ therefore $h_{k+1}^*$ is $\max(4\alpha_{k+1}^2,4B)=4B$ (for $\alpha$ small enough or $B$ big enough) smooth on this ball, leading to:
\begin{align*}
    D_{h_{k+1}}(\ww_k,\ww_{k+1}) 
    &\leq 2B \NRM{\nabla h_{k}(\ww_{k})-\gamma_k\nabla \cL_{\cB_k}(\ww_k)-\nabla h_{k+1}(\ww_k)}_2^2\\
    &\leq 4B\big(\NRM{\nabla h_{k}(\ww_{k})-\nabla h_{k+1}(\ww_k)}_2^2+\NRM{\gamma_k\nabla \cL_{\cB_k}(\ww_k)}_2^2\big)\,.
\end{align*}
Using $|\nabla h_k(\ww)-\nabla h_{k+1}(\ww)|\leq 2\delta_k$ where $ \delta_k = q(\gamma_k \nabla \cL_{\cB_k}(\beta_k))$, we get that:
\begin{equation*}
    D_{h_{k+1}}(\ww_k,\ww_{k+1})\leq 8B\NRM{\delta_k}_2^2 + 4B L \gamma_k^2\cL_{\cB_k}(\ww_k)\,.
\end{equation*}
Now, $\NRM{\delta_k}_2^2 \leq \NRM{\delta_k}_1\NRM{\delta_k}_\infty$ and using Lemma~\ref{tech_lemma:bound_q},  $\NRM{\delta_k}_1\NRM{\delta_k}_\infty\leq 4 \NRM{\gamma_k\nabla \cL_{\cB_k}(\ww_k)}_2^2\NRM{\gamma_k\nabla \cL_{\cB_k}(\ww_k)}_\infty^2\leq 2 \NRM{\gamma_k\nabla \cL_{\cB_k}(\ww_k)}_2^2 $ since $\NRM{\gamma_k\nabla \cL_{\cB_k}(\ww_k)}_\infty\leq\gamma_k L_\infty\NRM{\beta_k-\beta_\infty}\leq \gamma_k\times 2LB\leq 1/2$ is verified for $\gamma_k\leq 1/(4LB)$.
Thus, 
\begin{equation*}
    D_{h_{k+1}}(\ww_k,\ww_{k+1})\leq 40 BL_2\gamma_k^2\cL_{\cB_k}(\ww_k)\,.
\end{equation*}

Hence, provided that $\NRM{\beta_k}_\infty\leq B$, $\NRM{\beta_{k+1}}_\infty\leq B$ and $\gamma_k\leq 1/(4LB)$, we have:
    \begin{align*}
        D_{h_{k+1}}(\ww^\star,\ww_{k+1})\leq D_{h_k}(\ww^\star,\ww_k)-2\gamma_k\cL_{\cB_k}(\ww_k) +  64 L_2\gamma_k^2 B \cL_{\cB_k}(\ww_k)\,,
    \end{align*}
    and thus 
    \begin{equation*}
                D_{h_{k+1}}(\ww^\star,\ww_{k+1})\leq D_{h_k}(\ww^\star,\ww_k)-\gamma_k\cL_{\cB_k}(\ww_k)\,.
    \end{equation*}
if $\gamma_k\leq \frac{c}{BL}$, where $c = \frac{1}{64}$.

% \begin{align*}
%     \gamma_k \cL(\beta_{k+1}) 
%     & \leq  D_{h_k}(\beta^\star, \beta_k) - D_{h_{k+1}}(\beta^\star, \beta_{k+1}) - (1 - \gamma_k L)  D_{h_k}(\beta_{k+1}, \beta_k) + (h_{k+1}- h_{k})(\beta)  - (h_{k+1} - h_k)(\beta_{k+1}) \\
%     & \leq  D_{h_k}(\beta^\star, \beta_k) - D_{h_{k+1}}(\beta^\star, \beta_{k+1}) + (h_{k+1}- h_{k})(\beta^\star)  - (h_{k+1} - h_k)(\beta_{k+1})
% \end{align*}

% \begin{align*}
%     \gamma_k \cL(\beta_{k+1}) 
%     & \leq  D_{h_k}(\beta^\star, \beta_k) - D_{h_{k+1}}(\beta^\star, \beta_{k+1}) + 24 B \lambda_{\max}\gamma_k^2\cL(\ww_k)
% \end{align*}
% Rearranging:
% \begin{align*}
%      D_{h_{k+1}}(\beta^\star, \beta_{k+1})
%     & \leq  D_{h_k}(\beta^\star, \beta_k) + 24 B \lambda_{\max}\gamma_k^2\cL(\ww_k) -  \gamma_k \cL(\beta_{k+1}) 
% \end{align*}
% assuming the loss is decreasing. Finally, for $\gamma_k \leq 1 / ( 64 B C \lambda_{max}) $:
% \begin{align*}
%      D_{h_{k+1}}(\beta^\star, \beta_{k+1})
%     & \leq  D_{h_k}(\beta^\star, \beta_k) - \frac{1}{2} \gamma_k \cL(\beta_{k+1}) ,
% \end{align*}

\end{proof}

\subsection{Bound on the iterates}\label{app:bound_iterates}

We now bound the iterates $(\beta_k)$ by an explicit constant $B$ that depends on $\NRM{\beta^\star}_1$ (for any fixed $\beta^\star\in\cS$). 

The first bound we prove holds for both SGD and GD, and is of the form $\cO(\NRM{\beta^\star}_1\ln(1/\alpha^2)$ while the second bound, that holds only for GD ($b=n$) is of order $\cO(\NRM{\beta^\star}_1)$ (independent of $\alpha$). While a bound independent of $\alpha$ is only proved for GD, we believe that such a result also holds for SGD, and in both cases $B$ should be thought of order $\cO(\NRM{\beta^\star}_1)$.

\subsubsection{Bound that depends on $\alpha$ for GD and SGD}

A consequence of Proposition~\ref{prop:bregman_descent} is the boundedness of the iterates, as shown in next corollary. Hence, Proposition~\ref{prop:bregman_descent} can be applied using $B$ a uniform bound on the iterates $\ell^\infty$ norm.

\begin{corollary}
\label{cor:bound_iterates_sgd}
Let $B=3\NRM{\beta^\star}_1\ln\big(1+\frac{\NRM{\beta^\star}_1}{\alpha^2}\big)$. For stepsizes $\gamma_k\leq \frac{c}{BL}$, we have $\NRM{\beta_k}_\infty\leq B$ for all $k\geq0$.
\end{corollary}

\begin{proof}
We proceed by induction.
Let $k\geq0$ such that $\NRM{\beta_k}_\infty\leq B$ for some $B>0$ and $D_{h_{k}}(\ww^\star,\ww_{k})\leq D_{h_0}(\ww^\star,\ww_0)$ (note that these two properties are verified for $k=0$, since $\beta_0=0$). For $\gamma_k$ sufficiently small (\emph{i.e.}, that satisfies $\gamma_k\leq \frac{c}{B'L}$ where $B'\geq \NRM{\beta_{k+1}}_\infty,\NRM{\beta_{k}}_\infty,\NRM{\beta^\star}_\infty$), using Proposition~\ref{prop:bregman_descent}, we have $D_{h_{k+1}}(\ww^\star,\ww_{k+1})\leq D_{h_k}(\ww^\star,\ww_k)$ so that $D_{h_{k+1}}(\ww^\star,\ww_{k+1})\leq D_{h_0}(\ww^\star,\ww_0)$, which can be rewritten as:
\begin{equation*}
         \sum_{i=1}^d \alpha_{k+1,i}^2(\sqrt{1+(\frac{\ww_{k+1,i}}{\alpha_{k+1,i}^2})^2}-1)\leq \sum_{i=1}^d \ww_i^\star \argsinh(\frac{\ww_{k+1,i}}{\alpha^2})\,.
    \end{equation*}
Hence, $\NRM{\ww_{k+1}}_1\leq \NRM{\ww^\star}_1\ln(1+\frac{\NRM{\ww_{k+1}}_1}{\alpha^2})$.
We then notice that for $x,y>0$, $x\leq y\ln(1+x)\implies x\leq 3y\ln(1+y)$: if $x> y\ln(1+y)$ and $x>y$, we have that $y\ln(1+y)<y\ln(1+x)$, so that $1+y<1+x$, which contradicts our assumption. Hence, $x\leq \max(y,y\ln(1+y))$. In our case, $x=\NRM{\beta^{k+1}}_1/\alpha^2$, $y=\NRM{\beta^\star}_1/\alpha^2$ so that for small alpha, $\ln(1+y)\geq1$.

Hence, we deduce that $\NRM{\ww_{k+1}}_1\leq B$, where $B=\NRM{\ww^\star}_1\ln(1+\frac{\NRM{\ww^\star}_1}{\alpha^2})$. 

This is true as long as $\gamma_k$ is tuned using $B'$ a bound on $\max(\NRM{\beta_k}_\infty,\NRM{\beta_{k+1}}_\infty)$.
Using the continuity of $\beta_{k+1}$ as a function of $\gamma_k$ ($\beta_k$ being fixed), we show that $\gamma_k\leq \half\times\frac{c}{BL}$ can be used using this $B$.
Indeed, let $\phi:\R^+\to\R^d$ be the function that takes as entry $\gamma_k\geq0$ and outputs the corresponding $\NRM{\beta_{k+1}}_\infty$: $\phi$ is continuous. 
Let $\gamma_r=\half\times\frac{c}{rL}$ for $r>0$ and %$\bar\gamma=\sup\set{\gamma>0:\phi(\gamma)\leq B}$. If $r$ satisfies $\NRM{\beta_k}_\infty,\NRM{\beta^\star}_\infty\leq r$ and
$\bar r=\sup\set{r\geq0: B < \phi(\gamma_r)}$ (the set is upper-bounded; if is is empty, we do not need what follows since it means that any stepsize leads to $\NRM{\beta_{k+1}}_\infty\leq B$).
By continuity of $\phi$, $ \phi(\gamma_{\bar r})=B$. 
Furthermore, for all $r$ that satisfies $r\geq \max(\phi(\gamma_r),B)\geq \max(\phi(\gamma_r),\NRM{\beta_k}_\infty,\NRM{\beta^\star}_\infty) $, we have, using what is proved just above, that $\NRM{\beta_{k+1}}_\infty\leq B$ and thus $\phi(\gamma_r)\leq B$ for such a $r$:

\begin{lemma}\label{lemma:r}
For $r>0$ such that $r\geq \max(\phi(\gamma_r),B)$, we have $\phi(\gamma_r)\leq B$.
\end{lemma}

Now, if $\bar r> B$, by definition of $\bar r$ and by continuity of $\phi$, since $\phi(\bar r)=B$, there exists some $B<r<\bar r$ such that $\phi(\gamma_r)>B$ (definition of the supremum) and $\phi(\gamma_r)\leq 2B$ (continuity of $\phi$).
This particular choice of $r$ thus satisfies $r>B$ and and $\phi(\gamma_r)\leq 2B \leq 2r$, leading to $\phi(\gamma_r)\leq B$, using Lemma~\ref{lemma:r}, hence a contradiction: we thus have $\bar r \leq B$.

This concludes the induction: for all $r\geq B$, we have $r\geq \bar r$ so that $\phi(\gamma_r)\leq B$ and thus for all stepsizes $\gamma\leq \frac{c}{2LB}$, we have $\NRM{\beta_{k+1}}_\infty\leq B$.

%Finally, $\NRM{\gamma_k\nabla \cL(\beta_k)}_\infty\leq 1/2$ will always be ensured for such a stepsize choice. Indeed, $\NRM{\nabla \cL(\beta_k)}_\infty\leq L_\infty \NRM{\beta_k-\beta^\star}_\infty\leq L_\infty(\NRM{\beta_k}_\infty+\NRM{\beta^\star}_\infty)\leq 2LB$.
\end{proof}

\subsubsection{Bound independent of $\alpha$}

We here assume in this subsection that $b=n$. We prove that for gradient descent, the iterates are bounded by a constant that does not depend on $\alpha$.
\begin{restatable}{proposition}{propboundediterates} 
\label{prop:bound_indep}
Assume that $b=n$ (full batch setting).
There exists some $B=\cO(\NRM{\beta^\star}_1)$ such that for stepsizes $\gamma_k\leq \frac{c}{BL}$, we have $\NRM{\beta_k}_\infty\leq B$ for all $k\geq0$.
\end{restatable}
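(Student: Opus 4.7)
I would prove the proposition by strengthening the induction of Corollary~\ref{cor:bound_iterates_sgd} using two ingredients that are specific to the full-batch setting. First, I would establish \emph{monotone decrease of the training loss}, $\cL(\beta_{k+1}) \leq \cL(\beta_k)$: this follows from Proposition~\ref{prop:MD_varying} specialised to $\beta=\beta_{k+1}$ together with smoothness bounds analogous to those in the proof of Proposition~\ref{prop:bregman_descent}, provided $\gamma_k \lesssim 1/(LB)$. This yields the $\alpha$-independent a priori bound $\cL(\beta_k) \leq \cL(0) = \tfrac{1}{2n}\sum_i y_i^2$ that is unavailable for mini-batch SGD (where, in general, the loss may transiently increase).

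Second, I would exploit the interpolation constraint. Telescoping the mirror-descent recursion of Proposition~\ref{prop:tv_md} gives $\nabla h_k(\beta_k) = -X^\top m_k$ with $m_k := \sum_{\ell<k}\gamma_\ell(X\beta_\ell - y) \in \R^n$ (this is where full-batch is used: stochastic gradients would only give $g_\ell \in \Span(x_i)_{i\in\cB_\ell}$ without this clean representation). Pairing with any interpolator $\beta^\star \in \cS$ and using $X\beta^\star = y$ yields
\begin{equation*}
\bigl\langle\beta^\star,\tfrac{1}{2}\argsinh(\beta_k/\alpha_k^2) - \phi_k\bigr\rangle = -\langle y, m_k\rangle.
\end{equation*}
The left-hand side grows like $\|\beta_k\|_1 \cdot \ln(\|\beta_k\|_\infty/\alpha^2)$ on coordinates where $\operatorname{sign}(\beta_{k,i})=\operatorname{sign}(\beta^\star_i)$ (the remaining coordinates contribute at most $\mathcal{O}(\|\beta^\star\|_1)$ since $\psi_{\alpha_k}$ controls the sign-mismatched part). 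Consequently, inverting $\argsinh$, the question of bounding $\|\beta_k\|_\infty$ by an $\alpha$-independent constant reduces to showing that $\|m_k\|_2 = \mathcal{O}(\|\beta^\star\|_1)$ uniformly in $k$ and $\alpha$.

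The main obstacle, and the heart of the proof, is this uniform control of $\|m_k\|_2$. Heuristically, $m_k$ lives in the $n$-dimensional residual space on which the effective dynamics is a (preconditioned) least-squares iteration, so it should behave like the analogous quantity for plain GD on a convex quadratic, which is bounded by $\|(XX^\top)^+ y\|_2$ and in particular by problem constants times $\|\beta^\star\|_1$. Concretely, I would derive a contraction-type recursion $r_{k+1} = (\mathrm{Id} - \gamma_k XX^\top D_k)r_k + \text{error}_k$ for the residuals $r_k = X\beta_k - y$, where $D_k\succeq 0$ is the diagonal mirror-map factor and $\|\text{error}_k\|_2 = \mathcal{O}(\gamma_k^2 \cL(\beta_k))$ is driven by the nonlinearity of the mirror step and is summable thanks to the monotone loss decrease. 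An Abel-summation argument on this quasi-linear recursion then gives $\|m_k\|_2 \leq C\|y\|_2$ independently of $\alpha$. Plugging this into the displayed identity with $\beta^\star$ chosen to be any fixed interpolator yields $\|\beta_k\|_\infty \leq B$ with $B = \mathcal{O}(\|\beta^\star\|_1)$, closing the induction. As in Corollary~\ref{cor:bound_iterates_sgd}, the circularity that $\gamma_k\le c/(BL)$ involves the very $B$ being proven is resolved by the same continuity argument: $\beta_{k+1}$ depends continuously on $\gamma_k$, so the set of admissible stepsizes is open and the inductive bound propagates.
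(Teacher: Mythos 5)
Your two peripheral ingredients (monotone decrease of $\cL$ for full-batch GD, and the continuity argument resolving the circular dependence of $\gamma_k$ on $B$) do match the paper, but the heart of your argument contains a step that is not just unproven but false. You claim that $m_k=\sum_{\ell<k}\gamma_\ell(X\beta_\ell-y)$ satisfies $\NRM{m_k}_2\leq C\NRM{y}_2$ uniformly in $k$ \emph{and in $\alpha$}. This cannot hold: by the telescoped mirror-descent recursion, $\nabla h_k(\beta_k)=\frac{1}{2}\argsinh(\beta_k/\alpha_k^2)-\phi_k=-X^\top m_k$, and since the iterates travel from $\beta_0=0$ to an interpolator with entries of order one while $\alpha_k\leq\alpha$, the mirror map must reach magnitude of order $\ln(1/\alpha^2)$ on the support of $\beta^\star_\infty$ (the $\phi_k$ correction is of strictly smaller order under the stepsize condition, cf.\ the bound $\NRM{\phi_k}_1\lesssim\lambda_{\max}\gamma\, h_0(\beta^\star)$ used in the paper). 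Hence $\NRM{m_k}_2\gtrsim\ln(1/\alpha)$ necessarily diverges as $\alpha\to0$; equivalently, $\sum_\ell\gamma_\ell\cL(\beta_\ell)=\Theta(\ln(1/\alpha)\NRM{\beta^\star_{\ell_1}}_1)$ (\cref{prop:magnitude_IB}), not $\cO(1)$. Your own display makes the inconsistency visible: if $|\langle y,m_k\rangle|$ were $\cO(\NRM{y}_2^2)$, the identity $\langle\beta^\star,\nabla h_k(\beta_k)\rangle=-\langle y,m_k\rangle$ would force $|\beta_{k,i}|\lesssim\alpha^2 e^{C}$ on the support of $\beta^\star$, i.e.\ the iterates would never leave the initialisation scale. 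The analogy with plain GD fails precisely because the diagonal preconditioner $D_k\asymp\sqrt{\alpha_k^4+\beta_k^2}$ is of size $\alpha^2$ early on, so the residual contraction takes $\Theta(\ln(1/\alpha)/\gamma)$ iterations during which the residuals stay of order $\NRM{y}$. A secondary structural issue: pairing only with $\beta^\star$ can at best control the coordinates of $\beta_k$ lying in the support of $\beta^\star$, so even with a correct bound on $m_k$ you would not control $\NRM{\beta_k}_\infty$ off that support; and your claim that the loss decrease follows from \cref{prop:MD_varying} is exactly the route the paper flags as problematic because of the $(h_{k+1}-h_k)$ error terms (it instead bounds $\nabla^2 F$ in the $w$-parametrisation).

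The fix, which is what the paper does, is to \emph{not} seek an $\alpha$-independent bound on the cumulative residual, but to let the $\ln(1/\alpha^2)$ factors cancel. Using convexity of $h_k$ and $\nabla\cL(\beta_\ell)=H(\beta_\ell-\beta^\star)$, one bounds $h_k(\beta_k)-h_k(\beta^\star)\leq\sum_{\ell<k}\gamma_\ell\,2\sqrt{\cL(\beta_\ell)\cL(\beta_k)}\leq 2\sum_{\ell<k}\gamma_\ell\cL(\beta_\ell)\leq 2D_{h_0}(\beta^\star,\beta_0)$, where the middle step uses the full-batch loss monotonicity $\cL(\beta_k)\leq\cL(\beta_\ell)$ and the last the summed descent lemma. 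Since $\psi_{\alpha_k}(\beta_k)$, $\psi_{\alpha_k}(\beta^\star)$ and $D_{h_0}(\beta^\star,\beta_0)=h_0(\beta^\star)$ all scale like $\ln(1/\alpha^2)$ times norms of order $\NRM{\beta^\star}_1$ (and $\langle\phi_k,\beta_k-\beta^\star\rangle$ is lower order), dividing by $\ln(1/\alpha_k^2)$ yields $\NRM{\beta_k}_\infty\leq\cO(\NRM{\beta^\star}_1)$ independently of $\alpha$; this is where the argument genuinely uses $b=n$, through the monotonicity of $\cL$.
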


\begin{proof}

We first begin by proving the following proposition: for sufficiently small stepsizes, the loss values decrease.
In the following lemma we provide a bound on the gradient descent iterates $(w_{+, k}, w_{-, k})$ which will be useful to show that the loss is decreasing.

\begin{proposition}
    For $\gamma_k \leq \frac{c}{LB}$ where $B\geq\max(\NRM{\beta_k}_\infty, \NRM{\beta_{k+1}}_\infty)$, we have $\cL(\beta_{k+1})\leq \cL(\beta_k)$    
\end{proposition}
\begin{proof}
Oddly, using the time-varying mirror descent recursion is not the easiest way to show the decrease of the loss, due to the error terms which come up. Therefore to show that the loss is decreasing we use the gradient descent recursion.
Recall that the iterates $w_k = (w_{+, k}, w_{-, k}) \in \R^{2d}$ follow a gradient descent on the non convex loss $F(w) = \frac{1}{2} \Vert y - \half X (w_+^2 - w_-^2) \Vert_2 $.

For $k\geq 0$, using the Taylor formula we have that $F(w_{k+1})\leq F(w_k)-\gamma_k(1-\frac{\gamma_kL_k}{2})\NRM{\nabla F(w_k)}^2$ with the local smoothness $L_k = \sup_{w\in[w_k,w_{k+1}]}\lambda_{\max}(\nabla^2F(w))$.
Hence if $\gamma_k\leq 1/L_k$ for all $k$ we get that the loss is non-increasing.
We now bound $L_k$. Computing the hessian ot $F$, we obtain that:
\begin{equation}\label{eq:hessian_F}
    \begin{aligned}
\nabla^2F(w_k)&=
    \begin{pmatrix}
    \diag(\nabla \cL(\beta_k)) & 0 \\
    0 & - \diag(\nabla \cL(\beta_k))
    \end{pmatrix}
    \\
    &\quad+
    \begin{pmatrix}
   \ \ \  \diag(w_{+,k}) H \diag(w_{+,k}) & - \diag(w_{-,k}) H \diag(w_{+,k})\\
    -\diag(w_{+,k}) H \diag(w_{-,k}) & \ \ \  \diag(w_{-,k}) H \diag(w_{-,k})
    \end{pmatrix}\,.
\end{aligned}
\end{equation}
Let us denote by $M = \begin{pmatrix}
         M_+ & M_{+, -} \\
         M_{+, -} & M_-
     \end{pmatrix} \in \R^{2d \times 2d}$ the second matrix in the previous equality.
With this notation $\Vert \nabla^2 F(w_k) \Vert \leq  \Vert \nabla \cL(\beta_k) \Vert_\infty + 2 \Vert M \Vert $ (where the norm corresponds to the Schatten $2$-norm which is the largest eigenvalue for symmetric matrices). 
Now, notice that:
\begin{align*}
    \Vert M \Vert^2 &= \underset{u \in \R^{2 d}, \Vert u \Vert = 1}{ \sup} \Vert M u \Vert^2 \\
    &= \underset{ \substack{ u_+ \in \R^{d} , \Vert u_+ \Vert = 1 \\
    u_- \in \R^{d} , \Vert u_- \Vert = 1 \\
     (a, b) \in \R^2, a^2 + b^2 = 1 } }{ \sup} \Big \Vert M \begin{pmatrix}
         a \cdot u_+ \\ b \cdot u_-
     \end{pmatrix} \Big \Vert^2\,.
\end{align*}
We have:
\begin{align*}
   \Big \Vert M \begin{pmatrix}
         a \cdot u_+ \\ b \cdot u_-
     \end{pmatrix} \Big \Vert^2 
     &= \Big \Vert  \begin{pmatrix}
         a M_+ u_+ + b M_{+-} u_- \\ a M_{+-} u_+ + b M_- u_-
     \end{pmatrix} \Big \Vert^2 \\
     &= \Vert a M_+ u_+ + b M_{+-} u_- \Vert^2 + \Vert  a M_{+-} u_+ + b M_- u_- \Vert^2 \\
     &\leq 2 \Big ( a^2  \Vert M_+ u_+ \Vert^2 + b^2  \Vert M_{+-} u_- \Vert^2 + a^2 \Vert M_{+-} u_+  \Vert^2 + b^2 \Vert M_- u_-  \Vert^2 \Big )\\
     &\leq 2 \Big ( \Vert M_+ \Vert^2 +  \Vert M_{+-} \Vert^2+ \Vert M_- \Vert^2 \Big )\,.
\end{align*}
Since $\Vert M_{\pm} \Vert \leq \lambda_{max} \cdot \Vert w_\pm \Vert_\infty^2$ and $ \Vert M_{+-} \Vert \leq \lambda_{max} \Vert w_+ \Vert_\infty \Vert w_- \Vert_\infty$ we finally get that 
\begin{align*}
    \Vert M \Vert^2 &\leq 6 \lambda_{max}^2 \cdot \max( \Vert w_+ \Vert_\infty^2, \Vert w_-\Vert_\infty^2)^2 \\
    &\leq 6 \lambda_{max}^2 ( \Vert w_+^2 \Vert_\infty +  \Vert w_-^2 \Vert_\infty)^2 \\
    &\leq 12 \lambda_{max}^2  \Vert w_+^2 + w_-^2 \Vert_\infty^2\,.
\end{align*}

We now upper bound this quantity in the following lemma.

\begin{lemma}
\label{lemma:bounded_sum_w++w-}
    For all $k \geq 0$, the following inequality holds component-wise:
    \begin{align*}
        w_{+, k}^2 + w_{-,k}^2 &= \sqrt{4 \balpha_k^4 + \beta_k^2} \,.
    \end{align*} 
\end{lemma}

\begin{proof}
Notice from the definition of $w_{+, k}$ and $w_{-, k}$ given in the proof of \Cref{prop:tv_md} that:
\begin{align}
\label{eq:w+w-}
|w_{+, k}| |w_{-, k}| = \balpha_{-, k} \balpha_{+, k} = \balpha_k^2.
\end{align}

And $\balpha_0=\balpha^2$. Now since $\balpha_k$ is decreasing coordinate-wise (under our assumptions on the stepsizes, $\gamma_k^2\nabla\cL(\beta_k)^2\leq (1/2)^2<1$), we get that.:
\begin{equation*}
    w_{+,k}^2+w_{-,k}^2 = 2\sqrt{\balpha_k^4 + \beta_k^2} \leq 2\sqrt{ \balpha^4 + \beta_k^2}
\end{equation*}
leading to $w_{+, k}^2 + w_{-,k}^2 \leq \sqrt{4\balpha^4+B^2}$.
\end{proof}

From \cref{lemma:bounded_sum_w++w-}, $w_{+, k}^2+w_{-,k}^2$ is bounded by $2\sqrt{ \balpha^4 +B^2}$. Putting things together we finally get that $\Vert \nabla^2 F(w) \Vert \leq  \Vert \nabla \cL(\beta) \Vert_\infty + 8 \lambda_{max} \sqrt{4 \Vert \balpha \Vert_\infty^4 + B^2}$. Hence,
\begin{equation*}
    L_k\leq \sup_{\NRM{\beta}_\infty\leq B}\NRM{\nabla\cL(\beta)}_\infty + 8\lambda_{\max}\sqrt{\Vert \balpha \Vert_\infty^4+B^2}\leq LB+ 8\lambda_{\max}\sqrt{\Vert \balpha \Vert_\infty^4+B^2}\leq 10LB\,,
\end{equation*}
for $B\geq \Vert \balpha \Vert_\infty^2$.
\end{proof}

We finally prove the bound on $\NRM{\beta_k}_\infty$ independent of $\alpha$ for a uniform initialisation $\balpha = \alpha \mathbf{1}$, using the monotonic property of $\cL$.

\propboundediterates
\begin{proof}
In this proof, we first let $B$ be a bound on the iterates. Tuning stepsizes using this bound, we prove that the iterates are bounded by a some $B'=\cO(\NRM{\beta^\star}_1)$. Finally, we conclude by using the continuity of the iterates (at a finite horizon) that this explicit bound can be used to tune the stepsizes.

 Writing the mirror descent with varying potentials, we have, since $\nabla h_0(\beta_0)=0$,
\begin{equation*}
    \nabla h_k(\beta_k)=-\sum_{\ell<k}\gamma_\ell\nabla\cL(\beta_\ell)\,,
\end{equation*}
leading to, by convexity of $h_k$:
\begin{equation*}
    h_k(\beta_k)-h_k(\beta^\star)\leq \langle \nabla h_k(\beta_k),\beta_k-\beta^\star\rangle = -\sum_{\ell<k}\langle\gamma_\ell\nabla\cL(\beta_\ell),\beta_k-\beta^\star\rangle \,.
\end{equation*}
We then write, using $\nabla\cL(\beta)=H(\beta-\beta^\star)$ for $H=XX^\top$, that $-\sum_{\ell<k}\langle\gamma_\ell\nabla\cL(\beta_\ell),\beta_k-\beta^\star\rangle= -\sum_{\ell<k}\gamma_\ell \langle X^\top(\bar\beta_k-\beta^\star),X^\top(\beta_k-\beta^\star)\rangle \leq  \sum_{\ell<k}\gamma_\ell \sqrt{\cL(\bar\beta_k)\cL(\beta_k)} $, leading to:
\begin{equation*}
    h_k(\beta_k)-h_k(\beta^\star)\leq 2\sqrt{\sum_{\ell<k}\gamma_\ell \cL(\bar\beta_k)\sum_{\ell<k}\gamma_\ell \cL(\beta_k)} \leq 2\sum_{\ell<k}\gamma_\ell \cL(\bar\beta_k) \leq 2D_{h_0}(\ww^\star,\ww^0)\,,
\end{equation*}
where the last inequality holds provided that $\gamma_k\leq \frac{1}{ CLB}$. Thus,
\begin{equation*}
    \psi_{\balpha_k}(\beta_k)\leq \psi_{\balpha_k}(\beta^\star) + 2 \psi_{\balpha_0}(\beta^\star) + \langle \phi_k,\beta_k-\beta^\star\rangle\,.
\end{equation*}
Then, $\langle \phi_k,\beta_k-\beta^\star\rangle\leq \NRM{\phi_k}_1\NRM{\beta_k-\beta^\star}_\infty$ and $\NRM{\phi_k}_1\leq C\lambda_{\max}\sum_{k<K}\gamma_k^2\cL(\beta^k) \leq C\lambda_{\max}\gamma_{\max}h_0(\beta^\star)$.
Then, using
$$ \Vert \beta \Vert_\infty - \frac{1}{\ln(1 / \alpha^2)} \leq \frac{\psi_\alpha(\beta)}{\ln(1 / \alpha^2)} \leq \Vert \beta \Vert_1 \big (1 + \frac{\ln(\Vert \beta \Vert_1 + \alpha^2) }{\ln(1 / \alpha^2)} \big)\,, $$
we have:
\begin{align*}
    \NRM{\beta_k}_\infty &\leq \frac{1}{\ln(1 / \alpha^2)} + \Vert \beta^\star \Vert_1 \big (1 + \frac{\ln(\Vert \beta^\star \Vert_1 + \alpha^2) }{\ln(1 / \alpha^2)} \big) + \Vert \beta^\star \Vert_1 \big (1 + \frac{\ln(\Vert \beta^\star \Vert_1 + \alpha^2) }{\ln(1 / \alpha^2)} \big)\\
    &\quad +  B_0C\lambda_{\max}\gamma_{\max}h_0(\beta^\star)/\ln(1/\alpha^2)\\
    & \leq R + B_0C\lambda_{\max}\gamma_{\max}h_0(\beta^\star)/\ln(1/\alpha^2)\,,
\end{align*}
where $R=\cO(\NRM{\beta^\star}_1)$ is independent of $\alpha$.
Hence, since $B_0=\sup_{k<\infty} \NRM{\beta_k}_\infty<\infty$, we have:
\begin{equation*}
    B_0(1-C\lambda_{\max}\gamma_{\max}h_0(\beta^\star)/\ln(1/\alpha^2))\leq R\implies B_0\leq 2R\,,
\end{equation*}
provided that $\gamma_{\max}\leq 1/(2C\lambda_{\max}h_0(\beta^\star)/\ln(1/\alpha^2))$ (note that $h_0(\beta^\star)/\ln(1/\alpha^2)$ is independent of $\alpha^2$).

Hence, if for all $k$ we have $\gamma_k\leq \frac{1}{C'LB}$ where $B$ bounds all $\NRM{\beta_k}_\infty$, we have $\NRM{\beta_k}_\infty\leq 2R$ for all $k$, where $R=\cO(\NRM{\beta^\star}_1)$ is independent of $\alpha$ and stepsizes $\gamma_k$.

Let $K>0$ be fixed, and $$\bar \gamma=\inf\set{\gamma>0\quad \text{s.t.}\quad \sup_{k\leq K}\NRM{\beta_k}_\infty>2R}\,.$$ 
For $\gamma\geq0$ a constant stepsize, let $$\varphi(\gamma)=\sup_{k\leq K}\NRM{\beta_k}_\infty\,,$$ which is a continuous function of $\gamma$.
For $r>0$, let $\gamma_r=\frac{1}{C'Lr}$. 

An important feature to notice is that if $\gamma<\gamma_r$ and $r$ bounds all $\NRM{\beta_k}_{\infty},k\leq K$, then $\varphi(\gamma)\leq R$, as shown above.
We will show that we have $\bar\gamma\geq\gamma_{2R}$.
Reasoning by contradiction, if $\bar\gamma<\gamma_{2R}$: by continuity of $\varphi$, we have $\varphi(\bar\gamma)\leq R$ and thus, there exists some small $0<\eps<\gamma_{2R}-\bar\gamma$ such that for all $\gamma\in[\bar\gamma,\bar\gamma+\eps]$, we have $\varphi(\bar\gamma)\leq 2R$. 

However, such $\gamma$'s verify both $\varphi(\gamma)\leq 2R$ (since $\gamma\in[\bar\gamma,\bar\gamma+\eps]$ and by definition of $\eps$) and $\gamma\leq \gamma_{2R}$ (by definition of $\eps$), and hence $\varphi(\gamma)\leq R$. This contradicts the infimum of $\bar\gamma$, and hence $\bar\gamma\geq \gamma_{2R}$.
Thus, for $\gamma\leq \gamma_{2R}=\frac{1}{2C'LR}$, we have $\NRM{\beta_k}_\infty\leq R$.
\end{proof}

\end{proof}

\section{Proof of \cref{thm:implicit_bias,thm:conv_iterates}, and of \Cref{prop:conv_quantitative}}\label{app:sec:proof_main}

\subsection{Proof of \cref{thm:implicit_bias,thm:conv_iterates}}

We are now equipped to prove \cref{thm:implicit_bias} and \cref{thm:conv_iterates}, condensed in the following Theorem.

\begin{restatable}{theorem}{maintheorem}
\label{thm:conv_gd_appendix}
Let $(u_k,v_k)_{k\geq0}$ follow the mini-batch SGD recursion \eqref{eq:SGD_recursion} initialised at $u_0=\sqrt{2} \balpha\in\R_{>0}^d$ and $v_0=\mathbf{0}$, and let $(\beta_k)_{k\geq0}=(u_k\odot v_k)_{k\geq0}$.
There exists and explicit $B>0$ and a numerical constant $c>0$ such that: 
\begin{enumerate}
\item For stepsizes satisfying $\gamma_k\leq \frac{c}{LB}$, the iterates satisfy $\NRM{\gamma_k\nabla\cL_{\cB_k}(\beta_k)}_\infty\leq 1$ and $\NRM{\beta_k}_\infty\leq B$ for all $k$;
    % \item $\NRM{\beta_k}_\infty\leq B$ and $\NRM{\gamma_k\nabla\cL(\beta_k)}_\infty\leq \half$ for all $k\geq 0$;
%    \item $\NRM{\beta_k}_\infty\leq B$;
    \item For stepsizes satisfying $\gamma_k\leq \frac{c}{LB}$, $(\beta_k)_{k\geq0}$ converges almost surely to some $\beta_\infty^\star\in\cS$,
    %;
    %\item $\beta_{\infty}^\star$ satisfies:
    \item If $(\beta_k)_k$ and the neurons $(u_k,v_k)_k$ respectively converge to a model $\beta^\star_\infty$ and neurons $(u_\infty,v_\infty)$ satisfying $\beta^\star_\infty\in\cS$ (and $\beta^\star_\infty=u_\infty\odot v_\infty$), then for almost all stepsizes (with respect to the Lebesgue measure), the limit $\beta^\star_\infty$ satisfies:
    \begin{equation*}
        \beta^\star_{\infty} =  \underset{ \beta^\star \in \cS }{\argmin} \  D_{\psi_{\balpha_\infty}}(\beta^\star,\tilde\beta_0) \,,
    \end{equation*}
    for $\balpha_\infty\in\R^d_{>0}$ and $\tilde\beta_0\in\R^d$ satisfying
%\item $\balpha_\infty\in\R^d$ satisfies $\balpha_\infty\leq \balpha$ and is equal to:
    \begin{equation*}
        \balpha_\infty^2=\balpha^2\odot\exp\left(-\sum_{k=0}^\infty q\big(\gamma_k \nabla \cL_{\cB_k}(\beta_k)\big)\right)\,,%<\balpha^2, 
    \end{equation*}
    where $q(x)=-\frac{1}{2} \ln((1-x^2)^2)\ge0$  for  $|x|\le \sqrt{2}$, %\sg{we cover this in the notation now}\footnote{The application of $\ln(\cdot)$ over a vector must be understood elementwise, with the convention $\ln(0)=\infty$.} .
and $\Tilde{\beta}_0$ is a perturbation term equal to:   
    \begin{align*}
         \tilde{\beta}_0=\half\big(\balpha_+^2-\balpha_-^2\big),
    \end{align*}
    where, $q_\pm(x)= \mp 2x - \ln((1\mp x)^2)$, and  $\balpha_{\pm}^2=\balpha^2\odot\exp\left(-\sum_{k=0}^\infty q_\pm(\gamma_k \nabla \cL_{\cB_k}(\beta_k))\right)$.
\end{enumerate}

% For lower bounded stepsizes verifying $\gamma_k\leq \frac{c}{LB}$ where $B$ is a bound on the iterates $\ell^\infty$ norm, the following two points hold:
% \begin{enumerate}
%     \item The series $\sum_k\gamma_k\cL(\beta_k)$ converges almost surely, and there exist $\balpha_\infty, \balpha_{\pm,\infty},\phi_\infty\in \R^d$ such that $\balpha_k\to \balpha_\infty$, $\balpha_{\pm,k}\to\balpha_{\pm,\infty}$ and $\phi_k\to\phi_\infty$;
%     \item $(\beta_k)_{k\geq0}$ converges almost surely to an interpolator $\beta^\star_\infty$ satisfying 
%         \begin{equation*}
%             \beta^\star_{\infty} = \argmin\set{ D_{\psi_{\balpha_\infty}}(\beta,\tilde\beta_0)\,,\quad \beta\in\cS}\,,
%         \end{equation*}
%         where $\tilde \beta_0=\balpha_{+,\infty}^2-\balpha_{-,\infty}^2$ and $\balpha_\infty^2<\balpha^2$ component-wise.
% \end{enumerate}
\end{restatable}

\begin{proof}
\myparagraph{Point 1.}
The first point of the Theorem is a direct consequence of Corollary~\ref{cor:bound_iterates_sgd} and the bounds proved in \cref{app:bound_iterates}.

\myparagraph{Point 2.}
Then, for stepsizes $\gamma_k\leq \frac{c}{LB}$, using Proposition~\ref{prop:bregman_descent_and_bound} for any interpolator $\beta^\star \in \cS$:
\begin{equation}
\label{app:eq:sumlosses}
    D_{h_{k+1}}(\ww^\star,\ww_{k+1})\leq D_{h_k}(\ww^\star,\ww_k)-\gamma_k\cL_{\cB_k}(\ww_k)\,.
\end{equation}
Hence, summing:
\begin{equation*}
    \sum_k \gamma_k\cL_{\cB_k}(\beta_k)\leq D_{h_0}(\beta^\star,\beta_0)\,,
\end{equation*}
so that the series converges.

Under our stepsize rule, $\NRM{\gamma_k\nabla\cL_{\cB_k}(\beta_k)}_\infty\leq \half$, leading to $\NRM{q(\gamma_k\nabla\cL_{\cB_k}(\beta_k)}_\infty\leq 3 \NRM{\gamma_k\nabla\cL_{\cB_k}(\beta_k)}_\infty^2$ by \cref{tech_lemma:bound_q}. Using $\NRM{\nabla \cL_{\cB_k}(\beta_k)}^2\leq 2L_2\cL_{\cB_k}(\beta_k)$, we have that $\ln(\balpha_{\pm,k})$, $\ln(\balpha_k)$ all converge.

We now show that $\sum_k \gamma_k\cL(\beta_k)<\infty$. We have:
\begin{equation*}
    \sum_{\ell<k}\cL(\beta_k)=\sum_{\ell<k}\gamma_k\cL_{\cB_k}(\beta_k) + M_k\,,
\end{equation*}
where $M_k=\sum_{\ell<k}\gamma_k(\cL(\beta_k)-\cL_{\cB_k}(\beta_k))$. We have that $(M_k)$ is a martingale with respect to the filtration $(\cF_k)$ defined as $\cF_k=\sigma(\beta_\ell,\ell\leq k)$. Using our upper-bound on $\sum_{\ell<k}\gamma_k\cL_{\cB_k}(\beta_k)$, we have:
\begin{equation*}
    M_k \geq \sum_{\ell<k}\gamma_k\cL(\beta_k)-\sum_{\ell<k}\gamma_k\cL_{\cB_k}(\beta_k)\geq - D_{h_0}(\beta^\star,\beta_0)\,,
\end{equation*}
and hence $(M_k)$ is a lower bounded martingale. Using Doob's first martingale convergence theorem (a lower bounded super-martingale converges almost surely, \citet{Doob1990-jw}), $(M_k)$ converges almost surely. Consequently, since $\sum_{\ell<k}\gamma_k\cL(\beta_k)=\sum_{\ell<k}\gamma_k\cL_{\cB_k}(\beta_k) + M_k$, we have that $\sum_{\ell<k}\gamma_k\cL(\beta_k)$ converges almost surely (the first term is upper bounded, the second converges almost surely). 

We now prove the convergence of $(\ww_k)$. Since it is a bounded sequence, let $\ww_{\sigma(k)}$ be a convergent sub-sequence and let $\ww^\star_\infty$ denote its limit: $\ww_{\sigma(k)}\to \beta^\star_\infty$.

Almost surely, $\sum_k \gamma_k\cL(\beta_k)<\infty$ and so $\gamma_k\cL(\beta_k)\to0$, leading to $\cL(\beta_k)\to 0$ since the stepsizes are lower bounded, so that $\cL(\beta_{\sigma(k)})\to0$,
and hence $\cL(\beta^\star_\infty)=0$: this means that $\beta^\star_\infty$ is an interpolator.

Since the quantities $(\balpha_k)_k$, $(\balpha_{\pm,k})_k$ and $(\phi_k)_k$ converge almost surely to $\balpha_\infty$, $\balpha_{\pm}$ and $\phi_\infty$, we get that the potentials $h_k$ uniformly converge to $h_\infty = \psi_{\balpha_\infty} - \langle \phi_\infty, \cdot \rangle$ on all compact sets. Now notice that we can decompose $\nabla h_\infty(\beta^\star_\infty)$ as:
\[\nabla h_\infty(\beta^\star_\infty)=\big(\nabla h_\infty(\beta^\star_\infty)-\nabla h_\infty(\ww_{\sigma(k)})\big) + \big(\nabla h_\infty(\ww_{\sigma(k)})-\nabla h_{\sigma(k)}(\ww_{\sigma(k)})\big) +\nabla h_{\sigma(k)}(\ww_{\sigma(k)}).\]

The first two terms converge to 0: the first is a direct consequence of the convergence of the extracted subsequence, the second is a consequence of the uniform convergence of $h_{\sigma(k)}$ to $h_\infty$ on compact sets. Finally the last term is always in $\Span(x_1, \dots, x_n)$ due to \Cref{prop:tv_md}, leading to $\nabla h_\infty(\beta^\star_\infty)\in\Span(x_1, \dots, x_n)$.
Consequently, $\nabla h_\infty(\beta^\star_\infty)\in\Span(x_1, \dots, x_n)$. Notice that from the definition of $h_\infty$, we have that 
$\nabla h_\infty(\beta^\star_\infty) = \nabla \psi_{\balpha_\infty}(\beta^\star_\infty) - \phi_\infty$.  Now since $\phi_\infty = \half\argsinh(\frac{\balpha_{+}^2-\alpha_{-}^2}{2\balpha_\infty^2})$, one can notice that $\tilde{\beta}_0$ is precisely defined such that $\nabla\psi_{\alpha_\infty}(\tilde\beta_0) = \phi_\infty$. Therefore $\nabla \psi_{\balpha_\infty}(\beta^\star_\infty) - \nabla\psi_{\balpha_\infty}(\tilde\beta_0) \in\Span(x_1, \dots, x_n)$. This condition along with the fact that  $\ww^\star_\infty$ is an interpolator are exactly the optimality conditions of the convex minimisation problem:
\[\underset{ \beta^\star \in \cS }{\min} \  D_{\psi_{\balpha_\infty}}(\beta^\star,\tilde\beta_0) \]
Therefore $\beta^\star_\infty$ must be equal to the unique minimiser of this problem.
Since this is true for any sub-sequence we get that  $\ww_k$ converges almost surely to:
\[ \beta^\star_ \infty = \underset{\beta \in \cS}{\argmin} \ \ D_{\psi_{\balpha_\infty}}(\beta^\star,\tilde\beta_0).\]
% We now need to justify the expression of $\tilde \beta_0$. We have that $\beta_\infty^\star$ minimizes $h_\infty$ over $\cS$, where $h_\infty(\beta)=\psi_{\alpha_\infty}(\beta)-\langle\phi_\infty,\beta\rangle$. Thus, in order to have $h_\infty(\beta)=D_{\psi_{\alpha_\infty}}(\beta,\tilde\beta_0)+{\rm constant}$ for all $\beta\in\R^d$, it is necessary and sufficient to have $\nabla \psi_{\alpha_\infty}(\beta)-\phi_\infty=\nabla \psi_{\alpha_\infty}(\beta)-\nabla \psi_{\alpha_\infty}(\tilde\beta_0)$ for all $\beta$, \emph{i.e.}, $\nabla \psi_{\alpha_\infty}(\tilde\beta_0)=\phi_\infty$.
% Since $\phi_\infty=\half\argsinh(\frac{\alpha_{+,\infty}^2-\alpha_{-,\infty}^2}{2\alpha_\infty^2})$ and $\nabla\psi_{\alpha_\infty}(\tilde\beta_0)=\half\argsinh(\frac{\tilde\beta_0}{\alpha_\infty^2})$, we have $\tilde\beta_0=\half(\alpha_{+,\infty}^2-\alpha_{-,\infty}^2)$.

\myparagraph{Point 3.}
From what we just proved, note that it is sufficient to prove that $\balpha_k,\balpha_{\pm,k},\phi_k$ converge to limits $\balpha_\infty,\balpha_{\pm,\infty},\phi_\infty$ satisfying $\balpha_\infty,\balpha_{\pm,\infty}\in\R^d_{>0}$ (with positive and non-null coordinates) and $\phi_\infty\in\R^d$. Indeed, if this holds and since we assume that the iterates converge to some interpolator, we proved just above that this interpolator is uniquely defined through the desired implicit regularization problem.
We thus prove the convergence of $\balpha_k,\balpha_{\pm,k},\phi_k$.

Note that the convergence of $u_k,v_k$ is equivalent to the convergence of $w_{\pm,k}$ in the $w_+^2-w_-^2$ parameterisation used in our proofs, that we use there too.
We have:
\begin{equation*}
    w_{\pm, k+1} =  (1 \mp \gamma_k \nabla \cL_{\cB_k}(\beta_k)) \odot w_{\pm, k}\,,
\end{equation*}
so that 
\begin{equation*}
    \ln(w_{\pm, k}^2) = \sum_{\ell<k} \ln((1 \mp \gamma_\ell \nabla \cL_{\cB_\ell}(\beta_\ell))^2)\,.
\end{equation*}

We now assume that stepsizes are such that for all $\ell\geq 0$ and $i\in[d]$, stepsizes are such that we have $|\gamma_\ell \nabla_i \cL_{\cB_\ell}(\beta_\ell)|\ne 1$: this is true for all stepsizes except a countable number of stepsizes, and so this is true for almost all stepsizes.
Since we assume that the iterates $\beta_k$ converge to some interpolator, this leads to $\gamma_\ell \nabla \cL_{\cB_\ell}(\beta_\ell) \to 0$ if we assume that stepsizes do not diverge.

Taking the limit, we have 
\begin{equation*}
    \ln(w_{\pm, \infty}^2) = \sum_{\ell<\infty} \ln((1 \mp \gamma_\ell \nabla \cL_{\cB_\ell}(\beta_\ell))^2)\,.
\end{equation*}
This limit is in $(\set{-\infty}\cup\R)^d$ (since $w_{\pm,\infty}\in\R^d$), and a coordinate of the limit is equal to $-\infty$ if and only if the sum on the RHS diverges to $-\infty$ (note that from our assumption just above, no term of the sum can be equal to $-\infty$).

We have $\ln((1 \mp \gamma_\ell \nabla \cL_{\cB_\ell}(\beta_\ell))^2)\sim \mp 2\gamma_\ell \nabla \cL_{\cB_\ell}(\beta_\ell)$ as $\ell\to \infty$, so that if for some coordinate $i$ we have $\sum_{\ell}\gamma_\ell \nabla_i \cL_{\cB_\ell}(\beta_\ell)=\mp\infty$, then the coordinate $i$ of the limit satisfies $\ln(w_{i,\pm, \infty}^2)=+\infty$, which is impossible.
Hence, the sum $\sum_{\ell}\gamma_\ell \nabla \cL_{\cB_\ell}(\beta_\ell)$ is in $\R^d$ (and is thus converging); consequently, $\sum_{\ell}\gamma_\ell^2 \nabla \cL_{\cB_\ell}(\beta_\ell)^2$ converges and thus $\sum_{\ell}q(\gamma_\ell \nabla \cL_{\cB_\ell}(\beta_\ell))$ and $\sum_{\ell}q_\pm(\gamma_\ell \nabla \cL_{\cB_\ell}(\beta_\ell))$ all converge: the sequences $\balpha_k,\balpha_{\pm,k}$ thus converge to limits in $\R^d_{>0}$, and $\phi_k$ converges, concluding our proof.

\end{proof}

\subsection{Proof of \Cref{prop:conv_quantitative}}

We begin with the following Lemma, that explicits the curvature of $D_h$ around the set of interpolators.

\begin{lemma}\label{lem:curvature}
    For all $k\geq 0$, if $\cL(\beta_k)\leq \frac{1}{2\lambda_{\max}}(\alpha^2\lambda^+_{\min})^2$, we have $\NRM{\beta_k-\beta^\star_{\alpha_k}}^2\leq 2B(\alpha^2\lambda^+_{\min})^{-1}\cL(\beta_k)$.
\end{lemma}

\begin{proof}
     Recall that the sequence $\zz^k=\nabla h_k(\ww^k)$ satisfies $\zz^0=0$ and $\zz^{k+1}=\zz^k-\gamma_k\cL(\ww^k)$, so that we have that $\zz^k\in V=\Ima(\XX\XX^\top)$ for all $k\geq0$.
    Then, let $\ww^\alpha_k$ be the unique minimizer of $h_k$ over $\cS$ the space of interpolators: $\ww^\alpha_k$ is exactly characterized by $\XX^\top\ww^\alpha_k=\YY$ and $\nabla h_k(\ww^\alpha_k)\in V$.
    We define $\zz^\alpha_k\in V$ as $\zz^\alpha_k=\nabla h_k(\ww^\alpha_k)$.
    
    % Using duality and Equation~\eqref{eq:2_descent_bis} (but for $\ww^\alpha$ instead of $\ww^\star$ now), we have:
    % \begin{equation}
    %     D_{h^*}(\zz^{k+1},\zz^\alpha_k)\leq D_{h^*}(\zz^{k},\zz^\alpha_k) - \gamma \cL(\nabla h^*(\zz^k))\,.
    % \end{equation}
    
    Now, fix $\zz^\alpha=\zz^\alpha_k$ and $h=h_k$, and let us define $\psi:\zz\in V\to D_{h^*}(\zz,\zz^\alpha)$ and $\phi:\zz\in V\to\cL(\nabla h^*(\zz))$.
    We next show that for all $\zz\in V$, there exists $\mu_z$ such that $\nabla^2\phi(\zz)\geq\mu_z\nabla^2\psi(\zz)$, and that $\mu_z\geq \mu$ for $\zz$ in an open convex set of $V$ around $\zz^\alpha$, for some $\mu>0$.
    For $A\in\R^{d\times d}$ an operator/matrix on $\R^d$, let us denote $A_V$ its restriction/co-restriction to $V$.
    
    First, for $\zz\in V$, we have $\nabla^2\psi(\zz)=\nabla^2(h^*(\zz)-h^*(\zz)-\langle\nabla h^*(\zz^\alpha),z-z^\alpha\rangle)(\zz)=\nabla^2h^*(\zz)_V$.
    Then, $\nabla \phi(\zz)=\nabla^2h^*(\zz)\nabla\cL(\nabla h^*(\zz))$, so that $\nabla^2\phi(\zz)= \big(\nabla^2h^*(\zz)\nabla^2\cL(\nabla h^*(\zz))\nabla^2h^*(\zz)\big)_V + \nabla^3h^*(\zz)(\nabla\cL(\nabla h^*(\zz)),\cdot,\cdot)_V$.

    Since $h$ is $1/(2\alpha^2)$ smooth (on $\R^d$ and thus on $V$), $h^*$ is $2\alpha^2$ strongly convex (on $V$ and on $\R^d$).
    Using $V=\Ima(\XX\XX^\top)$ and $\nabla^2\cL\equiv \XX\XX^\top$, we have $ \big(\nabla^2h^*(\zz)\nabla^2\cL(\nabla h^*(\zz))\nabla^2h^*(\zz)\big)_V=\nabla^2h^*(\zz)_V\nabla^2\cL(\nabla h^*(\zz))_V\nabla^2h^*(\zz)_V $, and thus $\big(\nabla^2h^*(\zz)\nabla^2\cL(\nabla h^*(\zz))\nabla^2h^*(\zz)\big)_V \succeq 2\alpha^2\lambda_{\min}^+ \nabla^2h^*(\zz)_V$.

    For the other term of $\nabla^2\phi$, namely $\nabla^3h^*(\zz)(\nabla\cL(\nabla h^*(\zz)),\cdot,\cdot)_V$, we compute $\nabla^3_{ijk}h^*(\zz)=\one_{i=j=k}2\alpha^2_{i,k}\sinh(\zz_i)$, leading to:
    $\nabla^3h^*(\zz)(\nabla\cL(\nabla h^*(\zz)),\cdot,\cdot)_V=\diag(2\alpha^2\sinh(\zz)\odot(\XX\XX^\top(2\alpha^2\sinh(\zz)-\ww^\alpha)))_V$. Thus, writing $\ww_\zz=2\alpha^2_{i,k}\sinh(\zz)=\nabla h^*(\zz)$ the primal surrogate of $\zz$, we have:
    \begin{align*}
        \nabla^3h^*(\zz)(\nabla\cL(\nabla h^*(\zz)),\cdot,\cdot)_V&=\diag(2\alpha^2_{i,k}\sinh(\zz)\odot(\XX\XX^\top(\ww_\zz-\ww^\alpha_k)))_V\\
        &\succeq -\NRM{\XX\XX^\top(\ww_\zz-\ww^\alpha_k)}_{\infty}\diag(2\alpha^2_k\odot|\sinh(\zz)|)_V\\
        &\succeq -\NRM{\XX\XX^\top(\ww_\zz-\ww^\alpha_k)}_\infty\diag(2\alpha^2_k\odot\cosh(\zz))_V\\
        &= -\NRM{\XX\XX^\top(\ww_\zz-\ww^\alpha_k)}_\infty \nabla^2 \psi(\zz)\,.
    \end{align*}
    Wrapping things together, 
    \begin{align*}
        \nabla^2\phi(\zz)\succeq \big(2\alpha^2\lambda^+_{\min}-\NRM{\XX\XX^\top(\ww_\zz-\ww^\alpha)}_\infty)\nabla^2\psi(\zz)\,.
    \end{align*}
    Let $\cZ=\set{\zz\in V:\NRM{\XX\XX^\top(\ww_\zz-\ww^\alpha_k)}_\infty < \alpha^2\lambda^+_{\min}}$ that satisfies $ \set{\ww\in V: \cL(\ww_\zz)< \frac{1}{2\lambda_{\max}}(\alpha^2\lambda^+_{\min})^2}\subset\cZ$.
    $\cZ$ is an open convex set of $V$ containing $\zz^\alpha$. On $\cZ$, $\nabla^2\phi\succeq\alpha^2\lambda^+_{\min}\nabla^2\psi$, and $\psi(\zz^\alpha)=\phi(\zz^\alpha)=0$, so that for all $\zz\in\cZ$, we have $\phi(\zz)\geq \alpha^2\lambda^+_{\min}\psi(\zz)$.
    Hence, for all $\zz\in\cZ$, we have $D_{h_k}(\beta_k^\alpha,\beta_\zz)\leq D_{h^\star}(\zz,\zz^\alpha)\leq (\alpha^2\lambda^+_{\min})^{-1} \cL(\beta_\zz)$, and using the fact that $D_{h_k}$ is $\frac{1}{4B}$ strongly convex, we obtain, for $\beta_\zz=\beta_k$ (since $\zz^k\in V$): if $\cL(\beta_k)\leq \frac{1}{2\lambda_{\max}}(\alpha^2\lambda^+_{\min})^2$, we have $\NRM{\beta^\alpha_k-\beta_k}_2^2\leq (\alpha^2\lambda^+_{\min})^{-1} \cL(\beta_k)$.
    % \begin{equation}\label{eq:2_descent_bisbis}
    %     D_{h^*}(\zz^{k+1},\zz^\alpha)\leq (1-\gamma\alpha^2\lambda_{\min}^+\one_{\zz_k\in\cZ})D_{h^*}(\zz^{k},\zz^\alpha)\,.
    % \end{equation}
\end{proof}

\begin{proposition}
\label{prop:key_identity}
As assume $\cL$ is $L_r$-relatively smooth with respect to all the $h_k$'s. Then for all $\beta$ we have the following inequality.
\begin{align*}
    \gamma_k ( \cL(\beta_{k+1}) - \cL(\beta) ) &\leq  D_{h_k}(\beta, \beta_k) - D_{h_{k+1}}(\beta, \beta_{k+1}) - (1 - \gamma_k L_r)  D_{h_k}(\beta_{k+1}, \beta_k)\\
    &\quad+ (h_{k+1}- h_{k})(\beta)  - (h_{k+1} - h_k)(\beta_{k+1})\,.
\end{align*}
\end{proposition}

\begin{proof}
For any $\beta, \beta_k, \beta_{k+1}$, the following holds (three points identity for time varying potentials, \Cref{prop:MD_varying}):
\begin{align*}
    D_{h_k}(\beta, \beta_k) - D_{h_{k+1}}(\beta, \beta_{k+1}) &= \big [ h_k(\beta) - (h_k(\beta_k) + \langle \nabla h_k(\beta_k), \beta - \beta_k \rangle) \big ]  \\
     &\qquad - \big [ h_{k+1}(\beta) - (h_{k+1}(\beta_{k+1}) + \langle \nabla h_{k+1}(\beta_{k+1}), \beta - \beta_{k+1} \rangle) \big ] \\
     &= h_k(\beta) - h_{k+1}(\beta) +  \langle \nabla h_{k+1}(\beta_{k+1}) - \nabla h_k(\beta_k), \beta - \beta_{k+1} \rangle \\
     &\qquad +  h_{k+1}(\beta_{k+1}) - \big [ h_k(\beta_k) + \langle \nabla h_k(\beta_k), \beta_{k+1} - \beta_k \rangle  \big ] \\
     &= h_k(\beta) - h_{k+1}(\beta) +  \langle \nabla h_{k+1}(\beta_{k+1}) - \nabla h_k(\beta_k), \beta - \beta_{k+1} \rangle \\
     &\qquad +  h_{k+1}(\beta_{k+1}) - h_k(\beta_{k+1}) + D_{h_k}(\beta_{k+1}, \beta_k).
\end{align*}
Rearranging and plugging in our mirror update we obtain that for all $\beta$:
\begin{align*}
    \gamma_k  \langle \nabla \cL(\beta_k), \beta_{k+1} - \beta \rangle & =  D_{h_k}(\beta, \beta_k) - D_{h_{k+1}}(\beta, \beta_{k+1})\\
    &\quad- D_{h_k}(\beta_{k+1}, \beta_k) - (h_{k+1} - h_k)(\beta_{k+1}) + (h_{k+1}- h_{k})(\beta).
\end{align*}
From the convexity of $\cL$ and its $L_r$-relative smoothness we also have that:
\begin{align*}
    \cL(\beta_{k+1}) \leq \cL(\beta) + \langle \nabla \cL(\beta_k), \beta_{k+1} - \beta \rangle + L_r D_{h_k}(\beta_{k+1}, \beta_k),
\end{align*}
Finally: 
\begin{align*}
    \gamma_k ( \cL(\beta_{k+1}) - \cL(\beta) ) &\leq  D_{h_k}(\beta, \beta_k) -  D_{h_{k+1}}(\beta, \beta_{k+1}) - (1 - \gamma_k L_r) D_{h_k}(\beta_{k+1}, \beta_k)  \\
    &\quad+ (h_{k+1}- h_{k})(\beta) - (h_{k+1} - h_k)(\beta_{k+1}).
\end{align*}
Note that in our setting, for any $\beta$, $k \mapsto h_k(\beta)$ is \textbf{increasing}. We can therefore write that:
\begin{align*}
    \gamma_k ( \cL(\beta_{k+1}) - \cL(\beta) ) \leq  D_{h_k}(\beta, \beta_k) -  D_{h_{k+1}}(\beta, \beta_{k+1}) - (1 - \gamma_k L_r) D_{h_k}(\beta_{k+1}, \beta_k) + (h_{k+1}- h_{k})(\beta).
\end{align*}
In particular, for $\beta = \beta^*$:
\begin{align*}
    \gamma_k \cL(\beta_{k+1}) &\leq  D_{h_k}(\beta^*, \beta_k) -  D_{h_{k+1}}(\beta^*, \beta_{k+1})   - (1 - \gamma_k L) D_{h_k}(\beta_{k+1}, \beta_k)  + (h_{k+1}- h_{k})(\beta^*)\\
    &\quad- (h_{k+1} - h_k)(\beta_{k+1}) \\ 
    &\leq  D_{h_k}(\beta^*, \beta_k) -  D_{h_{k+1}}(\beta^*, \beta_{k+1})   - (1 - \gamma_k L_r) D_{h_k}(\beta_{k+1}, \beta_k) + (h_{k+1}- h_{k})(\beta^*) 
\end{align*}
and in $\beta = \beta_k$:
\begin{align*}
    \gamma_k  \cL(\beta_{k+1}) &\leq  \gamma_k \cL(\beta_k)   -  D_{h_{k+1}}(\beta_k, \beta_{k+1}) - (1 - \gamma_k L_r) D_{h_k}(\beta_{k+1}, \beta_k)  + (h_{k+1}- h_{k})(\beta_k) \\
    &\quad- (h_{k+1} - h_k)(\beta_{k+1}) \\ 
     &\leq  \gamma_k \cL(\beta_k)   -  D_{h_{k+1}}(\beta_k, \beta_{k+1}) - (1 - \gamma_k L_r) D_{h_k}(\beta_{k+1}, \beta_k) + (h_{k+1}- h_{k})(\beta_k)
\end{align*}
\end{proof}

\begin{proof}[Proof of \Cref{prop:conv_quantitative}]
    We apply \Cref{prop:key_identity} for $\beta=\beta_k$, with $L_r=4BL$ (using \Cref{lemma:relative_smoothness}) and replacing $\cL$ by $\cL_{\cB_k}$, to obtain:
    \begin{align*}
    \gamma_k ( \cL_{\cB_k}(\beta_{k+1}) - \cL_{\cB_k}(\beta_k) ) &\leq   - D_{h_{k+1}}(\beta_k, \beta_{k+1}) - (1 - \gamma_k L_r)  D_{h_k}(\beta_{k+1}, \beta_k)\\
    &\quad+ (h_{k+1}- h_{k})(\beta_k)  - (h_{k+1} - h_k)(\beta_{k+1})\,, 
\end{align*}
and thus, taking the mean wrt $\cB_k$,
\begin{align*}
    \gamma_k ( \E_{\cB_k}\cL(\beta_{k+1}) - \cL(\beta_k) ) &\leq  - \E_{\cB_k}D_{h_{k+1}}(\beta_k, \beta_{k+1}) - (1 - \gamma_k L_r) \E_{\cB_k} D_{h_k}(\beta_{k+1}, \beta_k)\\
    &\quad+ \E_{\cB_k}(h_{k+1}- h_{k})(\beta_k)  - \E_{\cB_k}(h_{k+1} - h_k)(\beta_{k+1})\\
     &\leq   - (1 - \gamma_k L_r) \E_{\cB_k} D_{h_k}(\beta_{k+1}, \beta_k)\\
    &\quad+ \E_{\cB_k}(h_{k+1}- h_{k})(\beta_k)  - \E_{\cB_k}(h_{k+1} - h_k)(\beta_{k+1})\,.
\end{align*}
First, as in the proof of \Cref{prop:bregman_descent}, using the fact that $h_k$ is $\ln(1/\alpha_k)$ smooth, 
\begin{align*}
    D_{h_{k}}(\ww_{k+1},\beta_k) 
    &\geq \frac{1}{2\ln(1/\alpha_k)} \NRM{\nabla h_{k}(\ww_{k})-\gamma_k\nabla \cL_{\cB_k}(\ww_k)-\nabla h_{k}(\ww_k) +\nabla h_{k+1}(\ww_{k+1})-\nabla h_{k}(\ww_{k+1})}_2^2\\
    &\geq -\frac{1}{2\ln(1/\alpha_k)}\NRM{\nabla h_{k}(\ww_{k})-\nabla h_{k+1}(\ww_k)}_2^2+ \frac{1}{4\ln(1/\alpha_k)}\NRM{\gamma_k\nabla \cL_{\cB_k}(\ww_k)}_2^2\,,
\end{align*}
and thus
\begin{align*}
       \E D_{h_{k}}(\ww_{k+1},\beta_k)      &\geq \esp{-\frac{1}{2\ln(1/\alpha_k)}\NRM{\nabla h_{k}(\ww_{k})-\nabla h_{k+1}(\ww_k)}_2^2+ \frac{\lambda_b}{2\ln(1/\alpha_k)}\gamma_k^2 \cL_{\cB}(\ww_k)}\,.
\end{align*}
Now, we apply \cref{tech_lemma:bound_hk+1-hk} assuming that $\Vert \beta^\star \Vert_\infty, \Vert \beta_{k+1} \Vert_\infty \leq B$ (which is satisfied since we are under the assumption of \Cref{thm:conv_iterates}):
\begin{align*}
    (h_{k+1}- h_{k})(\beta_k)  - (h_{k+1} - h_k)(\beta^\star) \leq 24 B L\gamma_k^2\cL_{\cB_k}(\beta_k)\,.
\end{align*}
Using $|\nabla h_k(\ww)-\nabla h_{k+1}(\ww)|\leq 2\delta_k$ where $ \delta_k = q(\gamma_k \nabla \cL_{\cB_k}(\beta_k))$ as in \Cref{prop:bregman_descent}, we have:
\begin{align*}
    \E\NRM{\nabla h_{k}(\ww_{k})-\nabla h_{k+1}(\ww_k)}_2^2\leq 16B \gamma_k^2 \E\NRM{\nabla\cL_{\cB_k}(\beta_k)}^2\leq 32BL \gamma_k^2 \E\cL(\beta_k) \,.
\end{align*}
Wrapping everything together,
\begin{align*}
    \esp{\cL(\beta_{k+1})-\cL(\beta_k)}&\leq -(1-\gamma_k 4BL) \frac{\lambda_b}{2\ln(1/\alpha_k)}\gamma_k\E\cL(\beta_k)\\
    &\quad +\big(\gamma_k^2(1-4\gamma_kBL)24BL +\frac{32BL}{\ln(1/\alpha_k)}\big)\gamma_k^2\E\cL(\beta_k)\,.
\end{align*}
Thus, for $\gamma_k\leq \frac{c'}{LB\ln(1/(\min_i \alpha_{k,i}))}$, we have the first part of \Cref{prop:conv_quantitative}.

Using \Cref{lem:curvature}, we then have:
\begin{align*}
    \esp{\NRM{\beta_k-\beta^\star_{\alpha_k}}^2}&=\esp{\one_\set{\cL(\beta_k)\leq \frac{1}{2\lambda_{\max}}(\alpha^2\lambda^+_{\min})^2}\NRM{\beta_k-\beta^\star_{\alpha_k}}^2}\\
    &\quad+\esp{\one_\set{\cL(\beta_k)> \frac{1}{2\lambda_{\max}}(\alpha^2\lambda^+_{\min})^2}\NRM{\beta_k-\beta^\star_{\alpha_k}}^2}\\
    &\leq \esp{\one_\set{\cL(\beta_k)\leq \frac{1}{2\lambda_{\max}}(\alpha^2\lambda^+_{\min})^2}2B(\alpha^2\lambda^+_{\min})^{-1}\cL(\beta_k)}\\
    &\quad+\proba{\cL(\beta_k)> \frac{1}{2\lambda_{\max}}(\alpha^2\lambda^+_{\min})^2}\times 4B^2\\
    &\leq 2B(\alpha^2\lambda^+_{\min})^{-1}\esp{\cL(\beta_k)}\\
    &\quad+ \frac{\esp{\cL(\beta_k)}}{ \frac{1}{2\lambda_{\max}}(\alpha^2\lambda^+_{\min})^2}\times 4B^2\\
    &=2B(\alpha^2\lambda^+_{\min})^{-1}\Big(1+\frac{4B\lambda_{\max}}{\alpha^2\lambda_{\min}^+}\Big)\esp{\cL(\beta_k)}\,.
\end{align*}
\end{proof}

\section{Proof of miscellaneous results mentioned in the main text}\label{sec:app:misc}

In this section, we provide proofs for results mentioned in the main text and that are not directly directed to the proof of \cref{thm:conv_gd_appendix}.

\subsection{Proof of Proposition~\ref{prop:magnitude_IB} and the sum of the losses}\label{sec:app:gain}

% \begin{restatable}{proposition}{propmagnitudeIB}
% \label{prop:bound_alphainfty}
%     For constant stepsizes $\gamma\leq \frac{c}{LB}$, the amplitude of the stepsizes and of the stochastic gradients control the magnitude of the biasing effect:
%     \begin{equation*}
%         \sum_{i=1}^d\ln\left(\frac{\balpha_{\infty,i}}{\balpha}\right) = - \Theta\left( \sum_{k=0}^\infty \gamma_k^2 \NRM{g_k}_2^2  \right)\,,
%     \end{equation*}
%     so that:
%     \begin{equation*} \esp{\sum_{i=1}^d\ln\left(\frac{\balpha_{\infty,i}}{\balpha}\right)}\in \left[ - C_0 \tilde\lambda_{\max}^+, - c_0 \tilde\lambda_{\min}^+\right]\gamma D_{h_0}(\beta^\star,\beta_0)\,,
%     \end{equation*}
%     for problem-independent constants $C_0,c_0>0$, with $D_{h_0}(\beta^\star,\beta_0)=\Theta(\NRM{\beta^\star}_1\ln(\NRM{\beta^\star}_1/\balpha^2)$, and where $\tilde \lambda_{\min},\tilde\lambda_{\max}>0$ are such that $\tilde\lambda_{\min} H \preceq \esp{H_{\cB_k}^2}\preceq \tilde\lambda_{\max} H$.
    
%     Hence, for $\gamma\leq \frac{1}{C'LB}$:
%     \begin{equation*}
%         \prod_{i=1}^d\frac{\balpha_{\infty,i}}{\balpha} \leq \left(\frac{\balpha^2}{\NRM{\beta^\star}_1}\right)^{c\gamma\tilde\lambda_{\min}^+\NRM{\beta^\star}_1}\,.
%     \end{equation*}
% \end{restatable}

We start by proving the following proposition, present as is in the first 9 pages of this paper.
We then continue with upper and lower bounds (of similar magnitude) on the sum of the losses.

\propmagnitudeIB*

\begin{proof}
From \cref{tech_lemma:bound_q}, for all $-1/2\leq x\leq 1/2$, it holds that $x^2\leq q(x)\leq 2x^2$.
We have, using $\NRM{\gamma_k\nabla\cL_{\cB_k}(\beta_k)}_\infty\leq 1/2$ (which holds under the stepsize assumption):
\begin{align*}
    \E\NRM{\Gain_\gamma}_1&= -\E\sum_i\ln\left(\frac{\alpha_{\infty,i}}{\alpha}\right)\\
    &= \sum_{\ell<\infty}\sum_i \E q\big(\gamma_\ell\nabla_i \cL_{\cB_\ell}(\beta_\ell)\big)\\
    & \leq  2\sum_{\ell<\infty}\sum_i \E\big(\gamma_\ell\nabla_i \cL_{\cB_\ell}(\beta_\ell)\big)^2\\
    &= \sum_{\ell<\infty} \gamma_\ell^2 \E\NRM{\nabla \cL_{\cB_\ell}(\beta_\ell)}^2_2\\
    &\leq 4\Lambda_b\sum_{\ell<\infty} \gamma_\ell^2  \E\cL_{\cB_\ell}(\beta_\ell)\,,
\end{align*}
since $\E\NRM{\nabla \cL_{\cB_\ell}(\beta_\ell)}^2_2\leq 2 \Lambda_b\cL_{\cB_\ell}(\beta_\ell)$.
For the left handside we use $q(x)\geq x^2$ for $|x|\leq 1/2$ and $\E\NRM{\nabla \cL_{\cB_k}(\beta_\ell)}^2_2\geq 2 \lambda_b\cL_{\cB_k}(\beta_\ell)$. Finally, since $\cB_\ell$ independent freom $\beta_\ell$, we have $\E\cL_{\cB_\ell}(\beta_\ell)=\E\cL(\beta_\ell)$.
\end{proof}

\begin{proposition}\label{prop:sum_losses}
    For stepsizes $\gamma_k\equiv \gamma\leq \frac{c}{LB}$ (as in \cref{thm:conv_iterates}), we have:
    \begin{equation*}
        \sum_{k\geq 0} \gamma^2 \E\cL(\beta_k) = \Theta\left( \gamma \NRM{\beta^\star}_1\ln(1/\alpha)\right)\,.
    \end{equation*}
\end{proposition}

\begin{proof}
We  first lower bound $\sum_{k<\infty}\gamma_k^2\cL_{\cB_k}(\beta_k)$. We have the following equality, that holds for any $k$:
\begin{align*}
        D_{h_{k+1}}(\beta^\star,\beta_{k+1})&=D_{h_k}(\beta^\star,\beta_k)-2\gamma \cL_{\cB_k}(\beta_k)+ D_{h_{k+1}}(\beta_k,\beta_{k+1})\\
        &\quad + \big(h_k-h_{k+1}\big)(\beta_k) - \big(h_k-h_{k+1}\big)(\beta^\star)\,,
\end{align*}
leading to, by summing for $k\in\N$:
\begin{equation*}
    \sum_{k<\infty}2\gamma\cL_{\cB_k}(\beta_k)= D_{h_0}(\beta^\star,\beta_0)-\lim_{k\to\infty}D_{h_k}(\beta^\star,\beta_k) + \sum_{k<\infty}D_{h_{k+1}}(\beta_k,\beta_{k+1}) + \sum_{k<\infty}\big(h_k-h_{k+1}\big)(\beta_k)-\big(h_k-h_{k+1}\big)(\beta^\star)\,.
\end{equation*}
First, since $h_k\to h_\infty$, $\beta_k\to\beta_\infty$, we have $\lim_{k\to\infty}D_{h_k}(\beta^\star,\beta_k)=0$.
Then, $D_{h_{k+1}}(\beta_k,\beta_{k+1})\geq0$.
Finally, $|\big(h_k-h_{k+1}\big)(\beta_k)-\big(h_k-h_{k+1}\big)(\beta^\star)| \leq 16 B L_2\gamma^2\cL_{\cB_k}(\beta_k) $.
Hence :
\begin{equation*}
        \sum_{k<\infty}2\gamma(1+16\gamma BL_2)\cL_{\cB_k}(\beta_k)\geq D_{h_0}(\beta^\star,\beta_0)\,,
\end{equation*}
and thus $\sum_{k<\infty}\gamma\cL_{\cB_k}(\beta_k)\geq D_{h_0}(\beta^\star,\beta_0)/4$ for $\gamma\leq c/(BL)$ (with $c\geq 16$). This gives the RHS inequality. The LHS is a direct consequence of bounds proved in previous subsections.

Hence, we have that 
\begin{equation*}
    \gamma^2\sum_k\cL(\beta_k)=\Theta\left(\gamma D_{h_0}(\beta^\star,\beta_0)\right)\,.
\end{equation*}
Noting that $D_{h_0}(\beta^\star,\beta_0)=h_0(\beta^\star) =\Theta\big(\ln(1/\alpha)\NRM{\beta^\star}_1\big)$ concludes the proof.
\end{proof}

\subsection{$\tilde \beta_0$ is negligible}

In the following proposition we show that $\Tilde{\beta}_0$ is close to $\mathbf{0}$ and therefore one should think of the implicit regularization problem as $\beta^\star_\infty = \argmin_{\beta^\star \in S} \psi_{\alpha_\infty}(\beta^\star)$

\begin{proposition}
    \label{app:tilde_beta0}
    Under the assumptions of \Cref{thm:conv_iterates},
    \begin{align*}
        \vert \tilde{\beta}_0 \vert \leq \alpha^2,
    \end{align*}
    where the inequality must be understood coordinate-wise.
\end{proposition}

\begin{proof}
\begin{align*}
    \vert \tilde{\beta}_0 \vert &= \frac{1}{2}\vert \alpha_+^2 - \alpha_-^2 \vert \\
    &= \frac{1}{2} \alpha^2 \big \vert \exp( - \sum_k q_+(\gamma_k \nabla \cL(\beta_k)) -  \exp( - \sum_k q_-(\gamma_k \nabla \cL(\beta_k)) \big \vert \\
    &\leq \alpha^2,
\end{align*}
where the inequality is because $q_+(\gamma_k \nabla \cL(\beta_k) )\geq 0$, $q_-(\gamma_k \nabla \cL(\beta_k) ) \geq 0$ for all $k$.

% Also: 
% \begin{align*}
%     \Vert \tilde{\beta}_0 \Vert_1  &= \frac{1}{2} \alpha^2 \big \Vert \exp( - \sum_k q_+(\gamma_k \nabla \cL(\beta_k)) -  \exp( - \sum_k q_-(\gamma_k \nabla \cL(\beta_k)) \big \Vert_1 \\
%     &= \frac{1}{2} \alpha^2 \big (\Vert \exp( - \sum_k q_+(\gamma_k \nabla \cL(\beta_k)) \Vert_1 + \Vert  \exp( - \sum_k q_-(\gamma_k \nabla \cL(\beta_k)) \Vert_1 \big ) \\
%     &= \frac{1}{2} \alpha_\infty^2 \big \Vert \exp( - \sum_k (q_+ - q)(\gamma_k \nabla \cL(\beta_k)) -  \exp( - \sum_k (q_- - q) (\gamma_k \nabla \cL(\beta_k)) \big \Vert_1 \\
% \end{align*}
% Since for $|x| \leq \frac{1}{2}$, $|q_\pm(x) - q(x) | \leq 2 x^2$ :
% \begin{align*}
%     \Vert \tilde{\beta}_0 \Vert_1  &\leq  \alpha_\infty ^2 \Vert \exp( 2  \sum_k \gamma_k^2  \nabla \cL(\beta_k) ^2) \Vert_1 \\
%        &\leq  \alpha^2 \Vert \exp( 4 \lambda_{max}  \sum_k \gamma_k^2  \cL(\beta_k) ) \Vert_1 
% \end{align*}
% Assuming $\gamma_k$ is constant equal to $\gamma$ for simplicity and from \cref{app:eq:sumlosses}: 
% \begin{align*}
%     \Vert \tilde{\beta}_0 \Vert_1  &\leq  \alpha^2 \Vert \exp( 2  \sum_k \gamma_k^2  \nabla \cL(\beta_k) ^2) \Vert_1 \\
%        &\leq  \alpha^2 \Vert \exp( 4 \lambda_{max} \gamma h_0(\beta^\star) ) \Vert_1
% \end{align*}

\end{proof}

% \begin{proof}
% The bound on $\lambda_b,\Lambda_b$ is proved using \cref{lem:concentration3bis}.
% Equation~\eqref{eq:bound_lambda_b_gain} is proved using \cref{prop:magnitude_IB} and the derived estimation of $\lambda_b,\Lambda_b$.
% \end{proof}

\subsection{Impact of stochasticity and linear scaling rule}\label{sec:app:linear_scaling_rule}

\begin{proposition}\label{prop:lambda_b_sum}
    With probability $1-2ne^{-d/16}-3/n^2$ over the $x_i\sim_{\rm iid}\cN(0,\sigma^2I_d)$, $ c_1 \frac{d\sigma^2}{b}(1+o(1))\leq \lambda_b\leq \Lambda_b\leq c_2 \frac{d\sigma^2}{b}(1+o(1))$\,,
\end{proposition}
so that under these assumptions, 
\begin{equation*}
    \sum_k \gamma_k\E\cL(\beta_k)=\Theta\left(\frac{\gamma}{b}\sigma^2\NRM{\beta^\star}_1\ln(1/\alpha)\right)\,.
\end{equation*}
\begin{proof}
The bound on $\lambda_b,\Lambda_b$ is a direct consequence of the concentration bound provided in \cref{lem:concentration3bis}.
\end{proof}

\subsection{(Stochastic) gradients at the initialisation}\label{sec:app:stoch_grad}
 \label{sec:app:stoch_grad}

To  understand the behaviour  and the effects of the stochasticity and the stepsize on the shape  of $\rm Gain _\gamma$, we analyse a noiseless sparse recovery problem under the following standard assumption~\ref{ass:sparse_reg_setup}  \cite{candestao} and as common in the sparse recovery literature, we make the following assumption~\ref{ass:RIP1} on the inputs.
 
\begin{assumption}\label{ass:sparse_reg_setup}
There exists an $s$-sparse ground truth vector $\beta^\star_{\rm sparse}$ where $s$ verifies $n=\Omega(s\ln(d))$, such that $y_i=\langle\beta^\star_{\rm sparse}, x_i\rangle$ for all $i\in[n]$.
\end{assumption}
\begin{assumption}\label{ass:RIP1}
There exists $\delta, c_1, c_2 > 0 $ such that for all $s$-sparse vectors $\beta$, there exists $\varepsilon \in \R^d$ such that $(X^\top X) \beta = \beta + \varepsilon$  where $\Vert \varepsilon \Vert_\infty \leq \delta \Vert \beta \Vert_2$ and 
$ c_1 \Vert \beta \Vert_2^2  \mathbf{1} \leq \frac{1}{n} \sum_i  x_i^2 \langle x_i, \beta \rangle^2 \leq c_2 \Vert \beta \Vert_2^2 \mathbf{1}$.
\end{assumption}
The first part of \cref{ass:RIP1} closely resembles the classical restricted isometry property (RIP) and is relevant for GD while the second part is relevant for SGD.
Such an assumption is not restrictive and holds with high probability for Gaussian inputs $\mathcal{N}(0, \sigma^2 I _d)$ (see \cref{lem:concentration_ripsgd} in Appendix).

Based on the claim above, we analyse the shape of the (stochastic) gradient at initialisation. 
For GD and SGD, it respectly writes, where $g_0=\nabla \cL_{i_0}(\beta_0)^2$, $i_0\sim \rm{Unif}([n])$:
\begin{align*}
    \nabla \cL(\beta_0)^2 = [X^\top X \beta^\star]^2\,,\ \ \mathbb{E}_{i_0} [g_0] = \frac{1}{n} \sum_i  x_i^2 \langle x_i, \beta^\star \rangle^2.
\end{align*}
The following lemma then shows that while the initial stochastic gradients of SGD are homogeneous, it is not the case for that of GD. 

\begin{proposition}%[Shape of the (stochastic) gradient at initialistion]
\label{lemma:shape_SG_init}
Under \cref{ass:RIP1}, the squared full batch gradient and the expected stochastic gradient at initialisation satisfy, for some $\eps$ verifying $\NRM{\eps}_\infty <\!< \NRM{\beta^\star_\sparse}_\infty^2$:
\begin{align}
    \nabla \cL &(\beta_0  )^2 = (\beta^\star_{\rm sparse})^2  + \varepsilon \,, \label{eq:nabla0_GD} \\
    \mathbb{E}_{i_0} [\nabla &\cL_{i_0}(\beta_0)^2] = \Theta \Big(  \Vert \beta^\star \Vert_2^2 \mathbf{1} \Big)\,. \label{eq:nabla0_SGD}
\end{align}
%for $\NRM{\eps}_\infty\leq \delta^2\NRM{\beta_\sparse^\star}_2^2 + \delta\NRM{\beta_\sparse^\star}_2\NRM{\beta_\sparse^\star}_\infty$.
\end{proposition}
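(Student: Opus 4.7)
The plan is to observe that because $\beta_0 = u_0 \odot v_0 = \sqrt{2}\balpha \odot \mathbf{0} = \mathbf{0}$, both the full-batch gradient and the per-sample gradient at initialisation have closed-form expressions in terms of the ground truth $\beta^\star_{\rm sparse}$ and the data matrix $X$. Concretely, since $y_i = \langle \beta^\star_{\rm sparse}, x_i\rangle$ and $H = X^\top X = \frac{1}{n}\sum_i x_i x_i^\top$, one has
\begin{equation*}
    \nabla \cL(\beta_0) = -\frac{1}{n}\sum_{i=1}^n x_i y_i = -(X^\top X)\beta^\star_{\rm sparse}, \qquad \nabla \cL_{i_0}(\beta_0) = -x_{i_0}\langle x_{i_0},\beta^\star_{\rm sparse}\rangle.
\end{equation*}
Both identities \eqref{eq:nabla0_GD} and \eqref{eq:nabla0_SGD} will then follow by plugging these into the two halves of \cref{ass:RIP1}.

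For the GD identity \eqref{eq:nabla0_GD}, I would apply the first (RIP-type) part of \cref{ass:RIP1} directly to the $s$-sparse vector $\beta^\star_{\rm sparse}$: this yields $(X^\top X)\beta^\star_{\rm sparse} = \beta^\star_{\rm sparse} + \eta$ with $\|\eta\|_\infty \leq \delta\|\beta^\star_{\rm sparse}\|_2$. Squaring the identity coordinate-wise gives
\begin{equation*}
    \nabla \cL(\beta_0)^2 = (\beta^\star_{\rm sparse})^2 + \underbrace{2\,\beta^\star_{\rm sparse}\odot \eta + \eta^2}_{=:\,\varepsilon},
\end{equation*}
and the cross-term bound $\|\varepsilon\|_\infty \leq 2\delta\|\beta^\star_{\rm sparse}\|_\infty\|\beta^\star_{\rm sparse}\|_2 + \delta^2 \|\beta^\star_{\rm sparse}\|_2^2$ follows from Hölder. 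Using $\|\beta^\star_{\rm sparse}\|_2 \leq \sqrt{s}\|\beta^\star_{\rm sparse}\|_\infty$ and the fact that under the Gaussian regime invoked after the assumption $\delta$ can be made as small as $O(\sqrt{s\ln(d)/n}) = o(1/\sqrt{s})$ (since $n = \Omega(s\ln d)$), this error is indeed $o(\|\beta^\star_{\rm sparse}\|_\infty^2)$, justifying the $\ll$ in the statement.

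For the SGD identity \eqref{eq:nabla0_SGD}, squaring $\nabla\cL_{i_0}(\beta_0) = -x_{i_0}\langle x_{i_0},\beta^\star_{\rm sparse}\rangle$ coordinate-wise and taking the expectation over $i_0 \sim \mathrm{Unif}([n])$ produces
\begin{equation*}
    \mathbb{E}_{i_0}\!\left[\nabla \cL_{i_0}(\beta_0)^2\right] = \frac{1}{n}\sum_{i=1}^n x_i^2 \langle x_i, \beta^\star_{\rm sparse}\rangle^2,
\end{equation*}
which is exactly the quantity sandwiched between $c_1\|\beta^\star_{\rm sparse}\|_2^2\mathbf{1}$ and $c_2\|\beta^\star_{\rm sparse}\|_2^2\mathbf{1}$ by the second part of \cref{ass:RIP1}. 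This yields the $\Theta(\|\beta^\star_{\rm sparse}\|_2^2 \mathbf{1})$ conclusion immediately.

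The calculation itself is elementary; the substantive content is really in \cref{ass:RIP1}, so the only genuine obstacle is checking that the effective size of $\delta$ is small enough that the perturbation $\varepsilon$ in \eqref{eq:nabla0_GD} is negligible compared to $(\beta^\star_{\rm sparse})^2$ in the $\ell_\infty$ sense. I would handle this by combining the standard RIP scaling $\delta = O(\sqrt{s\log(d)/n})$ (which is precisely the regime in which sparse recovery results like \cref{lem:concentration_ripsgd} hold for Gaussian designs) with the bound $\|\beta^\star_{\rm sparse}\|_2 \leq \sqrt{s}\|\beta^\star_{\rm sparse}\|_\infty$, so that the relative error $\|\varepsilon\|_\infty / \|\beta^\star_{\rm sparse}\|_\infty^2 \lesssim \delta\sqrt{s} + \delta^2 s \to 0$ under the stated sample complexity.
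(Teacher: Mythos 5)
Your proposal is correct and follows essentially the same route as the paper: compute $\nabla\cL(\beta_0)=-(X^\top X)\beta^\star_{\rm sparse}$ and $\mathbb{E}_{i_0}[\nabla\cL_{i_0}(\beta_0)^2]=\frac{1}{n}\sum_i x_i^2\langle x_i,\beta^\star_{\rm sparse}\rangle^2$, then apply the two halves of Assumption~\ref{ass:RIP1} and bound the cross terms $2\beta^\star_{\rm sparse}\odot\eta+\eta^2$ exactly as the paper does. Your closing discussion of why $\delta$ is small enough in the Gaussian regime is a slightly more explicit justification of the informal ``$\ll$'' than the paper provides, but it is the same argument in substance.
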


\begin{proof}[Proof of \cref{lemma:shape_SG_init}]
Under \cref{ass:RIP1}, we have using:
\begin{align*}
    \nabla\cL(\beta_0)^2&=(X^\top X \beta^\star_\sparse)\\
    &= (\beta^\star_\sparse + \eps)^2\\
    &={\beta_\sparse^\star}^2 + \eps^2 +2\eps \beta_\sparse^\star\,.
\end{align*}
We have $\NRM{\eps^2 +2\eps \beta_\sparse^\star}_\infty\leq \NRM{\eps}_\infty^2 +2\NRM{\eps}_\infty \NRM{\beta_\sparse^\star}_\infty$, and we conclude by using $\NRM{\eps}_\infty\leq \delta\NRM{\beta^\star_\sparse}_2$.

Then,
\begin{align*}
    \mathbb{E}_{i \sim \rm{Unif}([n])} [\nabla \cL_{i}(\beta_0)^2]&=\frac{1}{n}x_i^2\langle x_i,\beta^\star_\sparse\rangle \,,
\end{align*}
and we conclude using \cref{ass:RIP1}.
    
\end{proof}

\begin{proof}[Proof of \cref{lemma:shape_SG_init_uncentered}]
The proof proceeds as that of \cref{lemma:shape_SG_init}.
\end{proof}

\subsection{Convergence of $\alpha_\infty$ and $\tilde\beta_0$ for $\gamma\to0$}

\begin{proposition}\label{prop:conv_beta_alpha}
Let $\tilde\beta_0(\gamma),\alpha_\infty(\gamma)$ be as defined in Theorem~\ref{thm:implicit_bias}, for constant stepsizes $\gamma_k\equiv\gamma$.
We have:
\begin{equation*}
    \tilde\beta_0(\gamma)\to 0\,,\quad \balpha_\infty\to \alpha\one\,,
\end{equation*}
when $\gamma\to 0$.
\end{proposition}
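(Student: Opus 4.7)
The plan is to combine the Taylor expansions of $q$ and $q_\pm$ at $0$ with a uniform-in-$\gamma$ bound on the sum of squared (stochastic) gradients along the trajectory, obtained by telescoping the descent inequality of \cref{prop:bregman_descent}. Once these are in place, both of the infinite sums appearing in the definitions of $\balpha_\infty^2$ and $\balpha_{\pm}^2$ are controlled by $O(\gamma)$ terms and therefore vanish.

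\textbf{Step 1 (Taylor expansions).} A direct computation gives $q(x)=-\ln(1-x^2)=x^2+O(x^4)$ and $q_\pm(x)=\mp 2x - \ln((1\mp x)^2) = x^2+O(x^3)$ as $x\to 0$. Hence there is a universal constant $C>0$ such that $|q(x)|,|q_+(x)|,|q_-(x)|\leq C x^2$ for all $|x|\leq 1/2$.

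\textbf{Step 2 (Uniform smallness of $\gamma g_k$).} Under the stepsize assumption of \cref{thm:conv_gd} (which is in force for all sufficiently small $\gamma$, since $\gamma_{\max}=c/(LB)>0$), we have $\|\gamma\nabla\cL_{\cB_k}(\beta_k)\|_\infty\leq 1/2$ for every $k$. Therefore Step~1 applies coordinate-wise to each $\gamma g_k := \gamma\nabla\cL_{\cB_k}(\beta_k)$, yielding $|q(\gamma g_k)|\leq C(\gamma g_k)^2$ and similarly for $q_\pm$.

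\textbf{Step 3 (Uniform bound on $\sum_k\|\gamma g_k\|^2$).} Telescoping the inequality of \cref{prop:bregman_descent} and using $D_{h_{k+1}}(\beta^\star,\beta_{k+1})\geq 0$ gives the $\gamma$-uniform estimate $\sum_{k\geq 0}\gamma\,\cL_{\cB_k}(\beta_k)\leq D_{h_0}(\beta^\star,\beta_0)$. Combining this with the smoothness bound $\|\nabla\cL_{\cB_k}(\beta)\|_2^2\leq 2L\cL_{\cB_k}(\beta)$ yields
\begin{equation*}
\sum_{k\geq 0}\|\gamma g_k\|_2^2 \;\leq\; 2L\gamma\sum_{k\geq 0}\gamma\,\cL_{\cB_k}(\beta_k)\;\leq\; 2L\gamma\, D_{h_0}(\beta^\star,\beta_0)\;=\;O(\gamma).
\end{equation*}

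\textbf{Step 4 (Conclusion).} Combining Steps~1--3 componentwise, $\sum_k q(\gamma g_k)=O(\gamma)$ and $\sum_k q_\pm(\gamma g_k)=O(\gamma)$, both tending to $\mathbf 0\in\R^d$ as $\gamma\to 0$. Continuity of $\exp$ then gives $\balpha_\infty^2 = \alpha^2\,\exp\!\bigl(-\sum_k q(\gamma g_k)\bigr)\to \alpha^2\mathbf 1$, i.e.\ $\balpha_\infty\to \alpha\mathbf 1$; and $\balpha_\pm^2\to \alpha^2\mathbf 1$, hence $\tilde\beta_0=\tfrac12(\balpha_+^2-\balpha_-^2)\to \mathbf 0$.

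The statement is essentially a consistency check with gradient flow, so there is no serious obstacle; the only point requiring attention is Step~2, which ensures that the quadratic Taylor bound on $q,q_\pm$ can be applied to every $\gamma g_k$ simultaneously. This is granted uniformly in $k$ by the theorem's stepsize rule, after which everything reduces to the telescoped descent lemma.
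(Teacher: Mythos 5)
Your proposal is correct and follows essentially the same route as the paper: the quadratic bound $q,q_\pm \leq C x^2$ on $|x|\leq 1/2$ (the paper's Lemma~\ref{tech_lemma:bound_q}), combined with the $\gamma$-uniform telescoped bound $\sum_k \gamma^2\NRM{\nabla\cL_{\cB_k}(\beta_k)}_2^2 \leq 2L\gamma D_{h_0}(\beta^\star,\beta_0)$ from \cref{prop:bregman_descent}, which is exactly how the paper shows $\NRM{\ln(\balpha_\infty^2/\alpha^2)}_1\to 0$ and likewise for $\balpha_{\pm,\infty}$. You also correctly flag the only subtle point (uniformity in $\gamma$ despite the trajectory depending on $\gamma$), which the paper notes as well.
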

\begin{proof}
We have, as proved previoulsy, that
\begin{align*}
    \NRM{\sum_{k}\gamma^2\nabla\cL_{\cB_k}(\beta_k)^2}_1&\leq\sum_{k}\gamma^2\NRM{\nabla\cL_{\cB_k}(\beta_k)^2}_1\\
    &=\sum_{k}\gamma^2\NRM{\nabla\cL_{\cB_k}(\beta_k)}_2^2\\
    &\leq 2L\gamma^2\sum_k\cL_{\cB_k}(\beta_k)\\
    &\leq 2L\gamma D_{h_0}(\beta^\star,\beta_0)\,,
\end{align*}
for $\gamma\leq \frac{c}{BL}$. Thus, $\sum_{k}\gamma^2\nabla\cL_{\cB_k}(\beta_k)^2\to0$ as $\gamma\to 0$ (note that $\beta_k$ implicitly depends on $\gamma$, so that this result is not immediate).

Then, for $\gamma\leq \frac{c}{LB}$,
\begin{equation*}
    \NRM{\ln(\balpha_\infty^2/\alpha^2)}_1\leq \sum_k \NRM{q(\gamma\cL(\beta_k)}_1\leq 2\sum_{k}\gamma^2\NRM{\nabla\cL_{\cB_k}(\beta_k)^2}_1 \,,
\end{equation*}
which tends to $0$ as $\gamma\to 0$.
Similarly, $\NRM{\ln(\balpha_{+,\infty}^2/\alpha^2)}_1\to0$ and $\NRM{\ln(\balpha_{-,\infty}^2/\alpha^2)}_1\to 0$ as $\gamma\to0$, leading to $\tilde\beta_0(\gamma)\to 0$ as $\gamma\to 0$.

\end{proof}

\section{Technical lemmas}\label{sec:app:technical}

In this section we present a few technical lemmas, used and referred to throughout the proof of \cref{thm:conv_gd}.

\begin{lemma}\label{lem:argsh}
Let $\alpha_+,\alpha_->0$ and $x\in\R$, and $\beta=\alpha_+^2e^x-\alpha_-^2e^{-x}$. We have:
\begin{equation*}
    \argsinh\Big(\frac{\beta}{2\alpha_+\alpha_-}\Big)=x+\ln\Big(\frac{\alpha_+}{\alpha_-}\Big)=x+\argsinh\Big(\frac{\alpha_+^2-\alpha_-^2}{2\alpha_+\alpha_-}\Big)\,.
\end{equation*}
\end{lemma}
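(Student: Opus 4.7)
The plan is to verify the identity by applying $\sinh$ to both sides of the first equality and then deduce the second equality as the special case $x=0$.

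First, I would recall that $\sinh(y) = (e^y - e^{-y})/2$ is a bijection from $\R$ to $\R$, so it suffices to check that $\sinh$ applied to the right-hand side equals $\beta/(2\alpha_+\alpha_-)$. Using the definition of $\sinh$ and splitting the exponential,
\begin{align*}
\sinh\!\Big(x+\ln\Big(\tfrac{\alpha_+}{\alpha_-}\Big)\Big)
&=\tfrac{1}{2}\Big(e^{x}\cdot\tfrac{\alpha_+}{\alpha_-}-e^{-x}\cdot\tfrac{\alpha_-}{\alpha_+}\Big)
=\tfrac{1}{2\alpha_+\alpha_-}\Big(\alpha_+^2 e^x-\alpha_-^2 e^{-x}\Big)
=\tfrac{\beta}{2\alpha_+\alpha_-}\,,
\end{align*}
which yields the first equality after applying $\argsinh$.

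Second, for the equality $\ln(\alpha_+/\alpha_-)=\argsinh\big((\alpha_+^2-\alpha_-^2)/(2\alpha_+\alpha_-)\big)$, I would simply specialise the identity just proved to $x=0$: then $\beta$ becomes $\alpha_+^2-\alpha_-^2$ and the first equality reads exactly as the desired second one. Alternatively, a direct check using $\sinh(\ln(\alpha_+/\alpha_-))=(\alpha_+/\alpha_- - \alpha_-/\alpha_+)/2=(\alpha_+^2-\alpha_-^2)/(2\alpha_+\alpha_-)$ works.

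There is no real obstacle here — the argument is a single one-line algebraic identity combined with injectivity of $\sinh$. The only thing to be slightly careful about is that $\alpha_+,\alpha_->0$ is used to make $\ln(\alpha_+/\alpha_-)$ well-defined and to factor the exponential correctly.
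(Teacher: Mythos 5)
Your proof is correct and is essentially the same as the paper's: both reduce to the identity $\sinh\big(x+\ln(\alpha_+/\alpha_-)\big)=\frac{1}{2\alpha_+\alpha_-}\big(\alpha_+^2e^x-\alpha_-^2e^{-x}\big)$ and then apply $\argsinh$, with the second equality obtained as the $x=0$ case (or by direct computation, as the paper also notes).
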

\begin{proof}First,
\begin{align*}
        \frac{\beta}{2\alpha_+\alpha_-}&= \half\left(\frac{\alpha_+}{\alpha^-}e^x-\big(\frac{\alpha_+}{\alpha^-}\big)^{-1}e^{-x}\right)\\
        &= \frac{e^{x+\ln(\alpha_+/\alpha_-)}-e^{-x-\ln(\alpha_+/\alpha_-)}}{2}\\
        &=\sinh(x+\ln(\alpha_+/\alpha_-))\,,
\end{align*}
hence the result by taking the $\argsinh$ of both sides. Note also that we have $\ln(\alpha_+/\alpha_-)=\argsinh(\frac{\alpha_+^2-\alpha_-^2}{2\alpha_+\alpha_-})$.
\end{proof}

\begin{lemma}
\label{tech_lemma:bound_q}
If 
 $|x|\leq 1/2$ then $x^2 \leq q(x)\leq 2 x^2$
\end{lemma}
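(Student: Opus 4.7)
The plan is to reduce the claim to an elementary inequality on $[0,1/4]$. First I would rewrite $q(x) = -\frac{1}{2}\ln((1-x^2)^2) = -\ln(1-x^2)$, valid since $|x|\leq 1/2$ ensures $1-x^2 > 0$. Setting $u = x^2 \in [0,1/4]$, the claim becomes: show $u \leq -\ln(1-u) \leq 2u$ for $u \in [0,1/4]$.

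For the lower bound $-\ln(1-u) \geq u$, I would invoke the Taylor series $-\ln(1-u) = \sum_{k \geq 1} u^k/k$, whose terms are all nonnegative on $[0,1)$, so the sum is at least its first term $u$. (Alternatively, one can note $g(u) = -\ln(1-u) - u$ vanishes at $0$ and has $g'(u) = \frac{1}{1-u} - 1 \geq 0$ on $[0,1)$.)

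For the upper bound $-\ln(1-u) \leq 2u$, I would define $f(u) = 2u + \ln(1-u)$ and observe $f(0) = 0$ and $f'(u) = 2 - \frac{1}{1-u}$, which is nonnegative precisely when $u \leq 1/2$. Hence $f$ is nondecreasing on $[0,1/2] \supset [0,1/4]$, so $f(u) \geq 0$ there, yielding $-\ln(1-u) \leq 2u$.

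There is no real obstacle: both bounds follow from a one-line monotonicity argument once one rewrites $q$ in terms of $u=x^2$ and restricts to $u \in [0,1/4]$. The only thing to be slightly careful about is the domain: the upper bound actually holds on the larger interval $|x| \leq 1/\sqrt{2}$, and the restriction $|x|\leq 1/2$ is only used (with room to spare) to ensure $u \leq 1/2$.
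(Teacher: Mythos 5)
Your proof is correct. The paper actually states Lemma~\ref{tech_lemma:bound_q} without any proof, so there is no argument to compare against; your reduction $q(x)=-\ln(1-x^2)$, the series (or monotonicity) bound $-\ln(1-u)\geq u$, and the derivative argument for $-\ln(1-u)\leq 2u$ on $u\in[0,1/2]$ are exactly the standard elementary steps one would expect, and your remark that the upper bound persists up to $|x|\leq 1/\sqrt{2}$ is also accurate.
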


\begin{lemma}\label{lemma:relative_smoothness}
    On the $\ell_\infty$ ball of radius $B$, the quadratic loss function $\beta \mapsto \cL(\beta)$ is $4\lambda_{\max}\max(B,\alpha^2)$-relatively smooth w.r.t all the $h_k$'s.
\end{lemma}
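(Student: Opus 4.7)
The plan is to establish the relative smoothness inequality $\nabla^2 \cL(\beta) \preceq 4\lambda_{\max}\max(B,\alpha^2)\,\nabla^2 h_k(\beta)$ pointwise on the $\ell_\infty$-ball of radius $B$, which by a standard argument (integration twice) is equivalent to the stated relative smoothness of $\cL$ with respect to $h_k$.

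First, the Hessian of $\cL$ is constant: $\nabla^2\cL(\beta)=H=\frac{1}{n}\sum_i x_ix_i^\top \preceq \lambda_{\max} I_d$ uniformly in $\beta$. Second, I would compute $\nabla^2 h_k$. Since $h_k(\beta)=\psi_{\alpha_k}(\beta)-\langle \phi_k,\beta\rangle$, the linear term is killed by differentiating twice, so $\nabla^2 h_k=\nabla^2\psi_{\alpha_k}$. A direct computation of the hyperbolic entropy \eqref{eq:hypentropy} (a short calculation where two terms cancel) gives the coordinate-wise gradient $\nabla\psi_{\alpha_k}(\beta)_i=\tfrac12\argsinh(\beta_i/\alpha_{k,i}^2)$ and therefore
\begin{equation*}
\nabla^2 h_k(\beta)=\diag\!\Big(\tfrac{1}{2\sqrt{\alpha_{k,i}^4+\beta_i^2}}\Big)_{i\in[d]}\,.
\end{equation*}

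Third, I would lower-bound this diagonal on the $\ell_\infty$-ball of radius $B$. Under the stepsize regime of \Cref{thm:conv_gd}, one has $|\gamma_\ell \nabla\cL_{\cB_\ell}(\beta_\ell)|\le \sqrt{2}$, so $q\ge 0$ and hence the identity $\alpha_{k+1}^2=\alpha_k^2 e^{-q(\gamma_k g_k)}$ implies $\alpha_{k,i}\le \alpha_i\le \alpha$ coordinate-wise for every $k$ (here I interpret $\alpha^2$ in the statement as $\max_i\alpha_i^2$). Hence for $\|\beta\|_\infty\le B$,
\begin{equation*}
\sqrt{\alpha_{k,i}^4+\beta_i^2}\;\le\;\sqrt{\alpha^4+B^2}\;\le\;\sqrt{2}\,\max(\alpha^2,B)\,,
\end{equation*}
which yields $\nabla^2 h_k(\beta)\succeq \tfrac{1}{2\sqrt 2\,\max(\alpha^2,B)}\,I_d$.

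Combining the two bounds:
\begin{equation*}
\nabla^2\cL(\beta)\preceq \lambda_{\max}I_d \preceq 2\sqrt{2}\,\lambda_{\max}\max(\alpha^2,B)\,\nabla^2 h_k(\beta)\preceq 4\lambda_{\max}\max(\alpha^2,B)\,\nabla^2 h_k(\beta)\,,
\end{equation*}
which is the required relative smoothness bound uniformly in $k$. There is no genuine obstacle here; the only non-mechanical point is justifying $\alpha_{k}\le \alpha$, but this follows from the sign of $q$ (Lemma~\ref{tech_lemma:bound_q}) combined with the stepsize assumption, and has already been used elsewhere in the paper (e.g.\ in the proof of \Cref{lemma:bounded_sum_w++w-}).
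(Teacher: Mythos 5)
Your proposal is correct and follows essentially the same route as the paper's proof: compute $\nabla^2 h_k(\beta)=\diag\big(\tfrac{1}{2\sqrt{\alpha_k^4+\beta^2}}\big)$, use $\alpha_k\leq\alpha$ component-wise to deduce that $h_k$ is $\tfrac{1}{4\max(B,\alpha^2)}$-strongly convex on the $\ell_\infty$-ball of radius $B$, and combine with the $\lambda_{\max}$-smoothness of $\cL$. The only differences are cosmetic (your constant $2\sqrt{2}$ versus the paper's $4$, and your explicit justification of $\alpha_k\leq\alpha$, which the paper states without elaboration).
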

\begin{proof}
    We have:
    \begin{equation*}
        \nabla^2 h_k(\beta)=\diag\Big(\frac{1}{2\sqrt{\alpha_k^4+\beta^2}}\Big)\succeq \diag\Big(\frac{1}{2\sqrt{\alpha^4+\beta^2}}\Big) \,,
    \end{equation*}
    since $\alpha_k\leq \alpha$ component-wise. Thus, $\nabla^2h_k(\beta)\succeq\half\min\big(\min_{1\leq i\leq d}\frac{1}{2|\beta_i|}, \frac{1}{2\alpha^2}\big)I_d= \frac{1}{\max(4\NRM{\beta}_\infty,4\alpha^2)}I_d$, and $h_k$ is $\frac{1}{\max(4B,4\alpha^2)}$-strongly convex on the $\ell^\infty$ norm of radius $B$. Since $\cL$ is $\lambda_{\max}$-smooth over $\R^d$, we have our result. 
\end{proof}

\begin{lemma}
\label{tech_lemma:bound_hk+1-hk}
For $k \geq 0$ and for all $\beta \in \R^d$:
\begin{align*}
    |h_{k+1}(\ww)-h_{k}(\ww)| \leq 8 L_2 \gamma_k^2 \cL_{\cB_k}(\beta_k) \NRM{\ww}_\infty.
\end{align*}
\end{lemma}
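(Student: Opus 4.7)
The plan is to split the difference additively as
\begin{equation*}
h_{k+1}(\beta) - h_k(\beta) = \bigl(\psi_{\alpha_{k+1}}(\beta) - \psi_{\alpha_k}(\beta)\bigr) - \langle \phi_{k+1} - \phi_k,\, \beta\rangle,
\end{equation*}
and bound each piece separately in terms of the ``step size increment'' $\gamma_k g_k$ where $g_k = \nabla \cL_{\cB_k}(\beta_k)$. The assumption $\gamma_k \leq c/(LB)$ together with $\|\beta_k\|_\infty \leq B$ ensures $\|\gamma_k g_k\|_\infty \leq 1/2$ throughout, so all the Taylor-style bounds below apply.

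For the first piece, I would reparametrise $\psi_\alpha(\beta)$ by $s_i = \ln \alpha_i^2$ and compute, by direct differentiation of \eqref{eq:hypentropy}, that
\begin{equation*}
\frac{\partial \psi_\alpha(\beta)}{\partial s_i} = \tfrac{1}{2}\bigl(\alpha_i^2 - \sqrt{\alpha_i^4 + \beta_i^2}\bigr), \qquad \Bigl|\frac{\partial \psi_\alpha(\beta)}{\partial s_i}\Bigr| \leq \tfrac{|\beta_i|}{2}.
\end{equation*}
Since $\ln\alpha_{k+1,i}^2 - \ln\alpha_{k,i}^2 = -q(\gamma_k g_{k,i})$ with $q \geq 0$ on the relevant range, the mean value theorem yields
\begin{equation*}
\bigl|\psi_{\alpha_{k+1}}(\beta) - \psi_{\alpha_k}(\beta)\bigr| \leq \tfrac{\|\beta\|_\infty}{2}\, \|q(\gamma_k g_k)\|_1 \leq \|\beta\|_\infty\, \|\gamma_k g_k\|_2^2,
\end{equation*}
using $q(x) \leq 2x^2$ from \cref{tech_lemma:bound_q}.

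For the second piece, I would observe that the definitions of $\alpha_{\pm,k}$, $\alpha_k$ and $\phi_k$ combine to give the compact identity $\phi_k = \tfrac{1}{2}\ln(\alpha_{+,k}/\alpha_{-,k})$, since the argument of $\argsinh$ in the definition of $\phi_k$ equals $\sinh\bigl(\tfrac{1}{2}\ln(\alpha_{+,k}^2/\alpha_{-,k}^2)\bigr)$. Therefore
\begin{equation*}
\phi_{k+1} - \phi_k = -\tfrac{1}{4}\bigl(q_+(\gamma_k g_k) - q_-(\gamma_k g_k)\bigr).
\end{equation*}
The odd part of $q_+ - q_-$ vanishes identically: Taylor expanding $q_\pm(x) = \mp 2x - 2\ln|1 \mp x|$ gives $q_+(x) - q_-(x) = 4\sum_{n\geq 3,\text{ odd}} x^n/n$, hence $|q_+(x) - q_-(x)| \leq 8|x|^3 \leq 4|x|^2$ for $|x| \leq 1/2$. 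Consequently
\begin{equation*}
|\langle \phi_{k+1} - \phi_k, \beta\rangle| \leq \|\beta\|_\infty\, \|\phi_{k+1}-\phi_k\|_1 \leq \|\beta\|_\infty\, \|\gamma_k g_k\|_2^2.
\end{equation*}

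To finish, I would convert $\|\gamma_k g_k\|_2^2$ into $\cL_{\cB_k}(\beta_k)$ via the standard quadratic-loss identity $\|\nabla \cL_{\cB_k}(\beta_k)\|_2^2 = \langle g_k, H_{\cB_k}^2 H_{\cB_k}^{-1} g_k\rangle \leq L\langle g_k, H_{\cB_k}^{-1} g_k\rangle \cdot$ (restricted to the range of $H_{\cB_k}$), or more cleanly: writing $g_k = H_{\cB_k}(\beta_k - \beta^\star_{\cB_k})$ and using $H_{\cB_k}^2 \preceq L H_{\cB_k}$ gives $\|g_k\|_2^2 \leq 2L\cL_{\cB_k}(\beta_k)$. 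Adding the two contributions yields
\begin{equation*}
|h_{k+1}(\beta)-h_k(\beta)| \leq 2\|\beta\|_\infty \cdot 2L\gamma_k^2 \cL_{\cB_k}(\beta_k) \leq 8L\gamma_k^2\, \cL_{\cB_k}(\beta_k)\, \|\beta\|_\infty,
\end{equation*}
which is the claimed bound (with $L_2 = L$). The only step requiring genuine care is the cancellation $q_+(x) - q_-(x) = O(x^3)$, without which the bound on $\phi_{k+1} - \phi_k$ would only be $O(|x|)$ and the whole descent argument in \cref{prop:bregman_descent} would fail.
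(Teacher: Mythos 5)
Your proof is correct and reaches the stated bound (in fact with a slightly better constant), but it takes a somewhat different route than the paper. The paper bounds the \emph{gradient} gap, writing $|(h_{k+1}-h_k)(\beta)|\le\sum_i\int_0^{|\beta_i|}|\nabla_i h_{k+1}(x)-\nabla_i h_k(x)|\,\dd x$, then controls the two contributions via the inequality $|\argsinh(a)-\argsinh(b)|\le|\ln(a/b)|$ (giving $\delta_k=q(\gamma_k g_k)$ for the scale change) and a case analysis on $\alpha_{+,k}\gtrless\alpha_{-,k}$ showing $|\phi_{k+1}-\phi_k|\le\frac12|\delta_{+,k}-\delta_{-,k}|\le\delta_k$, before invoking $q(x)\le 2x^2$ and $\|\nabla\cL_{\cB_k}(\beta_k)\|_2^2\le 2L\cL_{\cB_k}(\beta_k)$ exactly as you do. You instead bound the \emph{function} gap directly, splitting off $\psi_{\alpha_{k+1}}-\psi_{\alpha_k}$ and controlling it through the log-scale derivative $\partial\psi_\alpha/\partial s_i=\frac12(\alpha_i^2-\sqrt{\alpha_i^4+\beta_i^2})$, which is a clean alternative; and your identity $\phi_k=\frac12\ln(\alpha_{+,k}/\alpha_{-,k})$ (an immediate consequence of \cref{lem:argsh}) turns the paper's case analysis into the exact one-line relation $\phi_{k+1}-\phi_k=-\frac14\big(q_+(\gamma_k g_k)-q_-(\gamma_k g_k)\big)$, which is arguably tidier. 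One remark in your sketch is inaccurate, though it does not affect validity: the odd-part cancellation $q_+(x)-q_-(x)=O(x^3)$ is \emph{not} what saves the argument. By construction $q_\pm(x)=\mp 2x-\ln((1\mp x)^2)$ already has no linear term, so each $q_\pm(x)$ is $O(x^2)$ on $|x|\le 1/2$ and the crude bound $|q_+-q_-|\le q_++q_-=2q(x)\le 4x^2$ (which is what the paper effectively uses) suffices for the descent argument in \cref{prop:bregman_descent}; your cubic estimate is a bonus, not a necessity.
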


\begin{proof}
    We have $\alph_{+,k+1}^2=\alph_{+,k}^2e^{-\ddelta_{+,k}}$ and $\alph_{-,k+1}^2=\alph_{-,k}^2 e^{-\ddelta_{-,k}}$, for $\ddelta_{+,k}= \tilde{q}(\gamma_k\nabla\cL_{\cB_k}(\ww_k))$ and $\ddelta_{-,k}= \tilde{q}(-\gamma_k\nabla\cL_{\cB_k}(\ww_k))$. And $\alpha_{k+1} = \alpha_k \exp(- \delta_k)$ where $\delta_k \coloneqq \delta_{+, k} + \delta_{-, k} = q(\gamma_k \nabla \cL_{\cB_k}(\beta_k) )$.
% For vectors $\uu,\vv\in\R^d$, $\uu\leq \vv$ stands for $\forall i\in[d],\,\uu_i\leq \vv_i$.

To prove the result we will use that for $\ww\in\R^d$, we have $|(h_{k+1}-h_{k})(\ww)|\leq \sum_{i=1}^d\int_0^{|\ww_i|}|\nabla_i h_{k+1}(x)-\nabla_i h_{k}(x)|\dd x$. 

First, using that$|\argsinh(a)-\argsinh(b)|\leq |\ln(a/b)|$ for $ab>0$. We have that 
\begin{align*}
    \Big|\argsinh\Big(\frac{x}{\alph_{k+1}^2}\Big)-\argsinh\Big(\frac{x}{\alph_{k}^2}\Big)\Big|&\leq\ln\left(\frac{\alph_{k}^2}{\alph_{k+1}^2}\right)\\
    &= \ddelta_k\,,
\end{align*}
since $\delta_k \geq 0$ due to our stepsize condition.

We now prove that $|\ph_{k+1}-\ph_k|\leq \frac{|\ddelta_{+,k}-\ddelta_{-,k}|}{2}$.
We have $\ph_k = \argsinh \big (  \frac{\alph_{+, k}^2 - \alph_{-, k}^2 }{2 \alph_{+, k}  \alph_{-, k}  } \big )$ and hence,
\begin{align*}
    |\ph_{k+1}-\ph_k|&=\left|\argsinh \Big (  \frac{\alph_{+, k}^2 - \alph_{-, k}^2 }{2 \alph_{+, k}  \alph_{-, k}  } \Big ) - \argsinh \Big (  \frac{\alph_{+, k+1}^2 - \alph_{-, k+1}^2 }{2 \alph_{+, k+1}  \alph_{-, k+1}  } \Big ) \right|\,.
\end{align*}
Then, assuming that $\alph_{+,k,i}\geq\alph_{-,k,i}$, we have:
\begin{align*}
    \frac{\alph_{+, k+1,i}^2 - \alph_{-, k+1,i}^2 }{2 \alph_{+, k+1,i}  \alph_{-, k+1,i}  }& = e^{\ddelta_{k,i}/2}\frac{\alph_{+, k,i}^2e^{-\ddelta_{+,k,i}} - \alph_{-, k,i}^2e^{-\ddelta_{-,k,i}} }{2 \alph_{+, k,i}  \alph_{-, k,i}  }\\
    &\left\{\begin{aligned}
        &\leq \left\{\begin{aligned}
            & e^{\frac{\ddelta_{+,k,i}-\ddelta_{-,k,i}}{2}}  \frac{\alph_{+, k,i}^2 - \alph_{-, k,i}^2 }{2 \alph_{+, k,i}  \alph_{-, k,i}  }\quad \text{if}\quad \ddelta_{+,k,i}\geq \ddelta_{-,k,i}\\
            & e^{\frac{\ddelta_{-,k,i}-\ddelta_{+,k,i}}{2}}  \frac{\alph_{+, k,i}^2 - \alph_{-, k,i}^2 }{2 \alph_{+, k,i}  \alph_{-, k,i}  }\quad \text{if}\quad \ddelta_{-,k,i}\geq \ddelta_{+,k,i}
        \end{aligned}\right.\\
        &\geq \left\{\begin{aligned}
            & e^{-\frac{\ddelta_{+,k,i}-\ddelta_{-,k,i}}{2}}  \frac{\alph_{+, k,i}^2 - \alph_{-, k,i}^2 }{2 \alph_{+, k,i}  \alph_{-, k,i}  }\quad \text{if}\quad \ddelta_{+,k,i}\geq \ddelta_{-,k,i}\\
            & e^{-\frac{\ddelta_{-,k,i}-\ddelta_{+,k,i}}{2}}  \frac{\alph_{+, k,i}^2 - \alph_{-, k,i}^2 }{2 \alph_{+, k,i}  \alph_{-, k,i}  }\quad \text{if}\quad \ddelta_{-,k,i}\geq \ddelta_{+,k,i}
        \end{aligned}\right.
    \end{aligned}\right.\,.
\end{align*}
We thus have $\frac{\alph_{+, k+1,i}^2 - \alph_{-, k+1,i}^2 }{2 \alph_{+, k+1,i}  \alph_{-, k+1,i}  } \in \left[e^{-\frac{\left|\ddelta_{+,k,i}-\ddelta_{-,k,i}\right|}{2}},e^{\frac{\left|\ddelta_{+,k,i}-\ddelta_{-,k,i}\right|}{2}}\right] \times \frac{\alph_{+, k,i}^2 - \alph_{-, k,i}^2 }{2 \alph_{+, k,i}  \alph_{-, k,i}  }$, and this holds similarly if $\alph_{+,k,i}\leq\alph_{-,k,i}$.
Then, using $|\argsinh(a)-\argsinh(b)|\leq |\ln(a/b)|$ we obtain that:
\begin{align*}
    |\ph_{k+1}-\ph_k|&=\left|\argsinh \Big (  \frac{\alph_{+, k}^2 - \alph_{-, k}^2 }{2 \alph_{+, k}  \alph_{-, k}  } \Big ) - \argsinh \Big (  \frac{\alph_{+, k+1}^2 - \alph_{-, k+1}^2 }{2 \alph_{+, k+1}  \alph_{-, k+1}  } \Big ) \right|\\
    &\leq \frac{|\ddelta_{+,k}-\ddelta_{-,k}|}{2}\,.
\end{align*}
Wrapping things up, we have:

\begin{equation*}
    |\nabla h_k(\ww)-\nabla h_{k+1}(\ww)|\leq \delta_k + \frac{|\ddelta_{+,k}-\ddelta_{-,k}|}{2} \leq 2\ddelta_k\,,
\end{equation*}
This leads to the following bound:
\begin{align*}
    |h_{k+1}(\ww)-h_{k}(\ww)| &\leq \langle|2\ddelta_k|,|\ww|\rangle \\
    &\leq 2\NRM{\ddelta_k}_1\NRM{\ww}_\infty.
\end{align*}

Recall that $\delta_k = q(\gamma_k \nabla \cL_{\cB_k}(\beta_k)$, hence from \cref{tech_lemma:bound_q} if $\gamma_k \Vert \nabla \cL_{\cB_k}(\beta_k) \Vert_\infty \leq 1 /2$, we get that 
$$\Vert \delta_k \Vert_1 \leq 2 \gamma_k^2 \Vert \nabla \cL_{\cB_k}(\beta_k) \Vert_2^2 \leq  4 L_2 \gamma_k^2 \cL_{\cB_k}(\beta_k).$$
Putting things together we obtain that 
\begin{align*}
    |h_{k+1}(\ww)-h_{k}(\ww)| &\leq \langle|2\ddelta_k|,|\ww|\rangle \\
    &\leq 8 L_2\gamma_k^2 \cL_{\cB_k}(\beta_k) \NRM{\ww}_\infty.
\end{align*}

% Thus, for $B>0$ such that for all $k$, we have $\NRM{\ww_k}_\infty\leq B$ and $\NRM{\ww^\star}_\infty\leq B$, we have $\big(h_k-h_{k+1}\big)(\ww_k) - \big(h_k-h_{k+1}\big)(\ww^\star)\leq 4B\NRM{\ddelta_k}_1$.
% If $\gamma_k$ is such that $|\gamma_k\nabla \cL(\ww_k)|\leq 1/2$, there exists $c_1>0$ such that for all $|x|\leq 1/2$, $\phi(x)\leq c_1x^2$, leading to $\NRM{\ddelta_k}_1\leq c_1 \NRM{\gamma_k\nabla \cL(\ww_k)}_2^2\leq 2c_1\lambda_{\max}\gamma_k^2\cL(\ww_k)$.
\end{proof}

\section{Concentration inequalities for matrices}\label{app:concentration}

In this last section of the appendix, we provide and prove several concentration bounds for random vectors and matrices, with (possibly uncentered) isotropic gaussian inputs.
These inequalities can easily be generalized to subgaussian random variables via more refined concentration bounds, and to non-isotropic subgaussian random variables \cite{pmlr-v134-even21a}, leading to a dependence on an effective dimension and on the subgaussian matrix $\Sigma$.
We present these lemmas before proving them in a row.

The next two lemmas closely ressemble the RIP assumption, for centered and then for uncentered gaussians.

\begin{lemma}\label{lem:concentration4}
Let $x_1,\ldots,x_n\in\R^d$ be \emph{i.i.d.}~random variables of law $\cN(0,I_d)$ and  $H=\frac{1}{n}\sum_{i=1}^nx_ix_i^\top$.
Then, denoting by $\cC$ the set of all $s$-sparse vector $\beta\in\R^d$ satisfying $\NRM{\beta}_2\leq 1$, there exist $C_4,C_5>0$ such that for any $\eps>0$, if $n\geq C_4s\ln(d)\eps^{-2}$,
    \begin{equation*}
        \P\left(\sup_{\beta\in\cS}\NRM{H\beta-\beta}_\infty \geq \eps\right)\leq e^{-C_5 n}\,.
    \end{equation*}
\end{lemma}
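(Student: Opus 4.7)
The plan is to establish this $\ell_\infty$ restricted isometry-type estimate by combining a pointwise Bernstein bound with a net argument and a classical $\ell_2$-RIP. First, I would fix a single $\beta \in \cC$ and examine the $j$-th coordinate of $H\beta - \beta$, which equals $\frac{1}{n}\sum_{i=1}^n \big(x_{i,j}\langle x_i,\beta\rangle - \beta_j\big)$. Each summand is a centered random variable with sub-exponential Orlicz norm bounded by a universal constant (it is a product of two Gaussian linear forms with unit-scale variances minus its mean). Bernstein's inequality for sub-exponential sums then gives
\begin{equation*}
  \P\Big(\big|(H\beta-\beta)_j\big| > t\Big) \leq 2\exp\big(-c n\min(t^2,t)\big),
\end{equation*}
and a union bound over $j \in [d]$ yields $\P(\NRM{H\beta-\beta}_\infty > \eps/2) \leq 2d\exp(-cn\eps^2)$ for $\eps \leq 1$.

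Next I would discretise $\cC$. For each support $S \subset [d]$ with $|S|=s$, there exists an $\eta$-net $N_S$ of the Euclidean unit ball in $\R^S$ with $|N_S|\leq (3/\eta)^s$. With $\eta=\eps/4$ and a union bound over the $\binom{d}{s}\leq (ed/s)^s$ supports, the resulting net $N \subset \cC$ has cardinality at most $\exp(C s\ln(d/\eps))$. Applying the pointwise Bernstein estimate to each $\beta\in N$ gives
\begin{equation*}
  \P\Big(\sup_{\beta\in N}\NRM{H\beta-\beta}_\infty > \eps/2\Big) \leq 2d\, |N|\, \exp(-cn\eps^2),
\end{equation*}
which is at most $e^{-C_5 n}$ as soon as $n \geq C_4 s\ln(d)\eps^{-2}$ with $C_4$ large enough (the $\ln(1/\eps)$ term is absorbed since it is dominated by $\ln d$ under mild assumptions on $\eps$, and otherwise by a slightly larger $C_4$).

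To transfer the bound from the net to all of $\cC$, I would use a standard continuity argument: for any $\beta \in \cC$ with support $S$, choose $\beta' \in N_S$ with $\NRM{\beta-\beta'}_2 \leq \eta$; writing $r = \beta - \beta'$, which is supported on $S$ with $\NRM{r}_2\leq \eta$, gives
\begin{equation*}
  \NRM{H\beta-\beta}_\infty \;\leq\; \NRM{H\beta'-\beta'}_\infty + \NRM{(H-I)r}_\infty \;\leq\; \eps/2 + \NRM{(H-I)r}_2.
\end{equation*}
I would bound the last term by invoking the classical $\ell_2$-RIP of order $s$ with constant $\delta=1/2$, which for Gaussian designs holds with probability at least $1-e^{-c'n}$ under the same sample-size condition $n \geq Cs\ln(d)$. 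This yields $\NRM{(H-I)r}_2 \leq \delta\NRM{r}_2 \leq \eta/2 \leq \eps/2$, hence $\NRM{H\beta-\beta}_\infty \leq \eps$ uniformly in $\beta\in\cC$. Intersecting the net event and the RIP event and relabelling constants gives the claim.

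The main obstacle is essentially bookkeeping: one has to verify that $x_{i,j}\langle x_i,\beta\rangle - \beta_j$ is sub-exponential uniformly in $\beta\in\cC$ (which follows from $\NRM{\beta}_2\leq 1$ and standard Gaussian chaos estimates), and to check that the combined union bound of size $\exp(Cs\ln(d/\eps))$ is absorbed by the Bernstein exponent $cn\eps^2$ at the advertised sample complexity. Both steps are routine; the geometric content (covering numbers of sparse vectors and $\ell_2$-RIP for Gaussian matrices) is classical.
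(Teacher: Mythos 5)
Your overall architecture (coordinatewise sub-exponential/Bernstein bound, union over the $d$ coordinates, and a net over $s$-sparse vectors of cardinality $\exp(Cs\ln(d/\eps))$, absorbed by the exponent when $n\gtrsim s\ln(d)\eps^{-2}$) is exactly the paper's argument. The gap is in your net-to-supremum transfer. You bound the remainder by $\NRM{(H-I)r}_\infty\leq\NRM{(H-I)r}_2$ and then claim $\NRM{(H-I)r}_2\leq\delta\NRM{r}_2$ "by the classical $\ell_2$-RIP". RIP does not give this: it controls quadratic forms $r^\top(H-I)r$, equivalently the operator norms of the submatrices $H_{TT}-I$ over small supports $T$, not the full $d$-dimensional vector $(H-I)r$. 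In the overparametrised regime this lemma is used in ($d\gg n$, so $H$ has rank at most $n$), the claimed inequality is in fact false with high probability: even for $s$-sparse $r$ one has $\E\NRM{(H-I)r}_2^2\asymp\frac{d}{n}\NRM{r}_2^2$, which exceeds $\NRM{r}_2^2$, because $(H-I)r$ spreads mass over all $d$ coordinates (and $H-I$ acts as $-\mathrm{Id}$ on the $(d-n)$-dimensional kernel of $H$). So as written the last step of your argument fails precisely where the $\ell_\infty$ (rather than restricted $\ell_2$) control is needed.

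The fix is easy and there are two standard routes. Either keep your decomposition but bound the single coordinate directly through a sparse submatrix: for $r$ supported on $S$ and any $j$, with $T=S\cup\{j\}$ one has $|((H-I)r)_j|=|e_j^\top(H_{TT}-I)r_T|\leq\NRM{H_{TT}-I}_{\mathrm{op}}\NRM{r}_2\leq\delta_{s+1}\eta$, which is a legitimate consequence of RIP of order $s+1$ and holds with probability $1-e^{-c'n}$ at the same sample complexity. Or, as the paper does (invoking the "classical $\eps$-net trick"), observe that for each fixed coordinate $j$ and fixed support, $\beta\mapsto(H\beta-\beta)_j$ is a linear functional, so the supremum over the unit ball is at most $(1-\eta)^{-1}$ times the maximum over an $\eta$-net with $\eta$ a constant; no auxiliary RIP event is then needed. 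With either repair your proof matches the paper's conclusion and sample-size condition.
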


\begin{lemma}\label{lemma:RIP_uncentered}
%Asuming that the data is from $\mathcal{N}(\mu, \sigma^2 I_d)$ with $\mu > \!\!> \sigma \mathbf{1}$ 
Let $x_1,\ldots,x_n\in\R^d$ be \emph{i.i.d.}~random variables of law $\cN(\mu,\sigma^2I_d)$ and  $H=\frac{1}{n}\sum_{i=1}^nx_ix_i^\top$.
Then, denoting by $\cC$ the set of all $s$-sparse vector $\beta\in\R^d$ satisfying $\NRM{\beta}_2\leq 1$, there exist $C_4,C_5>0$ such that for any $\eps>0$, if $n\geq C_4s\ln(d)\eps^{-2}$,
    \begin{equation*}
        \P\left(\sup_{\beta\in\cS}\NRM{H\beta-\mu\langle\mu,\beta\rangle-\sigma^2\beta}_\infty \geq \eps\right)\leq e^{-C_5 n}\,.
    \end{equation*}
\end{lemma}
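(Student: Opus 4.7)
The plan is to reduce this uncentered-Gaussian statement to the centered case already proved in Lemma~\ref{lem:concentration4}. Writing $x_i = \mu + \sigma z_i$ with $z_i \sim \cN(0, I_d)$ i.i.d., and setting $\bar z = \tfrac{1}{n}\sum_i z_i$ and $\tilde H = \tfrac{1}{n}\sum_i z_i z_i^\top$, expanding $x_i x_i^\top$ yields
\[
H = \mu\mu^\top + \sigma(\mu \bar z^\top + \bar z \mu^\top) + \sigma^2 \tilde H,
\]
so that for every $\beta \in \R^d$,
\[
H\beta - \mu\langle\mu,\beta\rangle - \sigma^2\beta = \sigma^2(\tilde H \beta - \beta) + \sigma\,\mu\,\langle\bar z, \beta\rangle + \sigma\,\bar z\,\langle\mu, \beta\rangle.
\]
The goal is then to bound the $\ell_\infty$ norm of each of these three summands uniformly in $\beta \in \cC$.

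The third summand is handled directly by Lemma~\ref{lem:concentration4} applied to the standard Gaussian sample $(z_i)$, which gives $\sup_{\beta \in \cC}\|\tilde H \beta - \beta\|_\infty \leq \eta$ with probability at least $1 - e^{-cn}$ whenever $n \geq Cs\ln(d)/\eta^{2}$. For the second summand, the bound $|\langle\mu,\beta\rangle| \leq \|\mu\|_2\|\beta\|_2 \leq \|\mu\|_2$ on $\cC$ reduces the task to controlling $\|\bar z\|_\infty$; since $\bar z_j \sim \cN(0, 1/n)$ coordinatewise, a union bound over $j \in [d]$ gives $\|\bar z\|_\infty \leq \eta$ with probability $1 - 2d\exp(-n\eta^2/2)$, which is $\leq e^{-cn}$ as soon as $n \gtrsim \ln(d)/\eta^{2}$.

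The main obstacle is the first cross term $\sigma\mu\langle \bar z,\beta\rangle$, where the sparsity of $\beta$ must enter: $\|\sigma\mu\langle\bar z,\beta\rangle\|_\infty \leq \sigma\|\mu\|_\infty\,|\langle\bar z,\beta\rangle|$ and one must control $\sup_{\beta\in\cC}|\langle\bar z,\beta\rangle|$. The approach is a standard covering argument: for each fixed $\beta\in\cC$, $\langle\bar z,\beta\rangle \sim \cN(0,\|\beta\|_2^2/n)$ is $(1/\sqrt n)$-subgaussian, and the set $\cC$ is a union over $\binom{d}{s}$ coordinate faces of an $s$-dimensional unit ball, hence admits an $\tfrac12$-net of cardinality at most $\binom{d}{s} 5^{s} \leq \exp(Cs\ln(ed/s))$. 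Combining the subgaussian tail with a union bound over the net and the usual absorption trick yields
\[
\sup_{\beta\in\cC}|\langle\bar z,\beta\rangle| \leq C'\sqrt{\tfrac{s\ln(ed/s)}{n}}
\]
with probability at least $1 - e^{-cn}$, which is $\leq \eta$ as soon as $n \geq C_4 s\ln(d)/\eta^2$.

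Finally, choose $\eta = \eps / K$ with $K$ a constant depending on $\sigma, \|\mu\|_\infty, \|\mu\|_2$ large enough that the three contributions (each of size $O(\eta)$ in $\ell_\infty$) sum to at most $\eps$, intersect the three high-probability events (probability $\geq 1 - 3e^{-cn}$), and absorb the factor $3$ in the exponent to obtain the stated bound $e^{-C_5 n}$ under $n \geq C_4 s\ln(d)/\eps^2$, where $C_4, C_5$ depend polynomially on $\sigma, \|\mu\|_\infty, \|\mu\|_2$.
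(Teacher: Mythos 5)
Your proposal is correct and follows essentially the same route as the paper's proof: the paper also writes $x_i=\mu+\sigma z_i$, expands $H\beta$ into the deterministic term $\mu\langle\mu,\beta\rangle$, the centered quadratic term handled by Lemma~\ref{lem:concentration4}, and the two cross terms involving $\bar z=\frac1n\sum_i z_i$, which it controls by Gaussian tail bounds coordinatewise and by a union bound over a net of the $s$-sparse unit vectors, exactly as you do. Aside from a harmless mislabeling of which summand is ``second'' versus ``third'' and the same constant/exponent bookkeeping the paper itself glosses over (the failure probability really comes out as $e^{-cn\eps^2}$ under the stated sample-size condition), there is nothing to add.
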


We then provide two lemmas that estimate the mean Hessian of SGD.

\begin{lemma}\label{lem:concentration_ripsgd}
    Let $x_1,\ldots,x_n$ be i.i.d. random variables of law $\cN(0,I_d)$.
    Then, there exist $c_1,c_2>0$ such that with probability $1- \frac{1}{d^2}$ and if $n=\Omega(s^{5/4}\ln(d))$, we have for all $s$-sparse vectors $\beta$:
    \begin{equation*}
        c_1\NRM{\beta}_2^2\one\leq\frac{1}{n}\sum_{i=1}^nx_i^2\langle x_i,\beta\rangle^2 \leq c_2\NRM{\beta}_2^2\one\,,
    \end{equation*}
    where the inequality is meant component-wise.
\end{lemma}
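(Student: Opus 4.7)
The plan is to fix a coordinate $j \in [d]$, establish concentration of the scalar statistic $Z_\beta^{(j)} := \frac{1}{n}\sum_{i=1}^n x_{i,j}^2 \langle x_i, \beta\rangle^2$ around its expectation uniformly over $s$-sparse unit vectors $\beta$, and then union-bound over $j$. First I would compute the expectation directly: using Isserlis' theorem for the jointly Gaussian pair $(x_{i,j}, \langle x_i, \beta\rangle)$, one gets $\E[x_{i,j}^2 \langle x_i,\beta\rangle^2] = \|\beta\|_2^2 + 2\beta_j^2 \in [\|\beta\|_2^2, 3\|\beta\|_2^2]$, so the target interval $[c_1, c_2]$ follows with $c_1 < 1$ and $c_2 > 3$ once deviations are controlled.

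Next I would reduce to a matrix concentration problem: rewrite $Z_\beta^{(j)} = \beta^\top M^{(j)} \beta$ where $M^{(j)} = \frac{1}{n}\sum_i x_{i,j}^2\, x_i x_i^\top$, and for each subset $S \subset [d]$ of size $s$ control $\|M^{(j)}|_S - \E M^{(j)}|_S\|_\mathrm{op}$ using matrix Bernstein. The summands $x_{i,j}^2 (x_i)_S (x_i)_S^\top$ are products of squared sub-Gaussians, hence sub-exponential with Orlicz $\psi_1$-norm of order $s$ after truncation; applying the matrix Bernstein inequality for unbounded matrices (e.g.\ the version for sub-exponential tails) yields a bound of the form
\begin{equation*}
\P\bigl(\|M^{(j)}|_S - \E M^{(j)}|_S\|_\mathrm{op} > t\bigr) \leq 2s\,\exp\bigl(-cn\min(t^2, t/s)\bigr).
\end{equation*}
Alternatively one can work scalar-wise on an $\epsilon$-net of size $(3/\epsilon)^s$ on the unit sphere of $\R^S$, using Bernstein for the sub-exponential random variables $x_{i,j}^2 \langle x_i,\beta\rangle^2$ together with a Lipschitz bound $|Z_\beta^{(j)} - Z_{\beta'}^{(j)}| \leq 2\|M^{(j)}|_S\|_\mathrm{op}\|\beta-\beta'\|_2$ to pass from the net to the supremum.

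The final step is a union bound over the $d$ choices of coordinate $j$ and the $\binom{d}{s} \leq (ed/s)^s$ choices of support $S$. Since $\log\binom{d}{s} + \log d \lesssim s\log(d/s)$, choosing $t$ to be a small absolute constant and taking $n = \Omega(s^{5/4}\log d)$ (with the fractional exponent coming from the $t/s$ branch of Bernstein balanced against the net cardinality and the additional $\log d$ needed for the $1/d^2$ probability) makes the total failure probability at most $1/d^2$.

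The main obstacle I expect is handling the unbounded, sub-exponential nature of the summands inside the matrix Bernstein step: the naive operator-norm bound $\|x_{i,j}^2 (x_i)_S (x_i)_S^\top\|_\mathrm{op} = x_{i,j}^2 \|(x_i)_S\|_2^2$ is only sub-exponential of order $s$, which forces the weaker $t/s$ tail and is the source of the $s^{5/4}$ rate rather than the $s \log d$ one might hope for. A careful truncation argument (conditioning on the high-probability event $\{x_{i,j}^2 \|(x_i)_S\|_2^2 \lesssim s\log n\}$ and bounding the truncated mean shift separately) will be needed to make the Bernstein application rigorous while preserving the stated sample complexity.
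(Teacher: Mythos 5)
Your overall skeleton matches the paper's: fix a coordinate $j$, compute $\E[x_{ij}^2\langle x_i,\beta\rangle^2]=\NRM{\beta}_2^2+2\beta_j^2$ (the paper obtains the same $2\beta_j^2+\NRM{\beta}_2^2$, with off-diagonal monomials centered), prove concentration uniformly over $s$-sparse vectors via an $\eps$-net/union over supports, and union-bound over the $d$ coordinates. The technical engine differs, though: the paper expands the $j$-th coordinate into monomials $x_{ij}^2x_{ik}x_{i\ell}$ and $x_{ij}^2x_{ik}^2$, notes these are fourth-order Gaussian polynomials with $e^{-c\sqrt{t}}$ (i.e.\ $\psi_{1/2}$, not sub-exponential) tails, uses a concentration estimate of the form $e^{-c'nt^{1/4}}$ for their averages at a \emph{polynomially small} accuracy $t\asymp\big((s\ln d)/n\big)^4$, and only then converts the error via $t\NRM{\beta}_1^2\le ts\NRM{\beta}_2^2$; the condition $ts\le 1/2$ is precisely what produces the threshold $n=\Omega(s^{5/4}\ln d)$. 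Your observation that the summands are only $\psi_{1/2}$ before truncation is correct and is the same heavy-tail issue the paper confronts.

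The genuine gap is the quantitative step where you assert that the $s^{5/4}\ln d$ sample size "comes from the $t/s$ branch of Bernstein balanced against the net cardinality". With the ingredients you state, that balance does not produce a $5/4$ exponent: taking the accuracy $t$ to be a small constant, a $\psi_1$/operator-norm scale of order $s$ (or $s\log n$ after truncation), a variance proxy of order $s$, and a union bound over $d\binom{d}{s}(3/\eps)^s$ events costing an exponent of order $s\ln d$, matrix Bernstein requires $\min\big(nt^2/s,\;nt/(s\log n)\big)\gtrsim s\ln d$, i.e.\ $n\gtrsim s^2\ln d$ up to logarithmic factors. So the argument as described proves the conclusion only under a strictly stronger sample-size assumption and therefore does not establish the lemma at the stated threshold $n=\Omega(s^{5/4}\ln d)$ (your scalar-net variant could in principle be pushed further by truncating $x_{ij}^2$ at a logarithmic level, but then the $\psi_1$ parameter, the truncation bias, and the net-to-supremum step via $\NRM{M^{(j)}|_S}_{\mathrm{op}}$ all need to be carried out explicitly, and the resulting rate is again not the one you claim). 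To recover the paper's exponent you need its mechanism or an equivalent: a per-coordinate deviation bound whose error degrades like $t\NRM{\beta}_1^2$, a choice of $t$ polynomially small in $(s\ln d)/n$ dictated by the heavy-tailed ($e^{-c\sqrt{t}}$) monomials, and the final inequality $\NRM{\beta}_1^2\le s\NRM{\beta}_2^2$ — the $5/4$ arises from the fourth power in $t\asymp\big((s\ln d)/n\big)^4$, a regime your sub-exponential Bernstein bookkeeping does not reach.
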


\begin{lemma}\label{lem:concentration_ripsgd_uncentered}
    Let $x_1,\ldots,x_n$ be i.i.d. random variables of law $\cN(\mu,\sigma^2I_d)$.
    Then, there exist $c_0,c_1,c_2>0$ such that with probability $1- \frac{c_0}{d^2}-\frac{1}{nd}$ and if $n=\Omega(s^{5/4}\ln(d))$ and $\mu\geq 4\sigma\sqrt{\ln(d)}\one$, we have for all $s$-sparse vectors $\beta$:
\begin{equation*}
    \frac{\mu^2}{2} \left(\langle\mu,\beta\rangle^2+\frac{1}{2}\sigma^2\NRM{\beta}_2^2 \right)\leq\frac{1}{n}\sum_i x_i^2\langle x_i,\beta\rangle^2\leq4\mu^2\left(\langle\mu,\beta\rangle^2+2\sigma^2\NRM{\beta}_2^2 \right)\,.
\end{equation*}    
where the inequality is meant component-wise.
\end{lemma}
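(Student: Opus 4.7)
My strategy is to first compute the exact expected value of $x_{i,j}^2\langle x_i,\beta\rangle^2$ using the fact that $(x_{i,j},\langle x_i,\beta\rangle)$ is a pair of jointly Gaussian variables, show that this expectation lies strictly inside the sandwich bounds of the lemma (with a factor-of-$2$ slack on each side), and then obtain uniform concentration of the empirical average around its mean using a covering argument over $s$-sparse unit vectors. Write $x_i=\mu+\sigma z_i$ with $z_i\sim\cN(0,I_d)$, and set $a=\langle\mu,\beta\rangle$ and $b_i=\langle z_i,\beta\rangle\sim\cN(0,\NRM{\beta}_2^2)$. The pair $(U,V):=(x_{i,j},\langle x_i,\beta\rangle)$ has means $(\mu_j,a)$, variances $(\sigma^2,\sigma^2\NRM{\beta}_2^2)$ and covariance $\sigma^2\beta_j$.

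First, applying Isserlis' theorem to the centered pair $(U-\mu_j,V-a)$ and expanding $U^2V^2$, I obtain the closed form
\begin{equation*}
M_j(\beta):=\E[x_{i,j}^2\langle x_i,\beta\rangle^2]=\mu_j^2 a^2+\mu_j^2\sigma^2\NRM{\beta}_2^2+4\mu_j a\sigma^2\beta_j+a^2\sigma^2+\sigma^4\NRM{\beta}_2^2+2\sigma^4\beta_j^2.
\end{equation*}
Using the hypothesis $\mu_j\ge 4\sigma\sqrt{\ln d}$, hence $\sigma^2\le\mu_j^2/(16\ln d)\le\mu_j^2/16$, together with the AM--GM bound $|4\mu_j a\sigma^2\beta_j|\le\tfrac{1}{2}\mu_j^2 a^2+8\sigma^4\beta_j^2$, a short calculation shows that the two dominant terms $\mu_j^2 a^2$ and $\mu_j^2\sigma^2\NRM{\beta}_2^2$ control all cross/error terms, giving
\begin{equation*}
\mu_j^2\!\left(a^2+\tfrac{3}{4}\sigma^2\NRM{\beta}_2^2\right)\le M_j(\beta)\le 2\mu_j^2\!\left(a^2+\sigma^2\NRM{\beta}_2^2\right).
\end{equation*}
This leaves a factor-$2$ safety margin on both sides of the bound claimed in the lemma.

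Second, I handle concentration. Fixing $j$ and $\beta$ with $\NRM{\beta}_2=1$, the summands $Y_i=x_{i,j}^2\langle x_i,\beta\rangle^2=(UV)^2$ with $U,V$ jointly Gaussian are sub-Weibull of index $1/2$ (products of two sub-Gaussians squared), equivalently $UV$ is sub-exponential with sub-exponential norm of order $\mu_j\sqrt{a^2+\sigma^2}$ plus $\sigma^2$. A Bernstein-type inequality for sub-exponential summands yields, with probability at least $1-2e^{-t}$,
\begin{equation*}
\left|\tfrac{1}{n}\textstyle\sum_i Y_i - M_j(\beta)\right|\le C\left(\sqrt{\tfrac{t}{n}}+\tfrac{t}{n}\right)\cdot\mu_j^2(a^2+\sigma^2).
\end{equation*}
To obtain a uniform statement over all $s$-sparse $\beta$ with $\NRM{\beta}_2\le 1$, I cover the union of $\binom{d}{s}$ unit balls in $\R^s$ by an $\varepsilon$-net of total cardinality at most $\binom{d}{s}(C/\varepsilon)^s\le e^{Cs\ln(d/s)}(C/\varepsilon)^s$, apply the Bernstein bound with $t=C's\ln d$ and union bound over the net and over $j\in[d]$ (adding an extra $\ln d$ factor, absorbed into the constants). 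A standard continuity/approximation argument (using the Lipschitz dependence of $\beta\mapsto\tfrac{1}{n}\sum_i x_{i,j}^2\langle x_i,\beta\rangle^2$ on the restricted supports, combined with a uniform bound on $\max_i\NRM{x_i}_\infty\le\NRM{\mu}_\infty+\sigma\sqrt{2\ln(nd)}$ that holds with probability $1-1/(nd)$) then extends the control from the net to every $s$-sparse vector.

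The hard part will be carrying the two-sided deviation through to a multiplicative $\tfrac12$-$2$ control on $\tfrac{1}{n}\sum_i Y_i$ around $M_j(\beta)$ uniformly, since $M_j(\beta)$ itself varies with $\beta$. The key is that the concentration error in the display above is bounded by a constant fraction of $M_j(\beta)$ once $n\ge Cs\ln d\cdot(\sigma^2/(\mu_j^2)+\sigma^4/(\mu_j^2(a^2+\sigma^2\NRM{\beta}_2^2)))^{-1}$, which under the hypothesis $\mu\ge 4\sigma\sqrt{\ln d}\one$ reduces to $n=\Omega(s\ln d)$; the stated $n=\Omega(s^{5/4}\ln d)$ leaves slack to accommodate the sub-Weibull (rather than sub-exponential) tail of $(UV)^2$ when combining the Bernstein step with the $\varepsilon$-net discretisation. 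Combining steps two and three and halving/doubling the constants gives the desired two-sided component-wise bound for all $s$-sparse $\beta$ with the stated probability.
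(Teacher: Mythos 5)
Your route is genuinely different from the paper's: you compute the exact Gaussian fourth moment $M_j(\beta)=\E[x_{ij}^2\langle x_i,\beta\rangle^2]$ and try to concentrate the product $x_{ij}^2\langle x_i,\beta\rangle^2$ directly around it, whereas the paper first conditions on $\max_{i,j}|z_{ij}|\le 2\sqrt{\ln(nd)}$ (probability at least $1-\tfrac{1}{nd}$), so that $x_{ij}^2=\Theta(\mu_j^2)$ can be pulled out of the sum, and then only needs to concentrate the $j$-independent scalar $\tfrac1n\sum_i\langle x_i,\beta\rangle^2$ around $\langle\mu,\beta\rangle^2+\sigma^2\NRM{\beta}_2^2$ over an $\eps$-net. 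The problem is that your version has a genuine gap exactly at the step you flag as ``the hard part''. First, the claimed estimate $M_j(\beta)\ge\mu_j^2\big(\langle\mu,\beta\rangle^2+\tfrac34\sigma^2\NRM{\beta}_2^2\big)$ is false: the cross term $4\mu_j\langle\mu,\beta\rangle\sigma^2\beta_j$ can be negative and of order $\mu_j^2\sigma^2$. For instance take $\NRM{\beta}_2=1$, $\beta_j=-\tfrac12$ and the remaining coordinates arranged so that $a:=\langle\mu,\beta\rangle=\mu_j/2$; then $M_j=\mu_j^2a^2+\tfrac14\mu_j^2\sigma^2+O(\sigma^4)$, strictly below your claimed bound. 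Your stated AM--GM split only yields $M_j\ge\tfrac12\mu_j^2(a^2+\sigma^2\NRM{\beta}_2^2)$, and with that weaker bound the margin between the mean and the lemma's lower bound $\tfrac{\mu_j^2}{2}\big(a^2+\tfrac12\sigma^2\NRM{\beta}_2^2\big)$ is only $\tfrac14\mu_j^2\sigma^2\NRM{\beta}_2^2$, which is \emph{not} a constant fraction of your fluctuation scale $\mu_j^2(a^2+\sigma^2\NRM{\beta}_2^2)$ when $|a|\gg\sigma\NRM{\beta}_2$. So ``deviation bounded by a constant fraction of the mean'' does not, as stated, deliver the two-sided inequality; one must add a regime argument (the cross term can only consume a constant fraction of $\mu_j^2a^2$ when $\mu_j|a|\lesssim\sigma^2|\beta_j|$, i.e.\ $|a|\lesssim\sigma\NRM{\beta}_2$, and in that regime the $\sigma^2$ term dominates the mean, so the margin is again comparable to the mean). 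Second, the sample-size condition you write, $n\ge Cs\ln d\cdot\big(\sigma^2/\mu_j^2+\sigma^4/(\mu_j^2(a^2+\sigma^2\NRM{\beta}_2^2))\big)^{-1}$, does not ``reduce to $n=\Omega(s\ln d)$'' under $\mu\ge4\sigma\sqrt{\ln d}\,\one$ --- as written it grows at least like $(\mu_j^2/\sigma^2)\,s\ln d$ --- and it is not what your Bernstein display produces, so the uniform multiplicative control is not actually established in the proposal.

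Two further remarks. The summands $(x_{ij}\langle x_i,\beta\rangle)^2$ are indeed only sub-Weibull of index $1/2$, so the Bernstein step must be replaced by the heavier-tail bound of the form $e^{-c'nt^{p-\eps}}$ that the paper uses for the centered case (\cref{lem:concentration_ripsgd}); your appeal to the $s^{5/4}$ slack is consistent with that, so this part is acceptable as a sketch. But the paper's conditioning trick removes both difficulties at once: once $x_{ij}\in[\mu_j/2,2\mu_j]$ for all $i,j$, the quantity to concentrate is the scalar $\tfrac1n\sum_i\langle x_i,\beta\rangle^2$, whose mean is exactly $\langle\mu,\beta\rangle^2+\sigma^2\NRM{\beta}_2^2$ with no $\beta_j$-dependent cross term and with genuinely sub-exponential summands. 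If you want to keep your route, you need (i) a corrected, regime-split lower bound on $M_j(\beta)$ of the type sketched above, and (ii) a deviation bound and union bound carried out relative to $M_j(\beta)$ itself, so that the final two-sided bound holds with the stated constants uniformly over $j$ and over all $s$-sparse $\beta$.
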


Finally, next two lemmas are used to estimate $\lambda_b,\Lambda_b$ in our paper.

\begin{lemma}\label{lem:concentration2}
Let $x_1,\ldots,x_n\in\R^d$ be \emph{i.i.d.}~random variables of law $\cN(\mu\one,\sigma^2I_d)$. Let  $H=\frac{1}{n}\sum_{i=1}^nx_ix_i^\top$ and $\tilde H=\frac{1}{n}\sum_{i=1}^n\NRM{x_i}^2 x_ix_i^\top$.
There exist numerical constants $C_2,C_3>0$ such that
    \begin{equation*}
        \P \Big( C_2\big(\mu^2+\sigma^2)d H \preceq  \tilde H \preceq  C_3\big(\mu^2+\sigma^2)d H\Big)\geq 1- 2ne^{-d/16}\,.
    \end{equation*}
\end{lemma}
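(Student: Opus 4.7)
The plan is to reduce the matrix concentration statement to a scalar concentration statement on $\|x_i\|^2$, coordinate by sample. Concretely, since the sample covariance appears on both sides, the natural strategy is to show that each $\|x_i\|^2$ lies in an interval $[C_2, C_3] \cdot d(\mu^2 + \sigma^2)$ uniformly in $i \in [n]$ with the required probability, and then pass the sandwich through the weighted sum $\tilde H = \frac{1}{n}\sum_i \|x_i\|^2 x_i x_i^\top$.

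First, I would write $x_i = \mu \one + \sigma g_i$ with $g_i \sim \cN(0, I_d)$ i.i.d., which gives the decomposition
\begin{equation*}
    \|x_i\|^2 \;=\; d\mu^2 \;+\; 2\mu\sigma \langle \one, g_i\rangle \;+\; \sigma^2 \|g_i\|^2,
\end{equation*}
with $\E[\|x_i\|^2] = d(\mu^2+\sigma^2)$. The two random pieces are a one-dimensional Gaussian $\langle \one, g_i\rangle \sim \cN(0, d)$ and a chi-square $\|g_i\|^2 \sim \chi^2_d$. Standard Laurent--Massart bounds yield $\|g_i\|^2 \in [d/2, 13d/8]$ with probability at least $1 - 2e^{-d/16}$, and a Gaussian tail bound gives $|\langle \one, g_i\rangle| \leq d/(2\sqrt 2)$ with the same exponent. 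Using $2\mu\sigma \leq \mu^2 + \sigma^2$ (AM--GM), the cross term is absorbed: a short computation yields
\begin{equation*}
    \tfrac{1}{4}\, d(\mu^2+\sigma^2) \;\leq\; \|x_i\|^2 \;\leq\; C_3'\, d(\mu^2+\sigma^2)
\end{equation*}
for an absolute constant $C_3'$, on an event $\cE_i$ of probability $\geq 1 - 2e^{-d/16}$.

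A union bound over $i \in [n]$ gives that the event $\cE := \bigcap_i \cE_i$ holds with probability at least $1 - 2n e^{-d/16}$. On $\cE$, the coefficients $\|x_i\|^2$ entering $\tilde H$ are all pinched between $C_2 d(\mu^2+\sigma^2)$ and $C_3 d(\mu^2+\sigma^2)$ for numerical constants $C_2, C_3$. Since each $x_i x_i^\top \succeq 0$, multiplying by these scalars preserves the Loewner order summand-by-summand, so
\begin{equation*}
    C_2\, d(\mu^2+\sigma^2)\, H \;\preceq\; \tilde H \;\preceq\; C_3\, d(\mu^2+\sigma^2)\, H,
\end{equation*}
which is the desired conclusion.

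I do not expect a serious obstacle here: the argument is a textbook concentration-plus-union-bound. The only delicate step is tuning the deviation parameter for both the chi-square and Gaussian tail bounds so that both contribute the same exponent $e^{-d/16}$ (matching the rate in the statement); the AM--GM trick for the cross term $2\mu\sigma\langle \one, g_i\rangle$ is what makes the lower bound scale cleanly as $\mu^2+\sigma^2$ rather than degenerating when $\mu$ and $\sigma$ have very different magnitudes.
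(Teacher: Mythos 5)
Your proof is correct and follows essentially the same route as the paper's: write $x_i=\mu\one+\sigma g_i$, control $\|g_i\|^2$ by chi-square concentration and $\langle\one,g_i\rangle$ by a Gaussian tail, take a union bound over $i\in[n]$, and pass the scalar sandwich on $\|x_i\|^2$ through the PSD summands $x_ix_i^\top$. The only (harmless) difference is the cross term: you take a deviation of order $d$ at the same $e^{-d/16}$ exponent and absorb $2\mu\sigma\langle\one,g_i\rangle$ via AM--GM, whereas the paper uses a $d^{3/4}$ deviation with an $e^{-\sqrt{d}}$ tail; both give positive numerical constants $C_2,C_3$.
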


\begin{lemma}\label{lem:concentration3bis}
    Let $x_1,\ldots,x_n\in\R^d$ be \emph{i.i.d.}~random variables of law $\cN(\mu \one, \sigma^2 I_d)$ for some $\mu\in\R$. Let  $H=\frac{1}{n}\sum_{i=1}^nx_ix_i^\top$ and for $1\leq b\leq n$ let $\tilde H_b=\E_\cB\left[\left(\frac{1}{b}\sum_{i\in\cB}x_ix_i^\top\right)^2\right]$ where $\cB\subset[n]$ is sampled uniformly at random in $\set{\cB\subset[n]\,\text{s.t.}\,|\cB|=b}$.
    With probability $1-2ne^{-d/16}-3/n^2$, we have, for some numerical constants $c_1,c_2,c_3,C>0$:
    \begin{equation*}
        \left( c_1\frac{d(\mu^2+\sigma^2)}{b} - c_2\frac{(\sigma^2+\mu^2)\ln(n)}{\sqrt{d}} - c_3\frac{\mu^2 d}{n}\right)H \preceq \tilde H_b \preceq C\left( \frac{d(\mu^2+\sigma^2)}{b} + \frac{(\sigma^2+\mu^2)\ln(n)}{\sqrt{d}} + \mu^2 d\right)
    \end{equation*}
    \end{lemma}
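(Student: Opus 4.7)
The plan is to reduce the statement to the already-established estimate of $\tilde H = \frac{1}{n}\sum_i \NRM{x_i}^2 x_i x_i^\top$ from \cref{lem:concentration2}, complemented by a control of $H^2$ in terms of $H$. The starting point is a clean algebraic decomposition of $\tilde H_b$: expanding $\tilde H_b = \frac{1}{b^2}\E_\cB\bigl[\sum_{i,j\in\cB}\langle x_i,x_j\rangle x_i x_j^\top\bigr]$ and separating the diagonal ($i=j$) and off-diagonal ($i\neq j$) contributions, using that an index lies in $\cB$ with probability $b/n$ and a pair of distinct indices with probability $b(b-1)/(n(n-1))$ under uniform sampling, together with the identity $\sum_{i,j}\langle x_i,x_j\rangle x_i x_j^\top = \bigl(\sum_i x_i x_i^\top\bigr)^2 = n^2 H^2$, one obtains the exact decomposition
$$\tilde H_b = \frac{n-b}{b(n-1)}\,\tilde H \;+\; \frac{(b-1)n}{b(n-1)}\,H^2.$$
The coefficients lie in $[0, 1/b]$ and $[0, 1]$ respectively, and are sanity-checked against the special cases $b=1$ (only $\tilde H$ survives) and $b=n$ (only $H^2$ survives).

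Given this identity, the bounds reduce to estimating $\tilde H$ and $H^2$ separately in Loewner order. The first piece is exactly what \cref{lem:concentration2} provides: with probability $1-2ne^{-d/16}$, one has $C_2\,d(\mu^2+\sigma^2)\, H \preceq \tilde H \preceq C_3\,d(\mu^2+\sigma^2)\,H$, which, multiplied by $(n-b)/(b(n-1))$, supplies the leading $d(\mu^2+\sigma^2)/b$ term on both sides. For $H^2$ I would use $H^2\preceq\lambda_{\max}(H)\,H$ and control $\lambda_{\max}(H)$ by decomposing $H$ around its mean $\E[H] = \sigma^2 I_d + \mu^2 \one\one^\top$ (whose top eigenvalue is $\sigma^2+\mu^2 d$, attained at $\one/\sqrt d$): writing $x_i = \mu\one + \sigma z_i$ with $z_i\sim\cN(0,I_d)$, $H-\E[H]$ splits into a mean-Gaussian rank-one term of norm $\lesssim |\mu\sigma|\,d/\sqrt n$ and a Wishart fluctuation of norm $\lesssim \sigma^2(\sqrt{d/n}+d/n)$, both controlled with probability $\geq 1-1/n^2$ by standard Gaussian concentration. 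This yields $\lambda_{\max}(H)\lesssim \mu^2 d + \sigma^2$ and hence the $\mu^2 d$ term in the upper bound. On the lower-bound side, $H^2\succeq 0$ is trivial and the $-\mu^2 d/n$ correction simply reflects $\tfrac{1}{b}-\tfrac{n-b}{b(n-1)} = O(1/n)$ applied to the $\tilde H$ lower bound (in the $\mu\gtrsim\sigma$ regime the $\sigma^2/n$ residue is absorbed).

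The main obstacle is the $(\mu^2+\sigma^2)\ln(n)/\sqrt d$ correction on both sides, which is not captured by the coarse \cref{lem:concentration2} estimate and requires a strictly sharper concentration of $\tilde H$ around $d(\mu^2+\sigma^2)\,H$. I would write $\tilde H = d(\mu^2+\sigma^2)\,H + \frac{1}{n}\sum_i \xi_i\, x_i x_i^\top$ with $\xi_i = \NRM{x_i}^2 - d(\mu^2+\sigma^2)$; Hanson-Wright combined with a union bound over $i\in[n]$ give $\max_i|\xi_i|\lesssim (\mu^2+\sigma^2)\sqrt{d\ln n}$ with probability $\geq 1-1/n^2$, but the naive operator-norm bound $\frac{1}{n}\sum_i\xi_i x_i x_i^\top \preceq \max_i|\xi_i|\cdot H$ is off by a factor $\sqrt d$. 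Recovering the stated $\ln(n)/\sqrt d$ scaling should come from a matrix Bernstein argument applied to the centered sum $\sum_i \xi_i(x_i x_i^\top - \E[xx^\top])$ to exploit independence-driven cancellations, or by adapting the proof of \cref{lem:concentration2} itself to track the next-order correction. Once this refined bound is in hand, the rest of the proof is bookkeeping: substitute the three estimates into the decomposition above and union-bound the three failure events to obtain the claimed $1-2ne^{-d/16}-3/n^2$ probability.
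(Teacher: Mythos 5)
Your algebraic identity $\tilde H_b=\frac{n-b}{b(n-1)}\tilde H+\frac{(b-1)n}{b(n-1)}H^2$ is correct and close in spirit to the paper's expansion (the paper keeps the diagonal term $\frac{1}{bn}\sum_i\NRM{x_i}_2^2x_ix_i^\top$ and the off-diagonal sum $\frac{b-1}{bn(n-1)}\sum_{i\ne j}\langle x_i,x_j\rangle x_ix_j^\top$ rather than reassembling $H^2$), and your upper bound via Lemma~\ref{lem:concentration2} together with $H^2\preceq\lambda_{\max}(H)H$ is workable. The genuine gap is in your lower bound when $b$ is of order $n$ --- in particular the full-batch case $b=n$ that the paper needs for GD. There the coefficient $\frac{n-b}{b(n-1)}$ of $\tilde H$ vanishes (at $b=n$ one has $\tilde H_n=H^2$ exactly), so the entire term $c_1 d(\mu^2+\sigma^2)/b$ must be supplied by the $H^2$ piece; discarding it as ``$H^2\succeq0$ is trivial'' leaves a deficit of size $\frac{b-1}{b(n-1)}d(\mu^2+\sigma^2)$, which is of order $d(\mu^2+\sigma^2)/n$ and not, as you claim, $\mu^2 d/n$: the $\sigma^2 d/n$ part is not an allowed error in the statement, and it is not absorbed by $c_2\sigma^2\ln(n)/\sqrt d$ unless $d^{3/2}\lesssim n\ln n$, which fails precisely in the overparametrised regime where the lemma is used (e.g.\ $\mu=0$, $d>n$ in Corollary~\ref{cor:impact_batch_size}). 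Closing this would require something like $H^2\succeq\lambda^+_{\min}(H)\,H$ together with a high-probability estimate $\lambda^+_{\min}(H)\gtrsim d(\mu^2+\sigma^2)/n$, i.e.\ a smallest-nonzero-eigenvalue bound that your proposal neither states nor proves. The paper sidesteps this entirely: the $d(\mu^2+\sigma^2)/b$ lower bound comes from the diagonal term via Lemma~\ref{lem:concentration2} for every $b$, while the off-diagonal sum is treated as a perturbation by writing $x_i=\mu\one+\sigma z_i$, taking a supremum over the noise inner products, using $\pm(x_ix_j^\top+x_jx_i^\top)\preceq x_ix_i^\top+x_jx_j^\top$ to compare everything to $H$, and handling the $\mu^2 d\sum_{i\ne j}x_ix_j^\top$ piece exactly through $\bar x\bar x^\top$ with $0\preceq\bar x\bar x^\top\preceq H$ --- which is exactly where the $-c_3\mu^2 d/n$ and $+\mu^2 d$ terms in the statement originate.

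Your ``main obstacle'' paragraph also rests on a misreading: the $(\mu^2+\sigma^2)\ln(n)/\sqrt d$ terms are slack, subtracted in the lower bound and added in the upper bound, so a proof that produces smaller errors is still a valid proof; there is nothing to ``capture'', and the proposed matrix-Bernstein refinement of Lemma~\ref{lem:concentration2} addresses a non-problem. In the paper's argument these $\ln(n)/\sqrt d$ terms simply appear as the size of its off-diagonal noise perturbation; in your decomposition they would not be needed at all. The effort spent there should instead go to the missing lower bound on the $H^2$ contribution described above.
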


\begin{proof}[Proof of Lemma~\ref{lem:concentration4}]
For $j\in[d]$, we have:
\begin{align*}
    (H\beta)_j &= \frac{1}{n}\sum_{i=1}^n x_{ij}\langle x_i,\beta\rangle\\
    &=\frac{1}{n}\sum_{i=1}^n\sum_{j'=1}^d x_{ij}x_{ij'}\beta_{j'}\\
    &=\frac{1}{n}\sum_{i=1}^n x_{ij}^2\beta_{j}+\frac{1}{n}\sum_{i=1}^n\sum_{j'\ne j} x_{ij}x_{ij'}\beta_{j'} \\
    &=\frac{\beta_j}{n}\sum_{i=1}^n x_{ij}^2+\frac{1}{n}\sum_{i=1}^nx_{ij}\sum_{j'\ne j} x_{ij'}\beta_{j'} \,.
\end{align*}
We thus notice that $\esp{H\beta}=\beta$, and 
\begin{equation*}
    (H\beta)_j= \beta_j + \frac{\beta_j}{n}\sum_{i=1}^n (x_{ij}^2-1)+\frac{1}{n}\sum_{i=1}^nz_i\,,
\end{equation*}
where $z_i=x_{ij}\sum_{j'\ne j} x_{ij'}\beta_{j'}$, and $\sum_{j'\ne j} x_{ij'}\beta_{j'}\sim\cN(0,\NRM{\beta}^2-\beta_j^2)$ and $\NRM{\beta}^2-\beta_j^2\leq 1$. Hence, $z_j+x_{ij}^2-1$ is a centered subexponential random variables (with a subexponential parameter of order 1). Thus, for $t\leq 1$:
\begin{equation*}
    \P\left(\left| \frac{\beta_j}{n}\sum_{i=1}^n (x_{ij}^2-1)+\frac{1}{n}\sum_{i=1}^nz_i \right|\geq t\right)\leq 2e^{-cnt^2}\,.
\end{equation*}
Hence, using an $\eps$-net of $\cC=\set{\beta\in\R^d:\, \NRM{\beta}_2\leq 1\,,\NRM{\beta}_0}$ (of cardinality less than $d^s\times(C/\eps)^s$, and for $\eps$ of order 1), we have, using the  classical $\eps$-net trick explained in [Chapt. 9, \cite{vershynin_2018} or [App. C, \citet{pmlr-v134-even21a}]:
\begin{equation*}
    \P\left(\sup_{\beta\in\cC,\,j\in[d]}\left| (H\beta)_j-\beta_j \right|\geq t\right)\leq d\times d^s(C/\eps)^s\times 2e^{-cnt^2}=\exp\left(-c\ln(2)nt^2 + (s+1)\ln(d) + s \ln(C/\eps)\right)\,.
\end{equation*}
Consequently, for $t=\eps$ and if $n\geq C_4 s\ln(d)/\eps^2$, we have:
\begin{equation*}
    \P\left(\sup_{\beta\in\cC,\,j\in[d]}\left| (H\beta)_j-\beta_j \right|\geq t\right)\leq\exp\left(-C_5nt^2 \right)\,.
\end{equation*}

\end{proof}

\begin{proof}[Proof of \cref{lemma:RIP_uncentered}]
We write $x_i=\sigma z_i + \mu$ where $z_i\sim \cN(0,I_d)$. We have:
\begin{align*}
    X^\top X\beta & = \frac{1}{n}\sum_{i=1}^n (\mu+\sigma z_i) \langle \mu+\sigma z_i,\beta\rangle\\
    &= \mu \langle \mu,\beta\rangle + \frac{\sigma^2}{n}\sum_{i=1}^n z_i\langle z_i,\beta\rangle + \frac{\sigma}{n}\sum_{i=1}^n \mu\langle z_i,\beta\rangle + \frac{\sigma}{n}\sum_{i=1}^n z_i \langle \mu,\beta\rangle\\
    &= \mu \langle \mu,\beta\rangle + \frac{\sigma^2}{n}\sum_{i=1}^n z_i\langle z_i,\beta\rangle + \sigma \mu\langle \frac{1}{n}\sum_{i=1}^nz_i,\beta\rangle + \frac{\sigma\langle \mu,\beta\rangle}{n}\sum_{i=1}^n z_i \,.
\end{align*}
The first term is deterministic and is to be kept.
The second one is of order $\sigma^2\beta$ whp using Lemma~\ref{lem:concentration4}.
Then, $\frac{1}{n}\sum_{i=1}^nz_i\sim\cN(0,I_d/n)$, so that 
\begin{equation*}
    \P\left(\left|\langle \frac{1}{n}\sum_{i=1}^nz_i,\beta\rangle\right|\geq t\right)\leq 2e^{-nt^2/(2\NRM{\beta}_2^2)}\,,
\end{equation*}
and 
\begin{equation*}
    \P\left(\left|\frac{1}{n}\sum_{i=1}^n z_{ij}\right|\geq t\right)\leq 2e^{-nt^2/2}\,.
\end{equation*}
Hence,
\begin{equation*}
    \P\left( \sup_{\beta\in\cC} \left\|\frac{1}{n}\sum_{i=1}^n z_{ij}\right\|_\infty\geq t\,,\, \sup_{\beta\in\cC} \left|\langle \frac{1}{n}\sum_{i=1}^nz_i,\beta\rangle\right|\geq t\right)\leq 4e^{cs\ln(d)}e^{-nt^2/2}\,.
\end{equation*}
Thus, with probability $1-Ce^{-n\eps^2}$ and under the assumptions of \cref{lem:concentration4}, we have $\NRM{X^\top X\beta - \mu \langle \mu,\beta\rangle - \sigma^2\beta}_\infty\leq \eps$
\end{proof}

\begin{proof}[Proof of \cref{lem:concentration_ripsgd}]To ease notations, we assume that $\sigma=1$.
We remind (\citet{odonell2021boolean}, Chapter~9 and \citet{tao_concentration_course}) that for \emph{i.i.d.} real random variables $a_1,\ldots,a_n$ that satisfy a tail inequality of the form
\begin{equation}\label{eq:tail}
    \P\big( |a_1-\E a_1|\geq t\big) \leq Ce^{-ct^p}\,,
\end{equation}
for $p<1$, then for all $\eps>0$ there exists $C',c'$ such that for all $t$,
\begin{equation*}
    \P\big( |\frac{1}{n}\sum_{i=1}^n a_i-\E a_1|\geq t\big) \leq C'e^{-c'nt^{p-\eps}}\,.
\end{equation*}
We now expand $\frac{1}{n}\sum_{i=1}^nx_i^2\langle x_i,\beta\rangle^2$:
\begin{align*}
    \frac{1}{n}\sum_{i=1}^nx_i^2\langle x_i,\beta\rangle^2&=\frac{1}{n}\sum_{i\in[n],k,\ell\in[d]} x_i^2 x_{ik}x_{i\ell}\beta_k\beta_\ell\\
    &=\frac{1}{n}\sum_{i\in[n],k\in[d]} x_i^2 x_{ik}^2\beta_k^2 + \frac{1}{n}\sum_{i\in[n],k\ne\ell\in[d]} x_i^2 x_{ik}x_{i\ell}\beta_k\beta_\ell\,.
\end{align*}
Thus, for $j\in[d]$, 
\begin{align*}
    \left(\frac{1}{n}\sum_{i=1}^nx_i^2\langle x_i,\beta\rangle^2\right)_j & =\sum_{k\in[d]}\frac{\beta_k^2}{n}\sum_{i\in[n]} x_{ij}^2 x_{ik}^2 + \sum_{k\ne\ell\in[d]}\frac{\beta_k\beta_\ell}{n}\sum_{i\in[n]} x_{ij}^2 x_{ik}x_{i\ell}\,.
\end{align*}
We notice that for all indices, all $x_{ij}^2 x_{ik}x_{i\ell}$ and $x_{ij}^2 x_{ik}^2$ satisfy the tail inequality \cref{eq:tail} for $C=8$, $c=1/2$ and $p=1/2$, so that for $\eps=1/4$:
\begin{equation*}
    \P\big( |\frac{1}{n}\sum_{i=1}^n x_{ij}^2 x_{ik}x_{i\ell}|\geq t\big) \leq C'e^{-c'nt^{1/4}}\quad,\quad \P\big( |\frac{1}{n}\sum_{i=1}^n x_{ij}^2 x_{ik}^2 - \esp{x_{ij}^2x_{ik}^2}|\geq t\big) \leq C'e^{-c'nt^{1/4}}\,.
\end{equation*}
For $j\ne k$, we have $\esp{x_{ij}^2x_{ik}^2}=1$ while for $j=k$, we have $\esp{x_{ij}^2x_{ik}^2}=\esp{x_{ij}^4}=3$.
Hence,
\begin{equation*}
    \P\left(\exists j,k\ne\ell\,,\,|\frac{1}{n}\sum_{i=1}^n x_{ij}^2 x_{ik}x_{i\ell}|\geq t\,,\, |\frac{1}{n}\sum_{i=1}^n x_{ij}^2 x_{ik}^2 - \esp{x_{ij}^2x_{ik}^2}|\geq t\right) \leq C'd^2e^{-c'nt^{1/4}}\,.
\end{equation*}
Thus, with probability $1-C'd^2e^{-c'nt^{1/4}}$, for all $j\in[d]$,
\begin{equation*}
    \left|\left(\frac{1}{n}\sum_{i=1}^nx_i^2\langle x_i,\beta\rangle^2\right)_j - 2\beta_j^2-\NRM{\beta}_2^2\right| \leq t \sum_{k,\ell} |\beta_k||\beta_\ell|= t\NRM{\beta}_1^2\,.
\end{equation*}
Using the classical technique of \citet{Baraniuk2008}, to make a union bound on all $s$-sparse vectors, we consider an $\eps$-net of the set of $s$-sparse vectors of $\ell^2$-norm smaller than 1. This $\eps$-net is of cardinality less than $(C_0/\eps)^s d^s$, and we only need to take $\eps$ of order 1 to obtain the result for all $s$-sparse vectors. This leads to:
\begin{align*}
    \P\left(\exists \beta\in \R^d \text{ $s$-sparse and }\NRM{\beta}_2\leq 1\,,\,\exists j\in\R^d\,,\quad \left|\left(\frac{1}{n}\sum_{i=1}^nx_i^2\langle x_i,\beta\rangle^2\right)_j - 2\beta_j^2-\NRM{\beta}_2^2\right| \geq t\NRM{\beta}_1^2\right) \\
    \leq C'd^2 e^{c_1 s + s\ln(d)}e^{-c'nt^{1/4}}\,.
\end{align*}
This probability is equal to $C'/d^2$ for $t=\left(\frac{(s+4)\ln(d)+c_1s}{c'n}\right)^4$. We conclude that with probability $1-C'/d^2$, all $s$-sparse vectors $\beta$ satisfy:
\begin{equation*}
    \left|\left(\frac{1}{n}\sum_{i=1}^nx_i^2\langle x_i,\beta\rangle^2\right)_j - 2\beta_j^2-\NRM{\beta}_2^2\right| \leq \left(\frac{(s+4)\ln(d)+c_1s}{c'n}\right)^4\NRM{\beta}_1^2\leq \left(\frac{(s+4)\ln(d)+c_1s}{c'n}\right)^4s\NRM{\beta}_2^2\,,
\end{equation*}
and the RHS is smaller than $\NRM{\beta}_2^2/2$ for $n\geq \Omega(s^{5/4}\ln(d))$.
\end{proof}

\begin{proof}[Proof of \cref{lem:concentration_ripsgd_uncentered}]
We write $x_i=\mu+\sigma z_i$ where $x_i\sim\cN(0,1)$.
We have:
\begin{equation*}
    \P\big(\forall i\in[n],\forall j\in[d],\, |z_{ij}|\geq t\big)\leq e^{\ln(nd)-t^2/2}=\frac{1}{nd}\,,
\end{equation*}
for $t=2\sqrt{\ln(nd)}$. Thus, if $\mu\geq 4\sigma\sqrt{\ln(nd)}$ we have $\frac{\mu}{2} \leq x_i\leq 2\mu$, so that
\begin{equation*}
    \frac{\mu^2}{2n}\sum_i\langle x_i,\beta\rangle^2\leq\frac{1}{n}\sum_i x_i^2\langle x_i,\beta\rangle^2\leq \frac{4\mu^2}{n}\sum_i\langle x_i,\beta\rangle^2 \,.
\end{equation*}
Then, $\langle x_i,\beta\rangle\sim\cN(\langle\mu,\beta\rangle,\sigma^2\NRM{\beta}_2^2)$. For now, we assume that $\NRM{\beta}_2=1$.
We have $\P(|\langle x_i,\beta\rangle^2-\langle\mu,\beta\rangle^2-\sigma^2\NRM{\beta}_2^2|\geq t)\leq Ce^{-ct/\sigma^2}$, and for $t\leq 1$, using concentration of subexponential random variables \cite{vershynin_2018}:
\begin{equation*}
    \P\left( \left|\frac{1}{n}\sum_i \langle x_i,\beta\rangle^2-\langle\mu,\beta\rangle^2-\sigma^2\NRM{\beta}_2^2\right|\geq t\right)\leq C'e^{-nc't^2/\sigma^4}\,,
\end{equation*}
and using the $\eps$-net trick of \citet{Baraniuk2008},
\begin{equation*}
    \P\left( \sup_{\beta\in\cC}\left|\frac{1}{n}\sum_i \langle x_i,\beta\rangle^2-\langle\mu,\beta\rangle^2-\sigma^2\NRM{\beta}_2^2\right|\geq t\right)\leq C'e^{s\ln(d)-nc't^2/\sigma^4}= \frac{C'}{d^2}\,,
\end{equation*}
for $t=\sigma^2\NRM{\beta}_2^2\sqrt{\frac{2(cs+2)\ln(d)}{n}}$.
Consequently, we have, with probability $1-\frac{C'}{d^2}-\frac{1}{nd}$:
\begin{equation*}
    \frac{\mu^2}{2} \left(\langle\mu,\beta\rangle^2+\frac{1}{2}\sigma^2\NRM{\beta}_2^2 \right)\leq\frac{1}{n}\sum_i x_i^2\langle x_i,\beta\rangle^2\leq4\mu^2\left(\langle\mu,\beta\rangle^2+2\sigma^2\NRM{\beta}_2^2 \right)\,.
\end{equation*}
\end{proof}

    \begin{proof}[Proof of Lemma~\ref{lem:concentration2}]
    First, we write $x_i=\mu\one+\sigma z_i$, where $z_i\sim\cN(0,I)$, leading to:
\begin{align*}
    \frac{1}{n}\sum_{i\in[n]}\NRM{x_i}_2^2 x_ix_i^\top &=\frac{1}{n}\sum_{i\in[n]}\big(\sigma^2\NRM{z_i}_2^2 + d\mu^2 + 2\sigma\mu\langle\one,z_i\rangle\big)  x_ix_i^\top 
\end{align*}
    We use concentration of $\chi_d^2$ random variables around $d$:
    \begin{equation*}
        \P(\chi^2_d>d + 2t+ 2\sqrt{dt})\geq t)\leq e^{-t} \quad\text{and}\quad \P(\chi^2_d>d  - 2\sqrt{dt})\leq t)\leq e^{-t}\,,
    \end{equation*}
    so that for all $i\in[n]$,
    \begin{equation*}
        \P(\NRM{z_i}_2^2\notin[d  - 2\sqrt{dt},d + 2t+ 2\sqrt{dt}] )\leq 2e^{-t}\,.
    \end{equation*}
    Thus,
    \begin{equation*}
        \P(\forall i\in[n], \,\NRM{z_i}_2^2\in[d  - 2\sqrt{dt},d + 2t+ 2\sqrt{dt}] ) \geq 1-2ne^{-t}\,.
    \end{equation*}
    Taking $t=d/16$, 
    \begin{equation*}
        \P(\forall i\in[n], \,\NRM{z_i}_2^2\in[\frac{d}{2},13d/8] ) \geq 1-2ne^{-d/16}\,.
    \end{equation*}
    Then, for all $i$, $\langle \one,z_i\rangle$ is of law $\cN(0,d)$, so that $\P(|\langle \one,z_i\rangle|\geq t)\leq2e^{-t^2/(2d)} $ and
    \begin{equation*}
        \P\big(\forall i\in[n],\,|\langle \one,z_i\rangle|\geq t\big)\leq 2ne^{-\frac{t^2}{2d}}\,.
    \end{equation*}
    Taking $t=\sqrt{2}d^{3/4}$,
    \begin{equation*}
                \P\big(\forall i\in[n],\,|\langle \one,z_i\rangle|\geq d^{3/4}\big)\leq 2ne^{-d^{1/2}}\,.
    \end{equation*}
    Thus, with probability $1-2n(e^{-d/16}+e^{-\sqrt{d}}$, we have $\forall i\in[n],\,|\langle \one,z_i\rangle|\geq d^{3/4}$ and $\NRM{z_i}_2^2\in[\frac{d}{2},13d/8]$, so that
    \begin{equation*}
        \big(\frac{d}{2}\sigma^2 + d\mu^2 -2\mu\sigma d^{3/4}\big) H\preceq \tilde H \preceq \big(\frac{13d}{8}\sigma^2 + d\mu^2 +2\mu\sigma d^{3/4}\big) H\,,
    \end{equation*}
    leading to the desired result.
\end{proof}

\begin{proof}[Proof of Lemma~\ref{lem:concentration3bis}]
We have:
\begin{align*}
    \tilde H_b&= \esp{\frac{1}{b^2}\sum_{i,j\in\cB}\langle x_i,x_j\rangle x_ix_j^\top}\\
    &= \esp{\frac{1}{b^2}\sum_{i\in\cB}\NRM{x_i}_2^2 x_ix_i^\top + \frac{1}{b^2}\sum_{i,j\in\cB,\,i\ne j}\langle x_i,x_j\rangle x_ix_j^\top}\\
    &=\frac{1}{b^2}\sum_{i\in[n]}\P(i\in\cB) \NRM{x_i}_2^2 x_ix_i^\top + \frac{1}{b^2} \sum_{i\ne j} \P(i,j\in\cB) \langle x_i,x_j\rangle x_ix_j^\top\,.
\end{align*}
Then, since $\P(i\in\cB)=\frac{b}{n}$ and $\P(i,j\in\cB)=\frac{b(b-1)}{n(n-1)}$ for $i\ne j$, we get that:
\begin{equation*}
    \tilde H_b=\frac{1}{bn}\sum_{i\in[n]}\NRM{x_i}_2^2 x_ix_i^\top + \frac{(b-1)}{bn(n-1)} \sum_{i\ne j} \langle x_i,x_j\rangle x_ix_j^\top\,.
\end{equation*}
Using Lemma~\ref{lem:concentration2}, the first term satisfies:
\begin{equation*}
    \P \Big( \frac{d(\mu^2+\sigma^2)}{b} C_2 H \preceq  \frac{1}{bn}\sum_{i\in[n]}\NRM{x_i}_2^2 x_ix_i^\top \preceq  \frac{d(\mu^2+\sigma^2)}{b} C_3 H\Big)\geq 1- 2ne^{-d/16}\,.
\end{equation*}
We now show that the second term is of smaller order.
Writing $x_i=\mu\one+\sigma z_i$ where $z_i\sim\cN(0,I_d)$, we have:
\begin{align*}
    \frac{(b-1)}{bn(n-1)} \sum_{i\ne j} \langle x_i,x_j\rangle x_ix_j^\top=\frac{(b-1)}{bn(n-1)} \sum_{i\ne j} \langle x_i,x_j\rangle x_ix_j^\top
\end{align*}

For $i\ne j$, $\langle x_i,x_j\rangle=\sum_{k=1}^d x_{ik}x_{jk}=\sum_{k=1}^da_k$ where $a_k=x_{ik}x_{jk}$ satisfies $\E a_k=0$, $\E a_k^2=1$ and $\P(a_k\geq t)\leq 2\P(|x_{ik}|\geq \sqrt{t})\leq 4 e^{-t/2}$.
Hence, $a_k$ is a centered subexponential random variables. Using concentration of  subexponential random variables \cite{vershynin_2018}, for $t\leq 1$,
\begin{equation*}
    \P\left(\frac{1}{d}| \langle x_i,x_j\rangle | \geq t\right)\leq 2e^{-cdt^2}\,.
\end{equation*}
Thus, 
\begin{equation*}
    \P\left(\forall i\ne j,\,\frac{1}{d}| \langle x_i,x_j\rangle | \leq t\right) \geq 1-n(n-1)e^{-cdt^2}\,.
\end{equation*}
Then, taking $t=d^{-1/2}4\ln(n)/c$, we have:
\begin{equation*}
        \P\left(\forall i\ne j,\,\frac{1}{d}| \langle x_i,x_j\rangle | \leq \frac{4\ln(n)}{c\sqrt{d}}\right) \geq 1-\frac{1}{n^2}\,.
\end{equation*}
Going back to our second term,
\begin{align*}
    \frac{(b-1)}{bn(n-1)} \sum_{i\ne j} \langle x_i,x_j\rangle x_ix_j^\top &= \frac{(b-1)}{bn(n-1)} \sum_{i<j} \langle x_i,x_j\rangle \big(x_ix_j^\top+x_jx_i^\top\big)\\
    &\preceq  \frac{(b-1)}{bn(n-1)} \sum_{i<j} \big|\langle x_i,x_j\rangle\big| \big(x_ix_i^\top+x_jx_j^\top\big)\,,
\end{align*}
where we used $x_ix_j^\top+x_jx_i^\top\preceq x_ix_i^\top+x_jx_j^\top$. Thus,
\begin{align*}
    \frac{(b-1)}{bn(n-1)} \sum_{i\ne j} \langle x_i,x_j\rangle x_ix_j^\top&\preceq \sup_{i\ne j}|\langle x_i,x_j\rangle| \times \frac{(b-1)}{bn(n-1)} \sum_{i<j} \big(x_ix_i^\top+x_jx_j^\top\big)\\
    &=\sup_{i\ne j}|\langle x_i,x_j\rangle| \times \frac{b-1}{b} \frac{1}{n-1} \sum_{i=1}^n x_ix_i^\top\\
    &=\sup_{i\ne j}|\langle x_i,x_j\rangle| \times \frac{b-1}{b} \frac{n}{n-1} H\,.
\end{align*}
Similarly, we have
\begin{equation*}
    \frac{(b-1)}{bn(n-1)} \sum_{i\ne j} \langle x_i,x_j\rangle x_ix_j^\top\succeq -\sup_{i\ne j}|\langle x_i,x_j\rangle| \times \frac{b-1}{b} \frac{n}{n-1} H\,.
\end{equation*}
Hence, with probability $1-1/n^2$,
\begin{equation*}
    - \frac{4\ln(n)}{c\sqrt{d}} \times \frac{b-1}{b} \frac{n}{n-1} H\preceq \frac{(b-1)}{bn(n-1)} \sum_{i\ne j} \langle x_i,x_j\rangle x_ix_j^\top \preceq  \frac{4\ln(n)}{c\sqrt{d}} \times \frac{b-1}{b} \frac{n}{n-1} H\,.
\end{equation*}
Wrapping things up, with probability $1-1/n^2-2ne^{-d/16}$,
\begin{equation*}
        \left(- \frac{4\ln(n)}{c\sqrt{d}}  \frac{b-1}{b} \frac{n}{n-1} + C_2\frac{d}{b}\right)\times H\preceq \tilde H_b \preceq  \left(\frac{4\ln(n)}{c\sqrt{d}}  \frac{b-1}{b} \frac{n}{n-1} + C_3\frac{d}{b}\right)\times H\,.
\end{equation*}
Thus, provided that $\frac{4\ln(n)}{c\sqrt{d}}  \leq \frac{C_2d}{2b}$ and $d\geq 48\ln(n)$, we have with probability $1-3/n^2$:
\begin{equation*}
     C_2'\frac{d}{b}\times H\preceq \tilde H_b \preceq   C_3'\frac{d}{b}\times H\,.
\end{equation*}
\end{proof}

\begin{proof}[Proof of Lemma~\ref{lem:concentration3bis}]
We have:
\begin{align*}
    \tilde H_b&= \esp{\frac{1}{b^2}\sum_{i,j\in\cB}\langle x_i,x_j\rangle x_ix_j^\top}\\
    &= \esp{\frac{1}{b^2}\sum_{i\in\cB}\NRM{x_i}_2^2 x_ix_i^\top + \frac{1}{b^2}\sum_{i,j\in\cB,\,i\ne j}\langle x_i,x_j\rangle x_ix_j^\top}\\
    &=\frac{1}{b^2}\sum_{i\in[n]}\P(i\in\cB) \NRM{x_i}_2^2 x_ix_i^\top + \frac{1}{b^2} \sum_{i\ne j} \P(i,j\in\cB) \langle x_i,x_j\rangle x_ix_j^\top\,.
\end{align*}
Then, since $\P(i\in\cB)=\frac{b}{n}$ and $\P(i,j\in\cB)=\frac{b(b-1)}{n(n-1)}$ for $i\ne j$, we get that:
\begin{equation*}
    \tilde H_b=\frac{1}{bn}\sum_{i\in[n]}\NRM{x_i}_2^2 x_ix_i^\top + \frac{(b-1)}{bn(n-1)} \sum_{i\ne j} \langle x_i,x_j\rangle x_ix_j^\top\,.
\end{equation*}
Using Lemma~\ref{lem:concentration2}, the first term satisfies:
\begin{equation*}
    \P \Big( \frac{d(\mu^2+\sigma^2)}{b} C_2 H \preceq  \frac{1}{bn}\sum_{i\in[n]}\NRM{x_i}_2^2 x_ix_i^\top \preceq  \frac{d(\mu^2+\sigma^2)}{b} C_3 H\Big)\geq 1- 2ne^{-d/16}\,.
\end{equation*}
We now show that the second term is of smaller order.
Writing $x_i=\mu\one+\sigma z_i$ where $z_i\sim\cN(0,I_d)$, we have:
\begin{align*}
    \frac{(b-1)}{bn(n-1)} \sum_{i\ne j} \langle x_i,x_j\rangle x_ix_j^\top&=\frac{(b-1)}{bn(n-1)} \sum_{i\ne j}\big( \sigma^2\langle z_i,z_j\rangle +\sigma\mu\langle\one,z_i+z_j\rangle +\mu^2 d \big)x_ix_j^\top\\
    &= \frac{(b-1)}{bn(n-1)} \sum_{i\ne j}\big( \sigma^2\langle z_i,z_j\rangle +\sigma\mu\langle\one,z_i+z_j\rangle\big)x_ix_j^\top +\frac{(b-1)}{bn(n-1)} \mu^2 d \sum_{i\ne j}x_ix_j^\top
\end{align*}
For $i\ne j$, $\langle z_i,z_j\rangle=\sum_{k=1}^d z_{ik}z_{jk}=\sum_{k=1}^da_k$ where $a_k=z_{ik}z_{jk}$ satisfies $\E a_k=0$, $\E a_k^2=1$ and $\P(a_k\geq t)\leq 2\P(|z_{ik}|\geq \sqrt{t})\leq 4 e^{-t/2}$.
Hence, $a_k$ is a centered subexponential random variables. Using concentration of  subexponential random variables \cite{vershynin_2018}, for $t\leq 1$,
\begin{equation*}
    \P\left(\frac{1}{d}| \langle x_i,x_j\rangle | \geq t\right)\leq 2e^{-cdt^2}\,.
\end{equation*}
Thus, 
\begin{equation*}
    \P\left(\forall i\ne j,\,\frac{1}{d}| \langle x_i,x_j\rangle | \leq t\right) \geq 1-n(n-1)e^{-cdt^2}\,.
\end{equation*}
Then, taking $t=d^{-1/2}4\ln(n)/c$, we have:
\begin{equation*}
        \P\left(\forall i\ne j,\,\frac{1}{d}| \langle x_i,x_j\rangle | \leq \frac{4\ln(n)}{c\sqrt{d}}\right) \geq 1-\frac{1}{n^2}\,.
\end{equation*}
For $i\in[n]$, $\langle \one , z_i\rangle\sim \cN(0,d)$ so that $\P(|\langle \one , z_i\rangle|\geq t)\leq 2e^{-t^2/(2d)}$, and
\begin{equation*}
    \P(\forall i\in[n],\,|\langle \one , z_i\rangle|\leq t)\geq 1-2ne^{-t^2/(2d)}=1-\frac{2}{n^2}\,,
\end{equation*}
for $t=3\sqrt{d}\ln(n)$. Hence, with probability $1-3/n^2$, for all $i\ne j$ we have $|\sigma^2\langle z_i,z_j\rangle +\sigma\mu\langle\one,z_i+z_j\rangle|\leq (\sigma^2+\sigma\mu)C\ln(n)/\sqrt{d}$.

Now,
\begin{align*}
    \frac{(b-1)}{bn(n-1)} \sum_{i\ne j} \big( \sigma^2\langle z_i,z_j\rangle +\sigma\mu\langle\one,z_i+z_j\rangle\big) x_ix_j^\top &=  \frac{(b-1)}{bn(n-1)} \sum_{i< j} \big( \sigma^2\langle z_i,z_j\rangle +\sigma\mu\langle\one,z_i+z_j\rangle\big) (x_ix_j^\top +x_jx_i^\top) \\
    &\preceq  \frac{(b-1)}{bn(n-1)} \sum_{i<j} \big| \sigma^2\langle z_i,z_j\rangle +\sigma\mu\langle\one,z_i+z_j\rangle\big)| \big(x_ix_i^\top+x_jx_j^\top\big)\,,
\end{align*}
where we used $x_ix_j^\top+x_jx_i^\top\preceq x_ix_i^\top+x_jx_j^\top$. Thus,
\begin{align*}
    \frac{(b-1)}{bn(n-1)} \sum_{i\ne j} \big( \sigma^2\langle z_i,z_j\rangle +\sigma\mu\langle\one,z_i+z_j\rangle\big) x_ix_j^\top&\preceq \sup_{i\ne j}\big| \sigma^2\langle z_i,z_j\rangle +\sigma\mu\langle\one,z_i+z_j\rangle\big)| \times \frac{(b-1)}{bn(n-1)} \sum_{i<j} \big(x_ix_i^\top+x_jx_j^\top\big)\\
    &=\sup_{i\ne j}\big| \sigma^2\langle z_i,z_j\rangle +\sigma\mu\langle\one,z_i+z_j\rangle\big| \times \frac{b-1}{b} \frac{1}{n-1} \sum_{i=1}^n x_ix_i^\top\\
    &=\sup_{i\ne j}\big| \sigma^2\langle z_i,z_j\rangle +\sigma\mu\langle\one,z_i+z_j\rangle\big| \times \frac{b-1}{b} \frac{n}{n-1} H\,.
\end{align*}
Similarly, we have
\begin{equation*}
    \frac{(b-1)}{bn(n-1)} \sum_{i\ne j} \big( \sigma^2\langle z_i,z_j\rangle +\sigma\mu\langle\one,z_i+z_j\rangle\big) x_ix_j^\top\succeq -\sup_{i\ne j}\big| \sigma^2\langle z_i,z_j\rangle +\sigma\mu\langle\one,z_i+z_j\rangle\big)| \times \frac{b-1}{b} \frac{n}{n-1} H\,.
\end{equation*}
Hence, with probability $1-3/n^2$,
\begin{align*}
    - \frac{(\sigma^2+\sigma\mu)C\ln(n)}{\sqrt{d}} \times \frac{b-1}{b} \frac{n}{n-1} H\preceq \frac{(b-1)}{bn(n-1)} \sum_{i\ne j} \big( \sigma^2\langle z_i,z_j\rangle +\sigma\mu\langle\one,z_i+z_j\rangle\big) x_ix_j^\top \\
    \preceq  \frac{(\sigma^2+\sigma\mu)C\ln(n)}{\sqrt{d}}  \times \frac{b-1}{b} \frac{n}{n-1} H\,.
\end{align*}
We thus have shown that this term (the one in the middle of the above inequality) is of smaller order.

We are hence left with $\frac{(b-1)}{bn(n-1)} \mu^2 d\sum_{i\ne j}x_i x_j^\top$.
Denoting $\bar x=\frac{1}{n}\sum_i x_i$, we have $\frac{1}{n^2}\sum_{i\ne j}x_ix_j^\top = \frac{1}{n^2}\sum_{i, j}x_ix_j^\top  -\frac{1}{n^2}\sum_{i}x_ix_i^\top $, so that:
\begin{equation*}
    \frac{(b-1)}{bn(n-1)} \mu^2 d\sum_{i\ne j}x_i x_j^\top = \frac{(b-1)n}{b(n-1)}\mu^2d \left( \bar x\bar x^\top - \frac{1}{n}H \right)\,.
\end{equation*}
We note that we have $H= \frac{1}{n}\sum_i  x_i x_i^\top = \frac{1}{n^2}\sum_{i<j}x_ix_i^\top + x_jx_j^\top\succeq \frac{1}{n^2}\sum_{i<j} x_i x_j^\top+x_jx_i^\top = \bar x\bar x^\top $ using $x_ix_i^\top + x_jx_j^\top \succeq x_ix_j^\top + x_jx_i^\top$. Thus, $H\succeq \bar x\bar x^\top\succeq 0$, and:
\begin{equation*}
    -\frac{(b-1)n}{b(n-1)} \mu^2 d \frac{1}{n} H\preceq \frac{(b-1)}{bn(n-1)} \mu^2 d\sum_{i\ne j}x_i x_j^\top \preceq \frac{(b-1)n}{b(n-1)} \mu^2 d (1-1/n) H\,.
\end{equation*}

We are now able to wrap everything together.
With probability $1-2ne^{-d/16}-3/n^2$, we have, for some numerical constants $c_1,c_2,c_3,C>0$:
\begin{equation*}
    \left( c_1\frac{d(\mu^2+\sigma^2)}{b} - c_2\frac{(\sigma^2+\mu^2)\ln(n)}{\sqrt{d}} - c_3\frac{\mu^2 d}{n}\right)H \preceq \tilde H_b \preceq C\left( \frac{d(\mu^2+\sigma^2)}{b} + \frac{(\sigma^2+\mu^2)\ln(n)}{\sqrt{d}} + \mu^2 d\right)
\end{equation*}

\end{proof}

\end{document}